\newcommand{\loss}[0]{\ell}
\newcommand{\ntkargs}[3]{K_{#3}(#1, #2)}
\newcommand{\limitingntk}[0]{K_\infty}
\newtheorem{prop}{Proposition}
\newtheorem*{prop*}{Proposition}
\newtheorem{thm}{Theorem}
\newtheorem{thm*}{Theorem}
\newtheorem{corollary}{Corollary}
\newtheorem{lemma}{Lemma}
\newtheorem*{lemma*}{Lemma}
\DeclareMathOperator*{\plim}{plim}
\newcommand\numberthis{\addtocounter{equation}{1}\tag{\theequation}}
\title{Globally Convergent Variational Inference}
\author{%
  Declan McNamara \ \ \ \ Jackson Loper \ \ \ \ Jeffrey Regier \\
  Department of Statistics\\
  University of Michigan\\
  \texttt{\{declan, jaloper, regier\}@umich.edu} \\
}
\begin{document}

\maketitle

\begin{abstract}
    In variational inference (VI), an approximation of the posterior distribution is selected from a family of distributions through numerical optimization. With the most common variational objective function, known as the evidence lower bound (ELBO), only convergence to a \textit{local} optimum can be guaranteed. In this work, we instead establish the \textit{global} convergence of a particular VI method. This VI method, which may be considered an instance of neural posterior estimation (NPE), minimizes an expectation of the inclusive (forward) KL divergence to fit a variational distribution that is parameterized by a neural network. Our convergence result relies on the neural tangent kernel (NTK) to characterize the gradient dynamics that arise from considering the variational objective in function space. In the asymptotic regime of a fixed, positive-definite neural tangent kernel, we establish conditions under which the variational objective admits a unique solution in a reproducing kernel Hilbert space (RKHS). Then, we show that the gradient descent dynamics in function space converge to this unique function. In ablation studies and practical problems, we demonstrate that our results explain the behavior of NPE in non-asymptotic finite-neuron settings, and show that NPE outperforms ELBO-based optimization, which often converges to shallow local optima.
\end{abstract}

\section{Introduction}
\label{sec:intro}

In variational inference (VI), the parameters $\eta$ of an approximation to the posterior $Q(\Theta; \eta)$ are selected to optimize an objective function, typically the evidence lower bound (ELBO) \citep{Blei2017VIReview}. However, the ELBO is generally nonconvex in $\eta$, even for simple variational families such as the family of Gaussian distributions, and so only convergence to a local optimum of the ELBO can be guaranteed \citep{Fu2015HandbookOfSimulationOptimizationCh7, Ranganath2014Blackbox, Hoffman2013StochasticVI}. As the number of such optima and the degree of suboptimality of each are generally unknown, the lack of global convergence guarantees constitutes a significant complication for practitioners and a longstanding barrier to the broader adoption of VI.

In this work, we present the first global convergence result for variational inference. We accomplish this in the context of an increasingly popular alternative objective for variational inference, the expected forward KL divergence:
\begin{equation}
     \label{equation:inpe_objective_phi}
     L_P(\phi) := \mathbb{E}_{P(X)} \textrm{KL}\left[P(\Theta \mid X) \mid \mid Q(\Theta; f(X; \phi))\right].
    \end{equation}
Here, $P(X)$ denotes a marginal of the model and $P(\Theta \mid X)$ denotes the posterior. 
For each $x \in \mathcal{X}$, the approximation $Q(\Theta; \eta)$ to $P(\Theta \mid X=x)$ is indexed by the distributional parameters $\eta \in \mathcal{Y} \subseteq \mathbb{R}^q$, which are themselves the output of a neural network $f(x; \phi)$ with weights $\phi \in \Phi$. Approximating a posterior distribution by minimizing $L_P$ has a long history (\Cref{subsection:related_work}), and is sometimes known as neural posterior estimation (NPE) \citep{Papamakarios2016SNPE} and the ``sleep'' objective of reweighted wake-sleep \citep{Bornschein2015RWS, Le2019RevisitRWS}. Minimization of this objective is straightforward: computing unbiased gradients requires only sampling $\theta, x \sim P(\Theta, X)$ from the joint model (\Cref{subsection:related_work}), which is readily accomplished by ancestral sampling. This approach is ``likelihood-free'' in that the density of $P(X \mid \Theta)$ need not be evaluated, and therefore expected forward KL minimization is more widely applicable than ELBO-based optimization, which requires a tractable likelihood function. Analysis of the amortized problem (i.e., optimizing an objective that averages over $P(X)$) is beneficial when considering the forward KL; for the non-amortized problem in which a single observation $x$ is considered, only biased estimates of the gradient of the forward KL can be obtained using self-normalized importance sampling, making convergence difficult to establish \citep{Bornschein2015RWS, Le2019RevisitRWS, Owen2013ImportanceSampling}. Our analysis considers a functional form of variational objective $L_P$, given by
\begin{equation}
    \label{equation:inpe_objective}
    L_F(f) := \mathbb{E}_{P(X)} \textrm{KL}\left[P(\Theta \mid X) \mid \mid Q(\Theta; f(X))\right],
\end{equation}
where $L_F: \mathcal{H} \to \mathbb{R}$ is defined over a general reproducing kernel Hilbert space of functions $\mathcal{H}$. We refer to (\ref{equation:inpe_objective_phi}) as the ``parametric objective'', as its argument is the parameters $\phi \in \Phi$, and we refer to (\ref{equation:inpe_objective}) as the ``functional objective'' as its argument is a function $f \in \mathcal{H}$. These objectives are closely related: under a given network parameterization, provided $f(\cdot; \phi) \in \mathcal{H}$, we have $L_P(\phi) = L_F( f(\cdot; \phi))$. The objective $L_P$ has been considered in several related works (see \Cref{subsection:related_work}). The formulation of $L_F$ and the analysis of its minimizer relative to those of $L_P$ is our main contribution. 

We first demonstrate strict convexity of the functional objective $L_F$ when $Q$ is parameterized as an exponential family distribution (\Cref{section:inpe}). This implies the existence of a unique global optimizer $f^*$ of $L_F$ for a large class of variational families. Afterward, we analyze kernel gradient flow dynamics using the neural tangent kernel to show that minimization of $L_P$ results (asymptotically) in an empirical mapping $f$ that is at most $\epsilon$-suboptimal relative to $f^*$, provided a sufficiently flexible neural network is used to parameterize $f$ (\Cref{section:analysis}). Together, these results imply that in the infinite-width limit, optimization of $L_P$ by gradient descent recovers a unique global solution.

Our analysis relies on fairly mild conditions, the most important of which are the positive definiteness of the neural tangent kernel and the structure of the variational family (e.g., an exponential family) (\Cref{section:discussion}). Our proofs further assume a two-layer ReLU network architecture, but we conjecture that this assumption can be relaxed, and our experiments (\Cref{section:experiments}) demonstrate global convergence for a wide variety of architectures. We illustrate that the minimization of $L_P$ converges to a global solution for problems with both synthetic and semi-synthetic data, and that finite network widths exhibit the behavior of the asymptotic regime (i.e., that of an infinitely wide network). 

We further show that optimizing $L_P$ can produce better posterior approximations than likelihood-based ELBO methods, which suffer from convergence to shallow local optima. 
These results suggest, surprisingly, that a likelihood-free approach to inference can outperform likelihood-based approaches. Further, even for practitioners interested in inference for a single observation $x_0$, for whom amortization is not needed for computational efficiency, our approach may still be preferable to traditional ELBO-based inference due to the convergence guarantees of the former.

\textbf{Related work.} \ \ There is a large body of literature that analyzes the convergence of variational inference methods that target the ELBO. These works typically prove rates of convergence to the posterior \citep{Zhang2020ConvergenceRatesVI} or to a local optimum of the objective, as the ELBO is not amenable to global minimization because it is nonconvex \citep{Domke2020ProvableBBVI, Domke2023ProvableBBVI_2, Kim2023BBVIConvergence}. \cite{Liu2023DeblendingCrowdedStarfields} demonstrated nonconvexity of the ELBO in the context of small-object detection, and showed empirically that the expected forward KL was more robust to the pitfalls of numerical optimization. The nonconvexity of the variational objective has previously been addressed through workarounds such as convex relaxations \citep{Fazelnia2018ConvexRelaxedVI} or iterated runs of the optimization routine to improve the quality of the local optimum \citep{Altosaar2018ProximityVI}. Our work differs from previous work in that our convergence result is global. Additionally, our approach is novel compared to related analyses because we consider the (arguably) more complicated problem of amortized inference, where the variational parameters are the weights of a neural network and are shared among observations.

\section{Background}
\label{section:background}

\subsection{The Expected Forward KL Divergence}
\label{subsection:related_work}
The expected forward KL objective is equivalent to the sleep-phase objective of Reweighted Wake-Sleep (RWS) \citep{Bornschein2015RWS}, and to the objective optimized by forward amortized variational inference (FAVI) \citep{Ambrogioni2019FAVI}. It is also a special case of the thermodynamic variational objective (TVO) \citep{Masrani2019Thermodynamic}. Similar objectives have been referred to as neural posterior estimation (NPE) \citep{Papamakarios2016SNPE, Papamakarios2019SequentialNeuralLikelihood}, though in these works the prior distribution, and thus the marginal $P(X)$, mutates during training.

Objectives based on the forward KL divergence generally result in variational posteriors that are overdispersed, a desirable property compared to reverse KL-based optimization \citep{Le2019RevisitRWS, Domke2018IWVI}. 

Unbiased gradient estimation for the parametric objective $L_P$ is straightforward. The outer expectation over $P(X)$ allows for gradients to be computed as 
\begin{align*}
    \nabla_\phi \mathbb{E}_{P(X)} \textrm{KL}\left[P(\Theta \mid X) \mid \mid Q(\Theta; f(X; \phi))\right] &= -\mathbb{E}_{P(\Theta)} \mathbb{E}_{P(X \mid \Theta)} \nabla_\phi \log q(\Theta; f(X; \phi)), 
\end{align*}
where $q$ is the density of $Q$ with respect to Lebesgue measure (see \Cref{appendix:convergence_proof_sgd} for details). Rewriting the left-hand side as an expectation over $P(\Theta)$ and $P(X \mid \Theta)$ by Bayes' rule illustrates that samples can be drawn from this model by ancestral sampling of $\Theta$ followed by $X$. 

Other methods targeting the forward KL, such as the wake-phase of RWS, often optimize over a different expectation, typically $\mathbb{E}_{X \sim \mathcal{D}}\textrm{KL}\left[P(\Theta \mid X) \mid \mid Q(\Theta; f(X; \phi))\right]$. Here, the outer expectation is over an empirical dataset $\mathcal{D}$ rather than $P(X)$. In this case, approximation techniques such as importance sampling are required to estimate the gradient, as sampling from $P(\Theta \mid X=x)$ is intractable for any $x$. Relying on importance sampling results in \textit{biased} gradient estimates, with which stochastic gradient descent (SGD) may not converge \citep{Bornschein2015RWS, Le2019RevisitRWS}.

\subsection{The Neural Tangent Kernel}
\label{subsection:ntk_gradient_flow}

A neural network architecture and the parameter space $\Phi$ of its weights together define a family of functions $\{f(\cdot; \phi) : \phi \in \Phi\}$. Let $\ell(x, f(x))$ denote a general real-valued loss function and consider selecting the parameters $\phi$ to minimize $\mathbb{E}_{P(X)} \ell(X, f(X; \phi))$, where $P(X)$ is a distribution on the data space $\mathcal{X}$. The neural tangent kernel (NTK) \citep{Jacot2018NeuralTangentKernel} analyzes the evolution of the function $f(\cdot; \phi)$ while $\phi$ is fitted by gradient descent to minimize the above objective.  Continuous-time dynamics are used in the formulation; $\phi(t)$ and $f(\cdot; \phi(t))$ are defined for continuous time $t$. The parameters $\phi$ thus follow the ODE
\begin{equation}
\label{equation:gradient_flow}
    \dot{\phi}(t) = -\nabla_\phi \mathbb{E}_{P(X)} \ell(X, f(X; \phi(t))).
\end{equation}

Here, $\dot{\phi}$ denotes the derivative with respect to $t$, and by the chain rule, the function values $f(x;\phi(t))$ evolve via
\begin{equation*}
\label{equation:dynamics_fx}
    \begin{split}
    \dot{f}(x; \phi(t)) &=  -\mathbb{E}_{P(X)}\underbrace{J_\phi f(x;\phi(t)) J_\phi f(X;\phi(t))^\top}_{\textrm{NTK}} \loss'(X, f(X; \phi(t))).
    \end{split}
\end{equation*}
We define $\ell'(X, f(X)) := \nabla_f \ell(X, f(X))$ to simplify the notation. The product of Jacobians above is known as the \textit{neural tangent kernel} (NTK):
\begin{equation}
    \label{equation:ntk_definition}
    \ntkargs{x}{x'}{\phi} = J_\phi f(x ;\phi)J_\phi f(x' ;\phi)^\top.
\end{equation}
The seminal work of \cite{Jacot2018NeuralTangentKernel} defined and studied this kernel and established the convergence of $K_\phi$ to a limiting kernel for certain neural network architectures as the layer width grows large.

\subsection{Vector-Valued Reproducing Kernel Hilbert Spaces}
\label{subsection:vv_rkhs}
Most existing NTK-based analyses consider neural networks with scalar outputs and squared error loss. Instead, we consider neural networks with multivariate outputs to accommodate the multidimensional distributional parameter $\eta$, which parameterizes our variational distribution $Q(\Theta; \eta)$. Furthermore, we consider the objective functions $L_P$ and $L_F$. Consequently, we rely on results from the vector-valued reproducing kernel Hilbert space (RKHS) literature, as these spaces contain vector-valued functions such as the network function $f(\cdot; \phi)$. \cite{Carmeli2008RKHSEasier, Carmeli2006RKHSAdvanced} provide a detailed review, and in the following, we summarize the key properties. 

Recall that $\eta = f(\cdot; \phi)$ and $\eta \in \mathbb{R}^q$. An $\mathbb{R}^q$-valued kernel on $\mathcal{X}$ is a map $K: \mathcal{X} \times \mathcal{X} \to \mathbb{R}^{q \times q}$. The neural tangent kernel (\ref{equation:ntk_definition}) is precisely such a kernel. If the $\mathbb{R}^q$-valued kernel $K$ is positive definite, then $K$ defines a unique Hilbert space of functions $\mathcal{H}$ whose elements are maps from $\mathcal{X}$ to $\mathbb{R}^q$ \citep{Carmeli2006RKHSAdvanced} . This kernel $K$ is called the reproducing kernel, and the corresponding space of functions is the RKHS associated with the kernel $K$. 

In an $\mathbb{R}^q$-valued RKHS, the reproducing property takes on a more general form. For any $x \in \mathcal{X}$, $f \in \mathcal{H}$, and $v \in \mathbb{R}^q$,
\begin{equation*}
    f(x)^\top v = \langle f(x), v \rangle_{\mathbb{R}^q} = \langle f(\cdot), K(\cdot, x)v \rangle_{\mathcal{H}}, 
\end{equation*}
where $\langle \cdot, \cdot \rangle_{\mathcal{H}}$ is the inner product of the Hilbert space $\mathcal{H}$. For any fixed $x \in \mathcal{X}$ and $v \in \mathbb{R}^q$, $K(\cdot, x)v$ is a function mapping from $\mathcal{X} \mapsto \mathbb{R}^q$, as is required to be an element of $\mathcal{H}$.

\section{Convexity of the Functional Objective}
\label{section:inpe}
We now turn to the analysis of the functional objective $L_F$ given in Equation (\ref{equation:inpe_objective}). We fix an RKHS $\mathcal{H}$ over which to minimize $L_F$ for now, specializing to the particular choice of $\mathcal{H}$ based on the neural tangent kernel subsequently. Let $\ell(x, f(x)) = \mathrm{KL}\left[ P(\Theta \mid X=x) \mid \mid Q(\Theta; f(x)) \right]$. The functional $L_F$ then has the form \mbox{$L_F = \mathbb{E}_{P(X)} \ell(X, f(X))$}; we will use this form subsequently for our neural tangent kernel analysis. Our first result shows that targeting $L_F$ is highly desirable theoretically: $L_F$ admits a unique global minimizer if the variational family $Q$ is an exponential family, as is common practice in VI.

\begin{lemma}
\label{convex_thm}
Suppose that $Q(\Theta; \eta)$ is an exponential family distribution in minimal representation with natural parameters $\eta$, sufficient statistics $T(\theta)$, and density $q(\theta;\eta)$ with respect to Lebesgue measure $\lambda(\Theta)$. Then, for any observation $x \in \mathcal{X}\subseteq \mathbb{R}^d$, the loss function
\begin{equation*}
    \loss(x, \eta) = \mathrm{KL}\left[P(\Theta \mid X=x) \mid \mid Q(\Theta; \eta)\right]
\end{equation*}
is strictly convex in $\eta$,  provided that $P(\Theta \mid X=x) \ll Q(\Theta; \eta) \ll \lambda(\Theta)$ for all $\eta \in \mathcal{Y} \subseteq \mathbb{R}^q$.
\end{lemma}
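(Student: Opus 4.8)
The plan is to compute $\loss(x,\eta)$ explicitly as a function of the natural parameter $\eta$ and recognize it, up to an additive constant and an affine term, as the log-partition function of the exponential family, whose strict convexity in a minimal representation is classical. Write the density as $q(\theta;\eta) = h(\theta)\exp\bigl(\eta^\top T(\theta) - A(\eta)\bigr)$, with log-partition function $A$, and let $p(\theta) := p(\theta \mid x)$ denote the posterior density. Then
\begin{equation*}
\loss(x,\eta) = \int p(\theta)\log p(\theta)\,d\theta - \int p(\theta)\bigl(\log h(\theta) + \eta^\top T(\theta) - A(\eta)\bigr)\,d\theta = c(x) - \eta^\top \mathbb{E}_{P(\Theta \mid X=x)}\bigl[T(\Theta)\bigr] + A(\eta),
\end{equation*}
where $c(x)$ collects the terms independent of $\eta$ (the negative differential entropy of the posterior and $-\mathbb{E}_{P(\Theta\mid X=x)}[\log h(\Theta)]$). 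The absolute-continuity hypotheses $P(\Theta\mid X=x)\ll Q(\Theta;\eta)\ll\lambda(\Theta)$ ensure the KL divergence is finite and that each integral above is well-defined, in particular that $\mathbb{E}_{P(\Theta\mid X=x)}[T(\Theta)]$ is finite; I also take $\mathcal{Y}$ to lie in the interior of the natural parameter space so that $A$ is finite and smooth there.

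The constant $c(x)$ and the term $-\eta^\top\mathbb{E}_{P(\Theta\mid X=x)}[T(\Theta)]$ are affine in $\eta$, so they do not affect convexity, and it suffices to show that $A$ is strictly convex on $\mathcal{Y}$. Differentiating twice under the integral sign (valid on the interior of the natural parameter domain) gives the classical identity $\nabla^2_\eta A(\eta) = \mathrm{Cov}_{Q(\Theta;\eta)}\bigl[T(\Theta)\bigr]$. For any $v \in \mathbb{R}^q$ with $v \neq 0$ we have $v^\top \nabla^2_\eta A(\eta)\, v = \mathrm{Var}_{Q(\Theta;\eta)}\bigl[v^\top T(\Theta)\bigr] \geq 0$, and this variance is in fact strictly positive: if it vanished, then $v^\top T(\Theta)$ would be constant $Q(\Theta;\eta)$-almost surely, hence (since $Q(\Theta;\eta)$ and $\lambda$ are mutually absolutely continuous on $\{h>0\}$) $\lambda$-almost surely, contradicting minimality of the representation, i.e.\ the affine independence of the components of $T$. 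Thus $\nabla^2_\eta A(\eta)\succ 0$ for every $\eta\in\mathcal{Y}$, so $A$ — and therefore $\loss(x,\cdot)$ — is strictly convex.

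The conceptual content is the one-line observation that the KL divergence to an exponential family is, in the natural parameter, the log-partition function minus an affine functional; the real work is measure-theoretic bookkeeping — ensuring finiteness of the entropy and moment terms so that the decomposition in the first step is legitimate, and justifying differentiation under the integral sign to obtain the covariance formula, which is where the interior-of-the-domain requirement enters. I anticipate this regularity accounting, rather than the convexity argument itself, to be the main obstacle; an exponential family not presented in minimal form would require a preliminary reduction, but that case is excluded by hypothesis.
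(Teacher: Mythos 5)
Your proof is correct and follows essentially the same route as the paper's: both reduce the KL divergence to the negative expected log-density of $q$ plus an $\eta$-independent constant, and both derive strict convexity from the log-partition function $A$ having Hessian $\mathrm{Cov}_{Q(\Theta;\eta)}[T(\Theta)] \succ 0$ under minimality (the paper cites Wainwright--Jordan for this step where you prove it). If anything, your explicit decomposition $\ell(x,\eta) = c(x) - \eta^\top\mathbb{E}[T(\Theta)] + A(\eta)$ handles the strictness more cleanly than the paper's argument, which integrates a pointwise strict convexity inequality over $P(\Theta\mid X=x)$ and appeals somewhat loosely to continuity to preserve strictness.
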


A proof of Lemma~\ref{convex_thm}, which follows quickly from the convexity of the log partition function in the natural parameter \citep[Proposition 3.1]{Wainwright2008VariationalInference}, is provided in \Cref{appendix:convexity_fo}. Lemma~\ref{convex_thm} shows the strict convexity of the function $\ell$ in $\eta$. This implies the strict convexity of the functional $L_F(f)= \mathbb{E}_{P(X)} \ell(X, f(X))$ in $f$ by the linearity of expectation, which in turn implies the existence of at most one global minimizer. 

\begin{corollary}
\label{convex_corollary}
    Suppose that $Q(\Theta; \eta)$ is an exponential family distribution. Then, under the conditions of Lemma~\ref{convex_thm}, the functional objective    
    \begin{equation*}
        L_F(f) := \mathbb{E}_{P(X)} \mathrm{KL}\left[P(\Theta \mid X) \mid \mid Q(\Theta; f(X))\right]
    \end{equation*}
    is strictly convex in $f$. Consequently, the set of global minimizers of $L_F$ is either a singleton set or empty. 
\end{corollary}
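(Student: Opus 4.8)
The plan is to lift the pointwise strict convexity established in Lemma~\ref{convex_thm} to the functional $L_F$ by integrating against $P(X)$, using the linearity and monotonicity of expectation. Fix two functions $f_0, f_1 \in \mathcal{H}$ with $f_0 \neq f_1$, a scalar $t \in (0,1)$, and set $f_t := (1-t) f_0 + t f_1$, which is again an element of $\mathcal{H}$ since $\mathcal{H}$ is a vector space. The goal is to show $L_F(f_t) < (1-t) L_F(f_0) + t\, L_F(f_1)$; non-strict convexity alone would only yield that the minimizer set is convex, so the strict version is what gives the singleton-or-empty conclusion.

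First I would check that the pointwise loss is well-defined along the segment. For a minimal exponential family $Q$, the natural parameter space $\mathcal{Y}$ is convex (it is the set on which the log-partition function is finite), so $f_0(x), f_1(x) \in \mathcal{Y}$ implies $f_t(x) = (1-t) f_0(x) + t f_1(x) \in \mathcal{Y}$, and the absolute-continuity hypotheses of Lemma~\ref{convex_thm} continue to hold at $f_t(x)$. Lemma~\ref{convex_thm} then gives, for every $x \in \mathcal{X}$,
\[ \loss(x, f_t(x)) \le (1-t)\,\loss(x, f_0(x)) + t\,\loss(x, f_1(x)), \]
with equality if and only if $f_0(x) = f_1(x)$. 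Taking $\mathbb{E}_{P(X)}$ of both sides and using $L_F(f) = \mathbb{E}_{P(X)} \loss(X, f(X))$ yields the non-strict convexity inequality immediately.

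The main obstacle is upgrading this to a \emph{strict} inequality, i.e., ruling out the case that equality holds $P$-almost surely. By the equality condition, $P$-a.s.\ equality forces $f_0(X) = f_1(X)$ $P$-almost surely, so it suffices to show that $f_0 \neq f_1$ in $\mathcal{H}$ implies $P(\{x : f_0(x) \neq f_1(x)\}) > 0$. This is the one place where a mild extra hypothesis enters: if the reproducing kernel is continuous (hence every $f \in \mathcal{H}$ is continuous) and $P(X)$ has full support on $\mathcal{X}$, then $f_0 - f_1$ is continuous and nonzero at some point, hence nonzero on an open set of positive $P$-measure; alternatively one restricts attention to functions modulo $P$-null sets, on which $L_F$ is genuinely strictly convex. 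Granting this, the expectation of a pointwise inequality that is strict on a positive-measure set is strict, giving $L_F(f_t) < (1-t) L_F(f_0) + t\, L_F(f_1)$. Strict convexity then forces the minimizer set to be a singleton or empty: if $f_0 \neq f_1$ were both global minimizers with common value $m$, then $L_F(f_{1/2}) < m$, contradicting minimality. A minor bookkeeping point is the possibility that $L_F = +\infty$ somewhere; this is handled by running the argument on the effective domain $\{f : L_F(f) < \infty\}$, which is itself convex by the inequality just derived.
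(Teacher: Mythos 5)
Your proposal is correct and follows essentially the same route as the paper: pointwise strict convexity of $\ell(x,\cdot)$ from Lemma~\ref{convex_thm} is lifted to $L_F$ by linearity and monotonicity of expectation, with the singleton-or-empty conclusion following from strict convexity in the standard way. The almost-sure equality subtlety you flag is exactly the one the paper addresses by declaring uniqueness to mean uniqueness $P(X)$-almost everywhere and working with equivalence classes of functions, which matches your second proposed resolution.
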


We will assume that the set of global minimizers is nonempty (so that the minimization of $L_F$ is well-posed) and let $f^*$ denote the global minimizer. We also assume that $||f^*||_{\mathcal{H}} < \infty$ so that $f^* \in \mathcal{H}$. Hereafter, we use the term ``unique'' to mean unique almost everywhere with respect to $P(X)$. Furthermore, in a slight abuse of notation, $f^*$ will denote the unique equivalence class of functions that minimize $L_F(f)$. 

Whereas Lemma~\ref{convex_thm} establishes the convexity of the (non-amortized) forward KL divergence, Corollary \ref{convex_corollary} establishes the convexity of $L_F$, an amortized objective, in function space. The convexity of the functional objective $L_F$ holds regardless of the distribution chosen for the outer expectation by the same linearity argument. Choices other than $P(X)$, however, may not permit unbiased gradient estimation, as is the case for the wake-phase updates of RWS (\Cref{subsection:related_work}).

\section{Global Optima of the Parametric Objective}
\label{section:analysis}

In practice, we must directly minimize $L_P$ rather than $L_F$, as optimizing the latter over the infinite-dimensional space $\mathcal{H}$ directly is not tractable. Thus, in the second phase of our analysis, we consider convergence to $f^*$ by minimizing the parametric objective $L_P$ with gradient descent. We define $\phi$ across continuous time as in \Cref{equation:gradient_flow}.  Continuous-time dynamics simplify theoretical analysis; SGD with unbiased gradients follows a (noisy) Euler discretization of the continuous ODE \citep{Santambrogio2017, Yang2020GradientFlowSGD}. Considering $X \sim P(X)$ for the outer expectation in both $L_P$ and $L_F$ is key in this context: this choice enables unbiased stochastic gradient estimation for $L_P$ (see \Cref{appendix:convergence_proof_sgd}), whereas other choices require approximations that result in biased gradient estimates (see \Cref{subsection:related_work}) and thus follow different gradient dynamics.

Analysis of the trajectories of the parametric objective $L_P$ throughout its minimization initially seems infeasible: the argument of this objective is the neural network parameters $\phi$, and even well-behaved loss functions such as the mean squared error (MSE) are nonconvex in these parameters. Nevertheless, neural tangent kernel (NTK)-based results enable analysis of $L_P$. We bridge the divide between the minimizers of the convex functional $L_F$ and the nonconvex objective $L_P$ using the limiting kernel, and show that in the large-width limit, the optimization path of $L_P$ converges arbitrarily close to $f^*$, the unique minimizer of the functional objective $L_F$. 

\begin{thm}
\label{thm:main_theorem}
Consider the width-$p$ scaled 2-layer ReLU network, evolving via the flow
\begin{equation}
    \dot{f}_t(x) = -\mathbb{E}_{P(X)} K_{\phi(t)}^p(x, X)\ell'(X, f_t(X)), 
\end{equation}
where $f_t$ denotes $f(\cdot, \phi(t))$. Let $f^*$ denote the unique minimizer of $L = L_F$ from \Cref{convex_thm}, and fix $\epsilon > 0$. Then, under conditions (C1)--(C4), (D1)--(D4), and (E1)--(E5), there exists $T > 0$ such that almost surely
\begin{equation}
    \left[\lim_{p \to \infty} L(f_T)\right] \leq L(f^*) + \epsilon.
\end{equation}
\end{thm}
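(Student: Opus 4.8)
The plan is to decouple the statement into two essentially independent pieces. (a) Study the \emph{limiting} flow $\bar f_t$ obtained by replacing the time-varying finite-width kernel $K^p_{\phi(t)}$ with the deterministic limiting NTK $K_\infty$; this is a genuine gradient flow of the convex functional $L_F$ in the RKHS $\mathcal H = \mathcal H_{K_\infty}$, so it converges in value to $f^*$, which produces a \emph{deterministic} time $T$ with $L(\bar f_T)\le L(f^*)+\epsilon/2$. (b) Run the usual NTK finite-horizon argument: for that fixed $T$, show $\sup_{t\le T}\sup_{x\in\mathcal X}\|f_t(x)-\bar f_t(x)\|\to 0$ almost surely as $p\to\infty$, and that $L$ is Lipschitz enough in its argument to pass this through, giving $\lim_{p\to\infty}L(f_T)=L(\bar f_T)\le L(f^*)+\epsilon$ almost surely.

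For (a), note that $\dot{\bar f}_t = -\mathbb{E}_{P(X)}K_\infty(\cdot,X)\,\ell'(X,\bar f_t(X))$ and that, by the vector-valued reproducing property of \Cref{subsection:vv_rkhs}, the right-hand side is exactly $-\nabla_{\mathcal H}L_F(\bar f_t)$. Existence and uniqueness of $\bar f_t$ on $[0,\infty)$ follows from Picard--Lindel\"of in $C(\mathcal X;\mathbb{R}^q)$ once $\ell'$ is Lipschitz on the relevant region and $K_\infty$ is bounded (conditions (C)--(E)). Along this gradient flow $t\mapsto L_F(\bar f_t)$ is nonincreasing, and the standard Lyapunov computation $\tfrac{d}{dt}\tfrac12\|\bar f_t-f^*\|_{\mathcal H}^2 = -\langle\nabla_{\mathcal H}L_F(\bar f_t),\bar f_t-f^*\rangle_{\mathcal H}\le -(L_F(\bar f_t)-L_F(f^*))$, where the inequality is convexity of $L_F$ (\Cref{convex_corollary}) together with $f^*\in\mathcal H$, integrates to $L_F(\bar f_t)-L_F(f^*)\le \|\bar f_0-f^*\|_{\mathcal H}^2/(2t)\to 0$. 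Fix $T$ so that $L_F(\bar f_T)-L_F(f^*)\le\epsilon/2$; this uses positive-definiteness of $K_\infty$ (to guarantee $\mathcal H$ is rich enough that $f^*$ is reachable) and $\|f^*\|_{\mathcal H}<\infty$.

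For (b), I would proceed in three sub-steps, tracking the matrix-valued kernel and the population expectation over $P(X)$ throughout. First, concentration of the empirical NTK over the $p$ hidden units — a law of large numbers exploiting the width-$p$ scaling, compactness of $\mathcal X$, and regularity of the ReLU feature map (conditions (C)) — gives $\sup_{x,x'\in\mathcal X}\|K^p_{\phi(0)}(x,x')-K_\infty(x,x')\|\to 0$ almost surely. Second, kernel stability: on $[0,T]$ the parameters drift only $O(p^{-1/2})$ under the flow because the per-step functional gradients are controlled, so $\sup_{t\le T}\sup_{x,x'}\|K^p_{\phi(t)}(x,x')-K^p_{\phi(0)}(x,x')\|\to 0$ almost surely. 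Third, a Gr\"onwall comparison: writing $\delta_t(x)=f_t(x)-\bar f_t(x)$ and subtracting the two ODEs, the increment splits into a kernel-mismatch term bounded by (kernel gap)$\times\sup\|\ell'\|$ and a term bounded by $\sup\|K_\infty\|\cdot L_{\ell'}\cdot\sup_X\|\delta_t(X)\|$, so $\tfrac{d}{dt}\sup_x\|\delta_t(x)\|\le(\text{small})+C\sup_x\|\delta_t(x)\|$ and Gr\"onwall gives $\sup_{t\le T}\sup_x\|\delta_t(x)\|\le(\text{small})\,e^{CT}\to 0$ almost surely. Finally, $|L(f_T)-L(\bar f_T)|\le L_\ell\,\mathbb{E}_{P(X)}\|f_T(X)-\bar f_T(X)\|\to 0$ by Lipschitzness of $\ell$ in its second argument on the visited region (conditions (D)--(E)), which yields $\lim_{p\to\infty}L(f_T)=L(\bar f_T)\le L(f^*)+\epsilon/2<L(f^*)+\epsilon$.

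The main obstacle is the second and third sub-steps of (b). Unlike the MSE loss of the classical NTK analysis, here $\ell'(x,\eta)=\mathbb{E}_{Q(\Theta;\eta)}[T(\Theta)]-\mathbb{E}_{P(\Theta\mid X=x)}[T(\Theta)]$ is a difference of mean parameters and is not globally bounded: it can diverge as $\|\eta\|\to\infty$. Bounding the parameter drift (hence kernel stability) and closing the Gr\"onwall loop therefore requires an a priori confinement argument — showing the flow keeps $f_t(x)$ inside a compact subset of the natural-parameter domain $\mathcal Y$ on $[0,T]$, e.g.\ by exploiting that $L_F$ is nonincreasing along the finite-width flow together with a (sub)coercivity property of $\ell$ in $\eta$, or by invoking this directly through conditions (E). A secondary burden, absent in finite-training-set NTK proofs, is that all kernel-convergence and Gr\"onwall estimates must be made uniform over the (possibly uncountable) support of $P(X)$; this is where compactness of $\mathcal X$, the vector-valued RKHS machinery of \Cref{subsection:vv_rkhs}, and equicontinuity of the NTK feature maps enter.
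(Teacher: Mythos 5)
Your proposal is correct and follows essentially the same route as the paper: the Lyapunov/convexity argument for the $K_\infty$ flow (\Cref{lemma:f*_limiting_kgf}) fixes $T$, the uniform kernel concentration at initialization and the lazy-training stability over $[0,T]$ are exactly \Cref{lem:convergence_of_kernel_di} and \Cref{lem:convergence_of_kernel_lt}, and the trajectory comparison is closed by the same Gronwall estimate (the paper merely routes it through an intermediate flow driven by $K^p_{\phi(0)}$ and measures distances in $L^2(P(X))$ rather than in sup norm). The confinement issue you flag for $\ell'$ is resolved in the paper exactly as you anticipate: uniform boundedness of $\ell'$ along the trajectories is assumed outright in conditions (D3) and (E4).
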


Regularity conditions (C1)--(C4), (D1)--(D4), and (E1)--(E5) are provided in Appendices \ref{appendix:limiting_ntk}, \ref{appendix:lazy_training}, and \ref{appendix:kgf_analysis}, respectively. We consider a scaled two-layer ReLU network architecture (further detailed in \Cref{appendix:limiting_ntk}) and use this simple architecture to prove the results as the network width $p$ tends to infinity. Our results may also be extended to multilayer perceptrons with other activation functions. Below, we briefly sketch the key ingredients needed to prove \Cref{thm:main_theorem}.

Recall the NTK $K^p_\phi$ from \Cref{equation:ntk_definition}, where we now let $p$ denote the network width. For certain neural network architectures, \cite{Jacot2018NeuralTangentKernel} show that as the network width $p$ tends to infinity, the neural tangent kernel becomes stable and tends (pointwise) towards a fixed, positive-definite limiting neural tangent kernel $K_\infty$.

Under suitable positivity conditions on the limiting kernel, we take the domain $\mathcal{H}$ of $L_F$ to be the RKHS with kernel $K_\infty$ (\Cref{subsection:vv_rkhs}). Because $L_F$ has a unique minimizer $f^*$, under mild conditions on $\limitingntk$, $f^*$ may be characterized as the solution obtained by following kernel gradient flow dynamics in $\mathcal{H}$, that is, the ODE given by
\begin{equation*}
    \dot{f}_t(x) = -\mathbb{E}_{P(X)} K_{\infty}(x, X) \ell'(X, f_t(X)).
\end{equation*}
In other words, starting from some function $f_0$, following the \textit{limiting} NTK gradient flow dynamics above minimizes the functional objective $L_F$ for sufficiently large $T$. 

\begin{lemma}
\label{lemma:f*_limiting_kgf}
     Let $f^*$ denote the minimizer of $L_F$ from \Cref{convex_thm}, and $\epsilon > 0$. Fix $f_0$, and let $K_\infty$ denote the limiting neural tangent kernel. Let $f_0$ evolve according to the dynamics
    \begin{equation*}
         \dot{f}_t(x) = -\mathbb{E}_{P(X)} K_{\infty}(x, X) \ell'(X, f_t(X)).
    \end{equation*}
    Suppose that the conditions of \Cref{convex_thm} and (E1)-(E3) hold. Then, there exists $T > 0$ such that $L(f_T) \leq L(f^*) + \epsilon$, where $L$ is the loss functional of $L_F$.  
\end{lemma}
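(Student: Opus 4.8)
The plan is to establish convergence of the limiting kernel gradient flow to the global minimizer $f^*$ by exploiting the strict convexity of $L_F$ (Corollary~\ref{convex_corollary}) together with positive-definiteness of $K_\infty$, which makes the flow a genuine gradient flow in the RKHS $\mathcal{H}$. First I would verify that the displayed ODE is exactly the negative $\mathcal{H}$-gradient flow of $L_F$: using the reproducing property of the vector-valued RKHS, the Fréchet derivative of $L_F(f) = \mathbb{E}_{P(X)} \ell(X, f(X))$ in $\mathcal{H}$ is the function $x \mapsto \mathbb{E}_{P(X)} K_\infty(x, X)\ell'(X, f(X))$, so that $\dot f_t = -\nabla_{\mathcal{H}} L_F(f_t)$. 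This identification is where conditions (E1)--(E3) enter — they should guarantee that $\ell'(X, f(X))$ is well-defined and that the map $f \mapsto \nabla_{\mathcal{H}} L_F(f)$ is (locally) Lipschitz, so that the ODE has a unique solution for all $t \ge 0$ starting from $f_0$, and that $L_F$ is, say, smooth enough along the trajectory.

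Next I would compute the dissipation of the objective along the flow:
\begin{equation*}
    \frac{d}{dt} L_F(f_t) = \langle \nabla_{\mathcal{H}} L_F(f_t), \dot f_t \rangle_{\mathcal{H}} = -\|\nabla_{\mathcal{H}} L_F(f_t)\|_{\mathcal{H}}^2 \le 0,
\end{equation*}
so $L_F(f_t)$ is nonincreasing and bounded below by $L_F(f^*) > -\infty$, hence converges to some limit $L_\infty \ge L_F(f^*)$. Integrating gives $\int_0^\infty \|\nabla_{\mathcal{H}} L_F(f_t)\|_{\mathcal{H}}^2\, dt < \infty$, so along a subsequence $t_k \to \infty$ the gradient norm vanishes. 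The remaining task is to upgrade ``a stationary point in the limit'' to ``the value approaches $L_F(f^*)$.'' For this I would use strict convexity: by convexity, $L_F(f_t) - L_F(f^*) \le \langle \nabla_{\mathcal{H}} L_F(f_t), f_t - f^* \rangle_{\mathcal{H}} \le \|\nabla_{\mathcal{H}} L_F(f_t)\|_{\mathcal{H}}\, \|f_t - f^*\|_{\mathcal{H}}$, so it suffices to control $\|f_t - f^*\|_{\mathcal{H}}$ uniformly in $t$. That boundedness I would get from the fact that the $\mathcal{H}$-norm distance to $f^*$ is itself nonincreasing along the flow: $\frac{d}{dt}\tfrac12\|f_t - f^*\|_{\mathcal{H}}^2 = -\langle \nabla_{\mathcal{H}} L_F(f_t), f_t - f^*\rangle_{\mathcal{H}} \le -(L_F(f_t) - L_F(f^*)) \le 0$ by convexity. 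Hence $\|f_t - f^*\|_{\mathcal{H}} \le \|f_0 - f^*\|_{\mathcal{H}} =: R < \infty$ for all $t$, and combining, $L_F(f_t) - L_F(f^*) \le R\,\|\nabla_{\mathcal{H}} L_F(f_t)\|_{\mathcal{H}}$. Since the right side has a subsequence tending to $0$ and $L_F(f_t)$ is monotone, the full limit satisfies $L_\infty = L_F(f^*)$; picking $T$ large enough that $L_F(f_T) \le L_F(f^*) + \epsilon$ finishes the proof.

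The main obstacle I anticipate is not the convex-analysis bookkeeping above but the functional-analytic regularity: ensuring the flow is globally well-posed in $\mathcal{H}$ and that $L_F$ is Fréchet-differentiable with the claimed gradient, since $\ell(x, \cdot)$ is a KL divergence to an exponential family and its derivative $\ell'$ involves the mean map $\eta \mapsto \mathbb{E}_{Q(\Theta;\eta)}[T(\Theta)] - \mathbb{E}_{P(\Theta|X=x)}[T(\Theta)]$, whose Lipschitz behavior in $\eta$ (and hence in $f$ via the boundedness of point evaluations in $\mathcal{H}$) must be controlled — this is presumably the content of (E1)--(E3). A secondary subtlety is that strict convexity only gives uniqueness of the minimizer $P(X)$-almost everywhere, so the inequality $L_F(f_t) - L_F(f^*) \le \langle \nabla_{\mathcal{H}} L_F(f_t), f_t - f^*\rangle_{\mathcal{H}}$ should be interpreted with $f^*$ as the chosen representative in $\mathcal{H}$ with $\|f^*\|_{\mathcal{H}} < \infty$, which the excerpt already assumes; one should double-check that the gradient inner product only depends on $f_t - f^*$ through its $P(X)$-a.e. equivalence class, which holds because $\nabla_{\mathcal{H}} L_F$ is an expectation against $P(X)$.
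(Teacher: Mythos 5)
Your proposal is correct, and it rests on exactly the two inequalities the paper uses: the dissipation identity $\dot L(f_t) = -\|\nabla_{\mathcal H}L(f_t)\|_{\mathcal H}^2 \le 0$ (the paper writes this as $-\mathbb{E}_{X,X'}\ell'(X,f_t(X))^\top K_\infty(X,X')\ell'(X',f_t(X'))$, which is the same quantity via the reproducing property), and the convexity bound $-\tfrac{d}{dt}\tfrac12\|f_t-f^*\|_{\mathcal H}^2 \ge L(f_t)-L(f^*)$. Where you diverge is the endgame: you integrate the dissipation to get $\int_0^\infty\|\nabla_{\mathcal H}L(f_t)\|^2\,dt<\infty$, extract a subsequence along which the gradient vanishes, and close with Cauchy--Schwarz plus the uniform bound $\|f_t-f^*\|_{\mathcal H}\le\|f_0-f^*\|_{\mathcal H}$ and monotonicity of $L(f_t)$. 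The paper instead packages the same two inequalities into the Lyapunov functional $\mathcal E(t)=t\,[L(f_t)-L(f^*)]+\tfrac12\|f_t-f^*\|_{\mathcal H}^2$, whose monotonicity gives the explicit rate $L(f_t)-L(f^*)\le \tfrac{1}{2t}\|f_0-f^*\|_{\mathcal H}^2$. Both arguments prove the lemma as stated; the paper's buys a quantitative $O(1/t)$ rate (useful since $T$ must later be fixed before taking $p\to\infty$ in \Cref{thm:main_theorem}), while yours only establishes that the limit of the values equals $L(f^*)$. Your flagged concerns about global well-posedness of the flow and Fr\'echet differentiability of $L_F$ are legitimate, but the paper's proof takes these for granted to the same degree, so they do not distinguish the two arguments.
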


\Cref{appendix:kgf_analysis} enumerates regularity conditions (E1)--(E3) and provides a proof of Lemma~\ref{lemma:f*_limiting_kgf}. The characterization of $f^*$ in \Cref{lemma:f*_limiting_kgf} clarifies how the analysis of the parametric objective $L_P$ will proceed. The gradient flow $L_P$ causes the network function to similarly evolve according to a kernel gradient flow via the empirical neural tangent kernel, that is,
\begin{equation*}
    \dot{f}(x; \phi(t)) = -\mathbb{E}_{P(X)} K^p_{\phi(t)}(x, X) \ell'(X, f(X; \phi(t))),
\end{equation*}
as derived in \Cref{subsection:ntk_gradient_flow}. Comparison of the minimizers of $L_P$ and $L_F$ can be accomplished by comparing the two gradient flows above, i.e. kernel gradient flow dynamics that follow $K^p_{\phi(t)}$ and $K_\infty$, respectively. As $K^p_{\phi(t)} \to K_\infty$ (\Cref{appendix:lazy_training}), these trajectories should not differ greatly: for any fixed $T$, the functions obtained by following the kernel gradient dynamics with $K^p_{\phi(t)}$ and $K_\infty$ can be made arbitrarily close to one another, provided $p$ is sufficiently large. The proof of \Cref{thm:main_theorem} first selects a $T$ using \Cref{lemma:f*_limiting_kgf}, and then bounds the difference in the trajectories on $[0,T]$ for sufficiently large width $p$ by convergence of the kernels $K_{\phi(t)}^p \to K_\infty$. Our proof differs from previous results in that it relies on \textit{uniform} convergence of kernels (cf. \Cref{appendix:limiting_ntk,appendix:lazy_training}), enabling the analysis of population quantities such as $\mathbb{E}_{P(X)} \ell(X, f(X))$.

\Cref{thm:main_theorem} proves convergence to an $\epsilon$-neighborhood of the global solution when optimizing $L_P$ despite the highly nonconvex nature of this optimization problem in the network parameters $\phi$. For sufficiently flexible network architectures, optimization of $L_P$ thus behaves similarly to that of $L_F$, which we have shown is a convex problem in the function space $\mathcal{H}$ in \Cref{section:inpe}.

\section{Experiments}
\label{section:experiments}

Having established conditions under which global convergence is guaranteed, the main aim of our experiments is to demonstrate approximate global convergence in practice, even for scenarios where the conditions assumed in our proofs are not satisfied exactly. \Cref{subsection:toy_example} demonstrates that finite-neuron layer widths used in practice approximate the limiting behavior well, while \Cref{subsection:amort_clust} and \Cref{subsection:rotated_mnist} utilize problem-specific network architectures for amortized inference. Our results suggest that there may exist weaker assumptions under which global convergence is still guaranteed. 

\subsection{Toy Example}
\label{subsection:toy_example}
We first assess whether the asymptotic regime of \Cref{thm:main_theorem} is relevant to practice with layers of finite width. We use a diagnostic motivated by the lazy training framework of \cite{Chizat2019LazyTraining}, which provides the intuition that in the limiting NTK regime, the function $f$ behaves much like its linearization around the initial weights $\phi_0$:
\begin{equation}
    \label{equation:ntk_linearization}
    f(x; \phi) \approx f(x; \phi_0) + J_\phi f(x; \phi_0)(\phi-\phi_0).
\end{equation}
\cite{Liu2020LazyTrainingConstantKernel} prove that equality holds exactly in the equation above if and only if $f(x; \phi)$ has a constant tangent kernel (i.e., $K_\infty$). Therefore, similarity between $f$ and its linearization indicates that the asymptotic regime of the limiting NTK is approximately achieved. Note that even if $f$ is linear in $\phi$, as in the above expression, it may still be highly nonlinear in $x$. 

We consider a toy example for which $||x||_2 = 1$. The generative model first draws a rotation angle $\Theta$ uniformly between $0$ and $2\pi$, and then a rotation perturbation $Z \sim \mathcal{N}(0, \sigma^2)$, where we take $\sigma=0.5$. Conditional on $\Theta$ and $Z$, the data $x$ is deterministic: $x = \left[ \textrm{cos}(\theta + z), \textrm{sin}(\theta + z) \right]^\top$. This construction ensures that the data lie on the sphere $\mathbb{S}^1 \subset \mathbb{R}^2$, which guarantees the positivity of the limiting NTK for certain architectures \citep{Jacot2018NeuralTangentKernel}. We aim to infer $\Theta$ given a realization $x$, marginalizing over the nuisance latent variable $Z$. Our variational family $Q(\Theta; f(x))$ is a von Mises distribution, whose support is the interval $[0,2\pi]$. This family is an exponential family distribution, allowing for the application of Lemma~\ref{convex_thm}. The encoder network $f(\cdot; \phi)$ is given by a dense two-layer network (that is, one hidden layer) with rectified linear unit (ReLU) activation, which we study as the network width $p$ grows. The network outputs $f(x;\phi)$ parameterize the natural parameter $\eta$.

We fit the neural network $f(x; \phi)$ in two ways. First, we use SGD to fit the network parameters $\phi$ to minimize $L_P$. Second, we fit the linearization $f_{\textrm{lin}}(x; \phi) = f(x; \phi_0) + J_\phi f(x; \phi_0)(\phi-\phi_0)$ in $\phi$. We perform both of these fitting procedures for various widths $p$. For both settings, stochastic gradient estimation was performed by following the procedure in \Cref{appendix:convergence_proof_sgd}. For evaluation, we fix $N=1000$ independent realizations $x^*_1,\dots, x^*_{N}$ from the generative model with underlying ground-truth latent parameter values $\theta_1^*, \dots, \theta_{N}^*$, and evaluate the held-out negative log-likelihood (NLL), $-\frac{1}{N}\sum_{i=1}^N \log q\left(\theta_i^* \mid f(x_i^*; \phi)\right)$, for both functions: $f(x; \phi)$ and $f_{\textrm{lin}}(x; \phi)$. \Cref{fig:inpe_training_canonical} shows the evolution of the held-out NLL across the fitting procedure for three different network widths $p$: $64,256$ and $1024$. The difference in quality between the linearizations and the true functions at convergence diminishes as the width $p$ grows; for $p=1024$, the two are nearly identical, providing evidence that the asymptotic regime is achieved.

\begin{figure}[htp!]

\centering
\includegraphics[width=.3\textwidth]{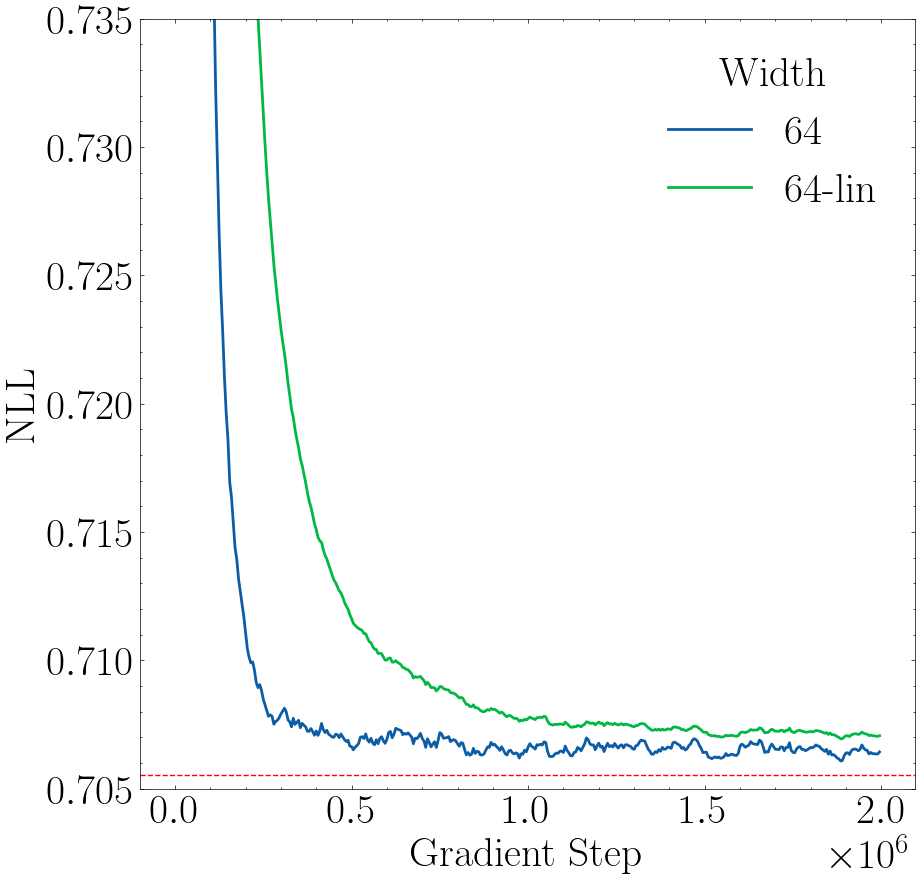}\hfill
\includegraphics[width=.3\textwidth]{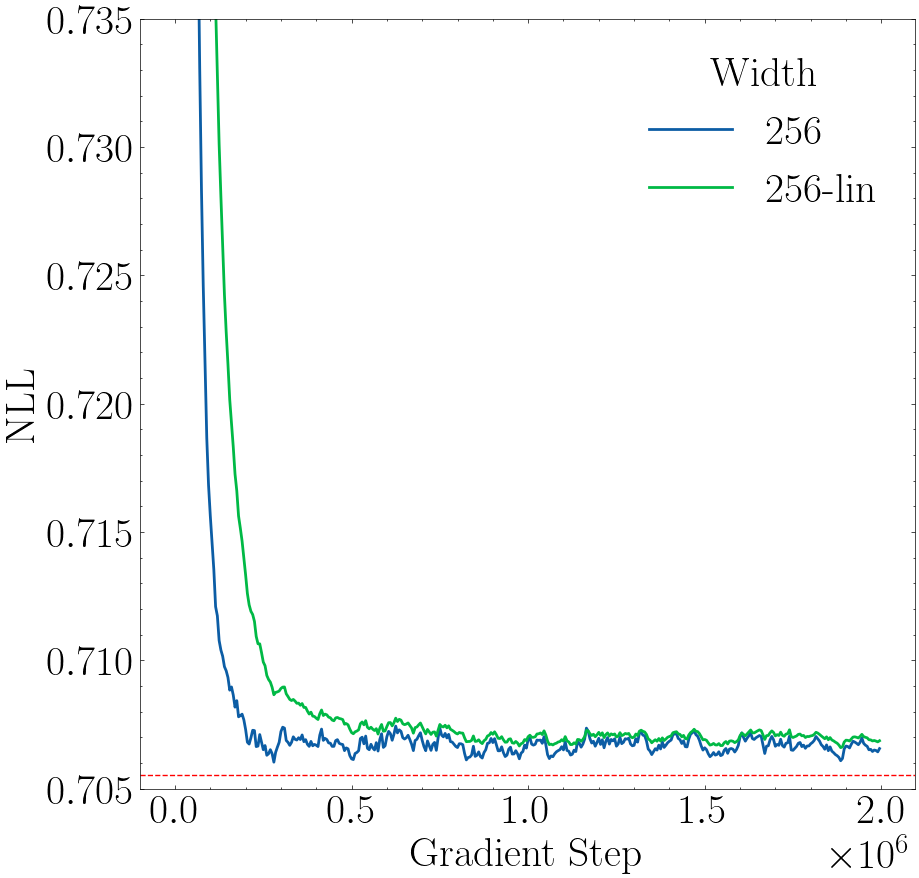}\hfill
\includegraphics[width=.3\textwidth]{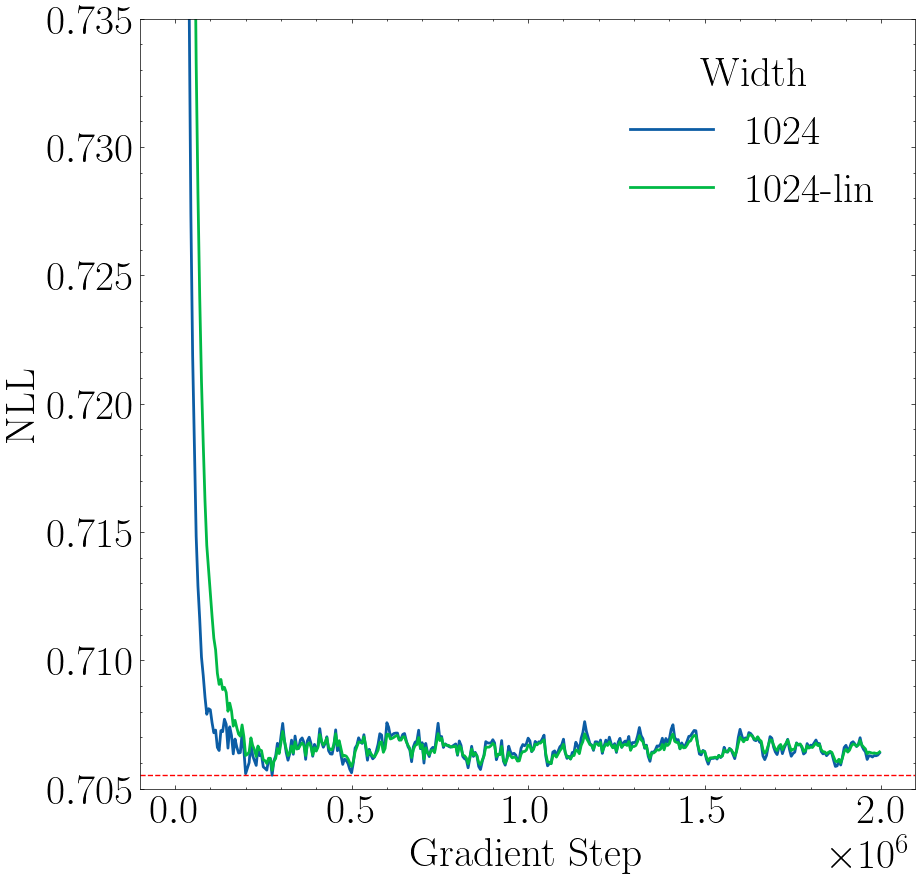}

\caption{\centering Negative log-likelihood across gradient steps, for network widths $64, 256$, and $1024$ neurons. NLL for the exact posterior is denoted by the red line. }
\label{fig:inpe_training_canonical}

\end{figure}

\subsection{Label Switching in Amortized Clustering}
\label{subsection:amort_clust}

In this experiment and the subsequent one, we consider a more diverse set of network architectures. Although \Cref{thm:main_theorem} assumes a shallow, dense network architecture, these experimental results show that empirically, global convergence can be obtained for many other architectures as well. We consider the difficult problem of amortizing clustering. In this problem, we are given an unordered collection (set) of data $x=\{x_1,\dots, x_n\}$, $x_i \in \mathbb{R}$ and our objective is to infer the locations and labels of the cluster centers from which the data were generated. The generative model first draws a shift parameter $S \sim \mathcal{N}(0, 100^2)$  followed by
\begin{align*}
    Z \mid (S=s) &\sim \mathcal{N}\left(\left[\mu_1 + s, \dots, \mu_d + s\right]^\top, \sigma^2I_d\right) \\
    X_i \mid (Z=z) &\overset{iid}{\sim} \sum_{j=1}^d p_j \mathcal{N}(z_j, \tau^2).
\end{align*}
In the above, $\sigma^2, \tau^2, \mu \in \mathbb{R}^d$ and $p \in \mathbb{R}^d$ are known variances, locations, and proportions, respectively. The global parameter $S = s \in \mathbb{R}$ shifts the $d$ locations $\mu_1,\dots, \mu_d$ to $\mu_1+s, \dots, \mu_d + s$. The cluster centers $Z$ are then obtained by adding noise to these locations. An implicit labeling is imposed on the centers by assigning each a dimension in $\mathbb{R}^d$ via the prior. Finally, the data are drawn independently from a mixture of $d$ univariate Gaussians with centers $Z_i$, $i=1,\dots, d$. We consider the tasks of inferring the scalar shift $S$ and the vector of cluster centers $Z$. We fix $\mu = [-20, -10, 0, 10, 20]^\top$ with $d=5$. We fix the hyperparameters $\sigma=0.5$ and $\tau=0.1$, and artificially fix the shift as $S=100$ to generate $n=1000$ independent realizations $X = x$ from the generative model.

Inferring $S$ should be straightforward because the vector $\mu$ is known and fixed, ensuring that the joint likelihood $p(x,s)$ (marginalizing over $z$) is unimodal in $s$. However, inference on $Z$ may be difficult for likelihood-based methods because the order of the entries of $Z$ matters: the joint density $p(x,z,s) \neq p(x,\pi(z),s)$ for a permutation $\pi$, even though $p(x \mid z) = p(x \mid \pi(z))$. ``Label switching'' can thus pose a significant obstacle for likelihood-based methods, as any permutation of the cluster centers $Z$ will still explain the data well. This problem formulation thus results in a likelihood, and hence a posterior density, with many local optima but a single global optimum (i.e., where the cluster centers have the correct labeling).

Now we show that likelihood-based approaches to variational inference, such as maximizing the evidence lower-bound (ELBO), result in suboptimal solutions compared to the minimization of $L_P$. We take the variational distributions $q(S; f_1(x;\phi_1))$ and $q(Z; f_2(x; \phi_2))$ to be Gaussian. We fit the networks $f_1$ and $f_2$. Due to the exchangeability of the observations $x=\{x_1,\dots, x_n\}$, we parameterize each as permutation-invariant neural networks, fitting both $\phi_1$ and $\phi_2$ to either minimize $L_P$ or to maximize the ELBO (see Appendix \ref{appendix:experiments}). We perform 100 replications of this experiment across different random seeds, and consider two different parameterizations of the Gaussian variational distribution: a mean-only parameterization with fixed unit variance and a natural parameterization with an unknown mean and unknown variance. Both of these variational families are exponential families, and so convexity (in the sense of \Cref{convex_corollary}) holds.

\Cref{fig:amort_clust_shift} plots kernel-smoothed frequencies of point estimates of $S$, where each point estimate is the mode of the variational posterior for an experimental replicate. Both ELBO- and $L_P$-based training estimate $S=100$ well. However, the limitations of ELBO-based training are evident in the fidelity of the posterior approximation of $Z$. \Cref{tab:l1} plots the average $\ell_1$ distance $||\hat{Z} - Z||_1$ between the variational mode $\hat{Z}$ and the true latent draw $Z$. Optimization of the ELBO converges to a local optimum that is a permutation of the entries of $Z$, resulting in a large $\ell_1$ distance on average. Minimization of $L_P$, on the other hand, converges to the global optimum without any label switching. \Cref{tab:ordered} indicates the degree of label switching, showing the proportion of trials in which the entries $\hat{Z}$ were correctly ordered.

\begin{figure}[ht!]
    \centering
    \begin{minipage}{0.47\textwidth}
    \centering
    \begin{tabular}{lrrr}
\toprule
 & ELBO & $L_P$ \\
\midrule
Mean & 0.03 & 1.00 \\
Natural & 0.02 & 1.00  \\
\bottomrule
\end{tabular}
\captionof{table}{Proportion out of one hundred replicates where posterior mode of $q(Z; f_2(x; \phi_2))$ was a vector in increasing order.}
\label{tab:ordered}
     
        \centering
        \begin{tabular}{lrrr}
\toprule
 & ELBO & $L_P$ \\
\midrule
Mean &  77.0 (45.2)  & 1.8 (2.3)\\
Natural & 86.0 (26.9) & 2.9 (0.8)\\
\bottomrule
\end{tabular}
\caption{Average $\ell_1$ distance $||\hat{Z}-Z||_1$ (and std. deviations) across one hundred replicates.}
\label{tab:l1}
    \end{minipage}\hfill
    \begin{minipage}{0.47\textwidth}
    \centering
   \includegraphics[width=\linewidth]{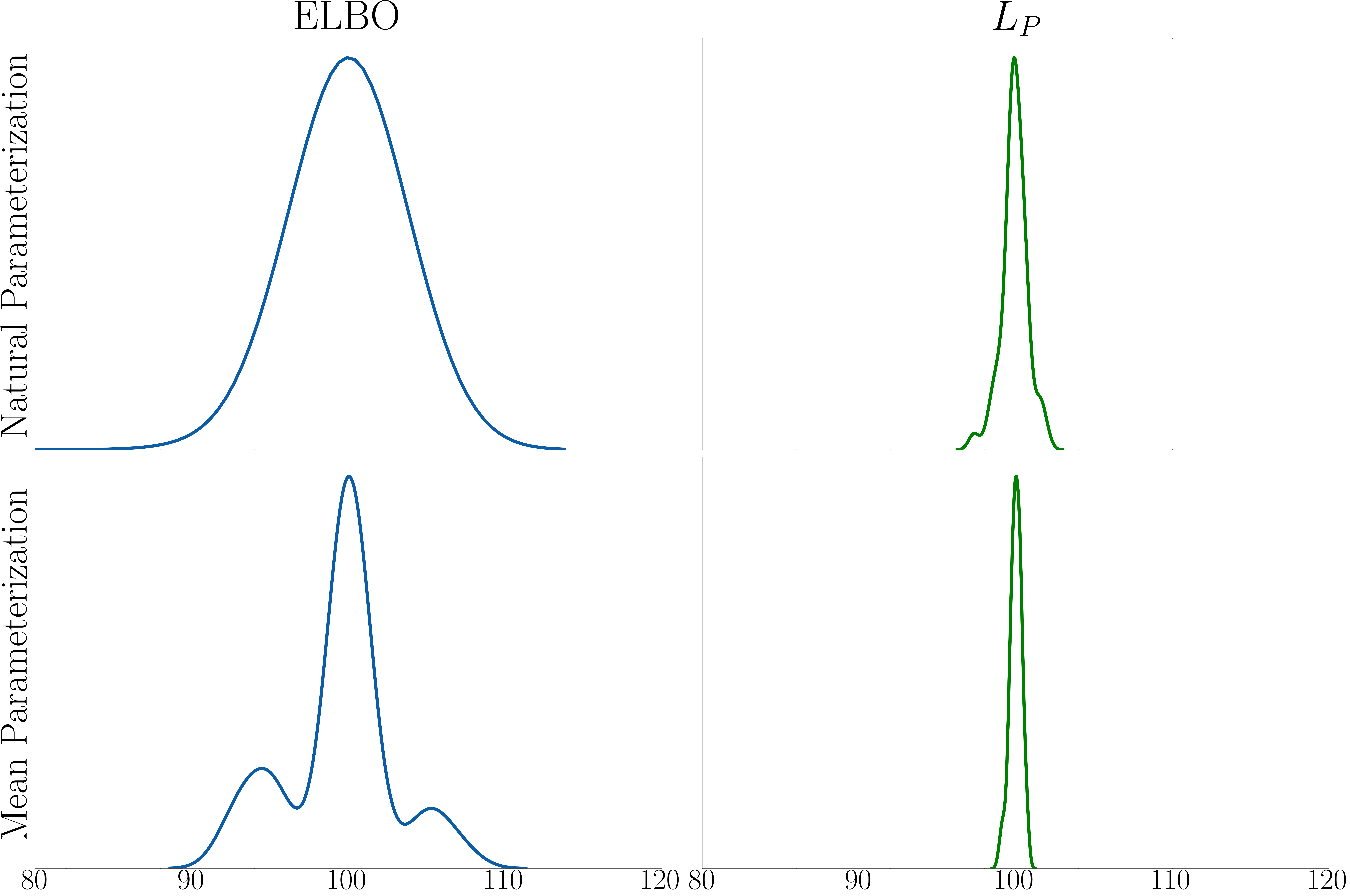}
    \caption{Mode of $q(S; f_1(x; \phi_1))$ across experimental replicates.}
    \label{fig:amort_clust_shift}
    \end{minipage}
\end{figure}

\subsection{Rotated MNIST Digits}
\label{subsection:rotated_mnist}

We consider the task of inferring a shared rotation angle $\Theta$ for a set of $N$ MNIST digits. The generative model is as follows. The rotation angle is drawn as $ \Theta \sim \textrm{Uniform}(0, 2\pi)$, and for all $i \in [N]$ a noise term is drawn as $Z_i \sim \mathcal{N}(0_p, \sigma^2 I_p)$. Finally, each image is drawn as 
\begin{align}
    X_i \mid \left(Z_i = z_i, \Theta = \theta \right) &\sim \mathcal{N}\left(\texttt{RotateMNIST}(z_i, \theta), \tau^2\right), \ \ i \in [N].
\end{align}
Here, \texttt{RotateMNIST} is fitted ahead of time and fixed throughout this experiment. Given a latent representation $z$ and angle $\theta$, \texttt{RotateMNIST} returns a $28 \times 28$ MNIST digit image rotated counterclockwise by $\theta$ degrees. (See Appendix \ref{appendix:experiments} for additional experimental details.) We aim to fit the variational posterior $q(\Theta; f(x_1, \dots, x_N; \phi))$, implicitly marginalizing over the nuisance latent variables $Z_1, \dots, Z_N$. The true data $\{x_i\}_{i=1}^N$ are generated from the above model, with $N=1000$ digits and an underlying counterclockwise rotation angle of $\theta = 260$ degrees. We provide visualizations of some of these digits in \Cref{fig:rotated_mnist_example_digits}. The variational distribution $q(\Theta; f(x_1, \dots, x_N; \phi))$ is taken to be a von Mises distribution with natural parameterization, as in \Cref{subsection:toy_example}, although for this example the architecture of the encoder network makes $f$ invariant to permutations of the inputs $\{x_i\}_{i=1}^N$, to reflect the exchangeability of the data. We fit $f$ to minimize $L_P$ using 100,000 iterations of SGD.

We compare to fitting $\theta$ to directly maximize the likelihood of the data $p(\theta, \{x_i\}_{i=1}^N)$. As this quantity is intractable, we maximize the importance-weighted bound (IWBO), which is a generalization of the ELBO \citep{Burda2016IWAE}, by maximizing the joint likelihood $p(\theta, \{z_i\}_{i=1}^N, \{x_i\}_{i=1}^N)$ in $\theta$ while fitting a Gaussian variational distribution on the variables $Z$. 

\begin{figure}[ht!]
    \centering
    \begin{minipage}{0.3\textwidth}
        \centering
    \includegraphics[width=\linewidth]{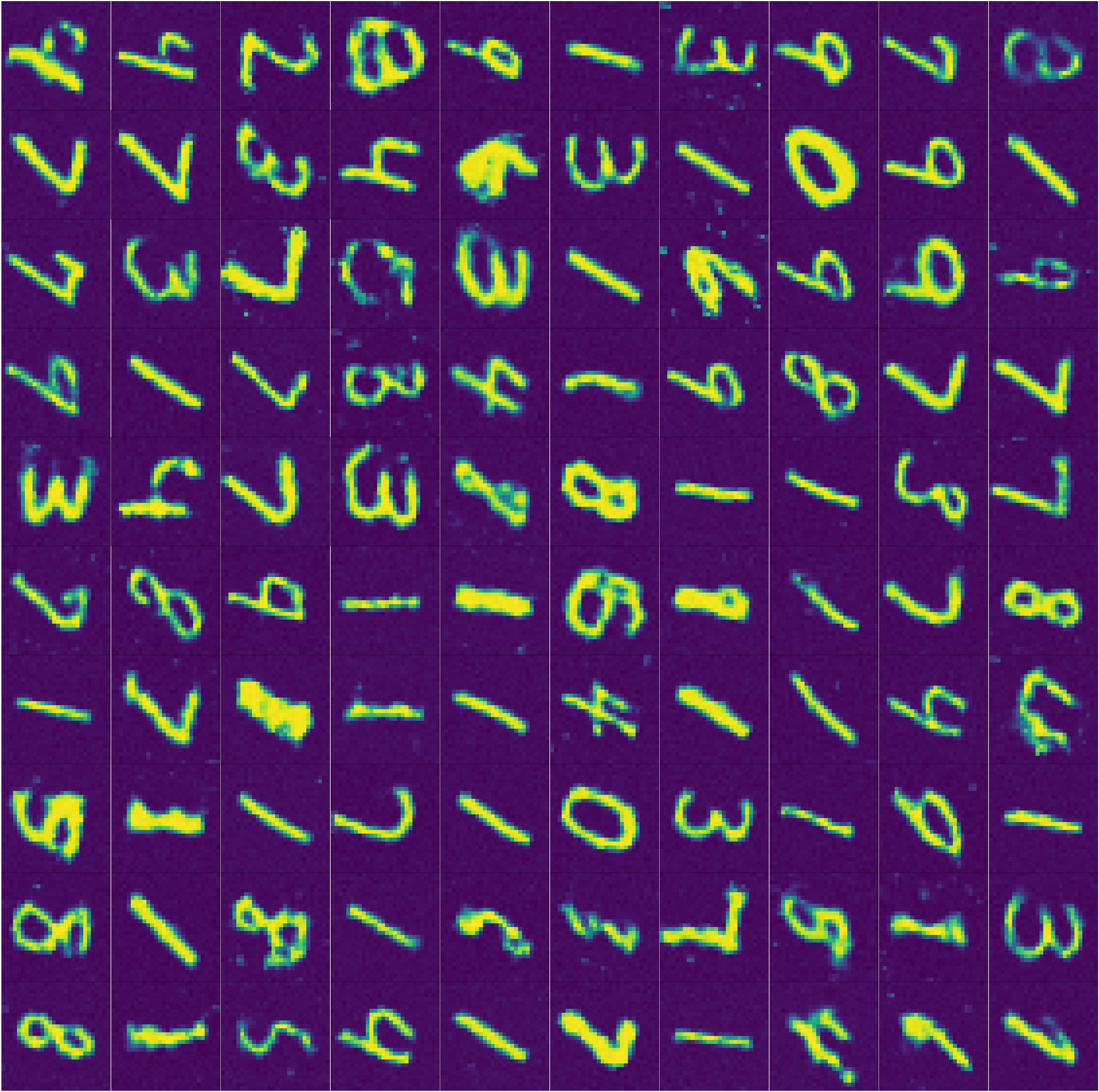}
    \caption{100 of the $N=1000$ data observations with counterclockwise rotation of 260 degrees.}
    \label{fig:rotated_mnist_example_digits}
    \end{minipage}\hfill
    \begin{minipage}{0.67\textwidth}
    \centering
    \includegraphics[width=\linewidth]{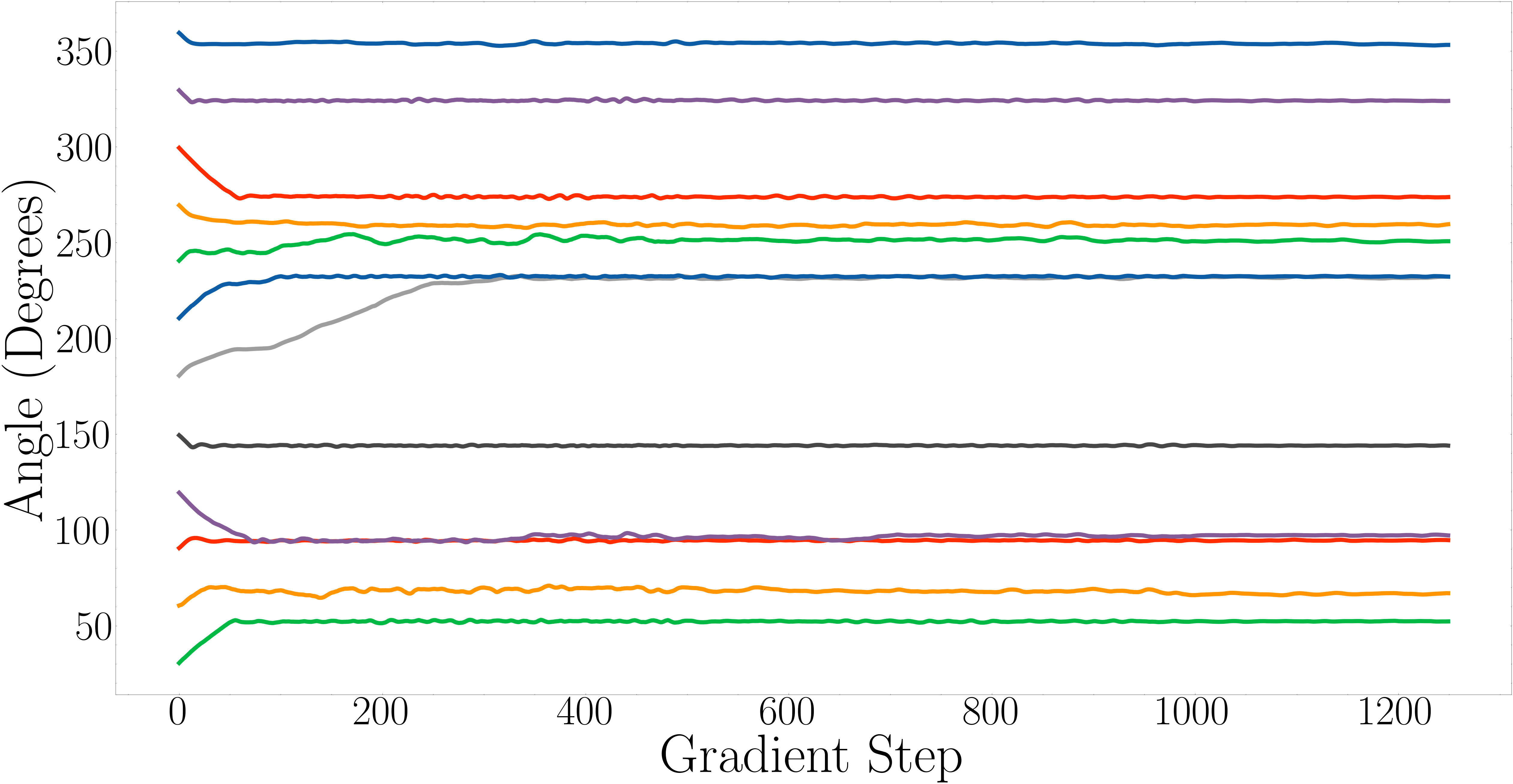}
    \caption{Estimate of angle $\theta$ across gradient steps, with fitting performed to maximize the IWBO.}
    \label{fig:rotated_mnist_iwbo}
    \end{minipage}
\end{figure}

The likelihood function is multimodal in $\theta$ due to the approximate rotational symmetry of several handwritten digits: zero, one, and eight are approximately invariant under rotations of 180 degrees. Additionally, the digits six and nine are similar following 180-degree rotations. These symmetries yield a multimodal posterior distribution on $\Theta$. Likelihood-based fitting procedures, such as maximizing the IWBO, often get stuck in these shallow local optima, while fitting $f$ to minimize the parametric objective $L_P$ finds a unique global solution. \Cref{fig:rotated_mnist_iwbo} shows estimates of the angle $\theta$ conditional on the data $\{x_i\}_{i=1}^N$ during training with the IWBO objective, with $\theta$ initialized to a variety of values. For some initializations, the IWBO optimization converges quickly to near the correct value of $260$ degrees, but in many others, it converges to a shallow local optimum. 

We perform the same routine, fitting $q(\Theta; f(x_1,\dots, x_N; \phi))$ to minimize the expected forward KL divergence $L_P$. \Cref{fig:rotated_mnist_favi} shows that across a variety of initializations of the angles, this approach always converges to a unique solution and fits the posterior mode to the correct value of 260 degrees. $L_P$ minimization converges rapidly, and so \Cref{fig:rotated_mnist_favi_zoom} zooms in on the initial few thousand gradient steps to show the various trajectories among initializations. 

\begin{figure}[ht!]
    \centering
    \begin{minipage}{0.47\textwidth}
        \centering
    \includegraphics[width=\linewidth]{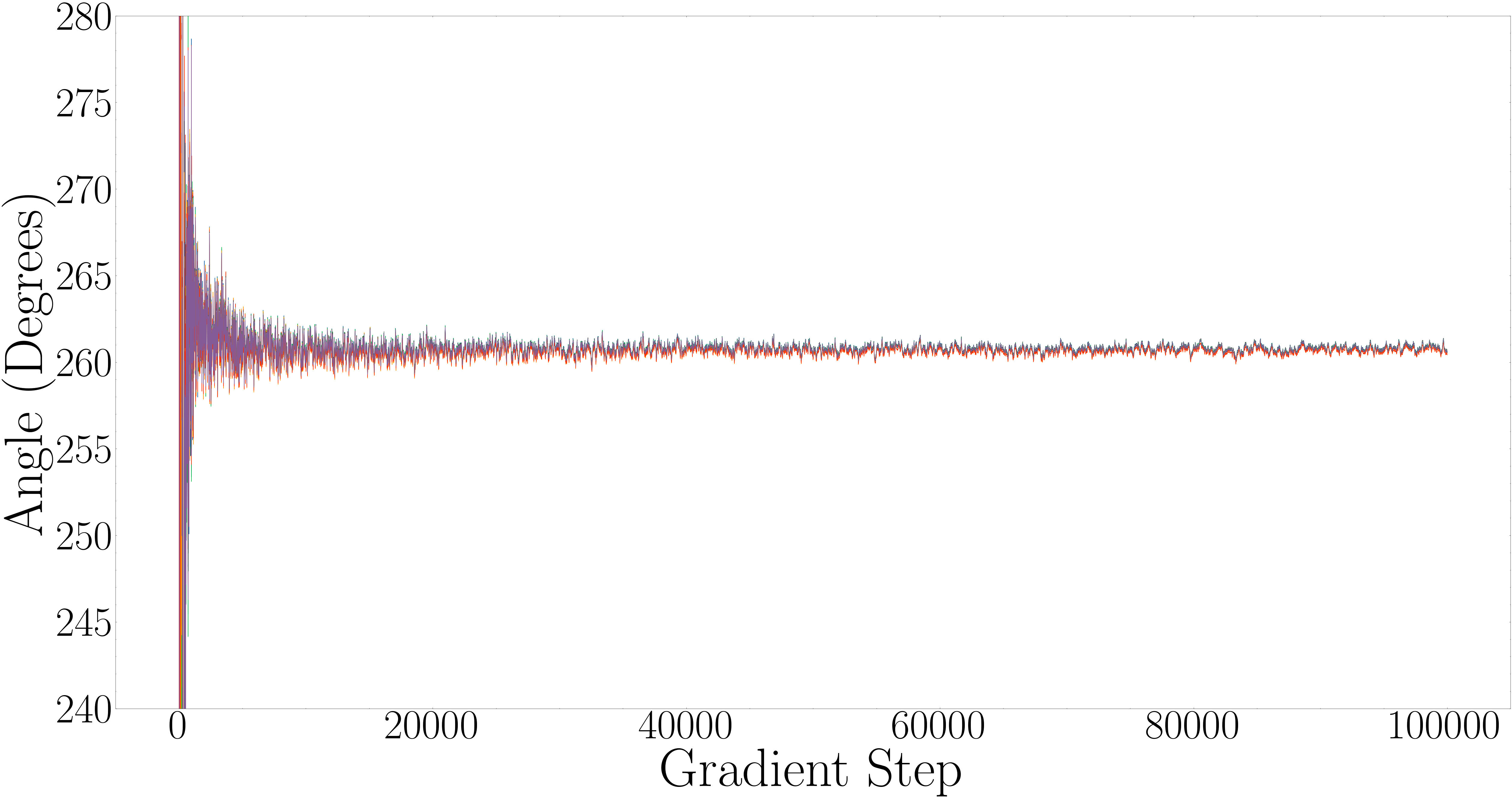}
    \caption{Mode of $q(\Theta; f(x_1, \dots, x_N; \phi))$ across training (starting at different initializations) when minimizing objective $L_P$.}
    \label{fig:rotated_mnist_favi}
    \end{minipage}\hfill
    \begin{minipage}{0.47\textwidth}
    \centering
   \includegraphics[width=\linewidth]{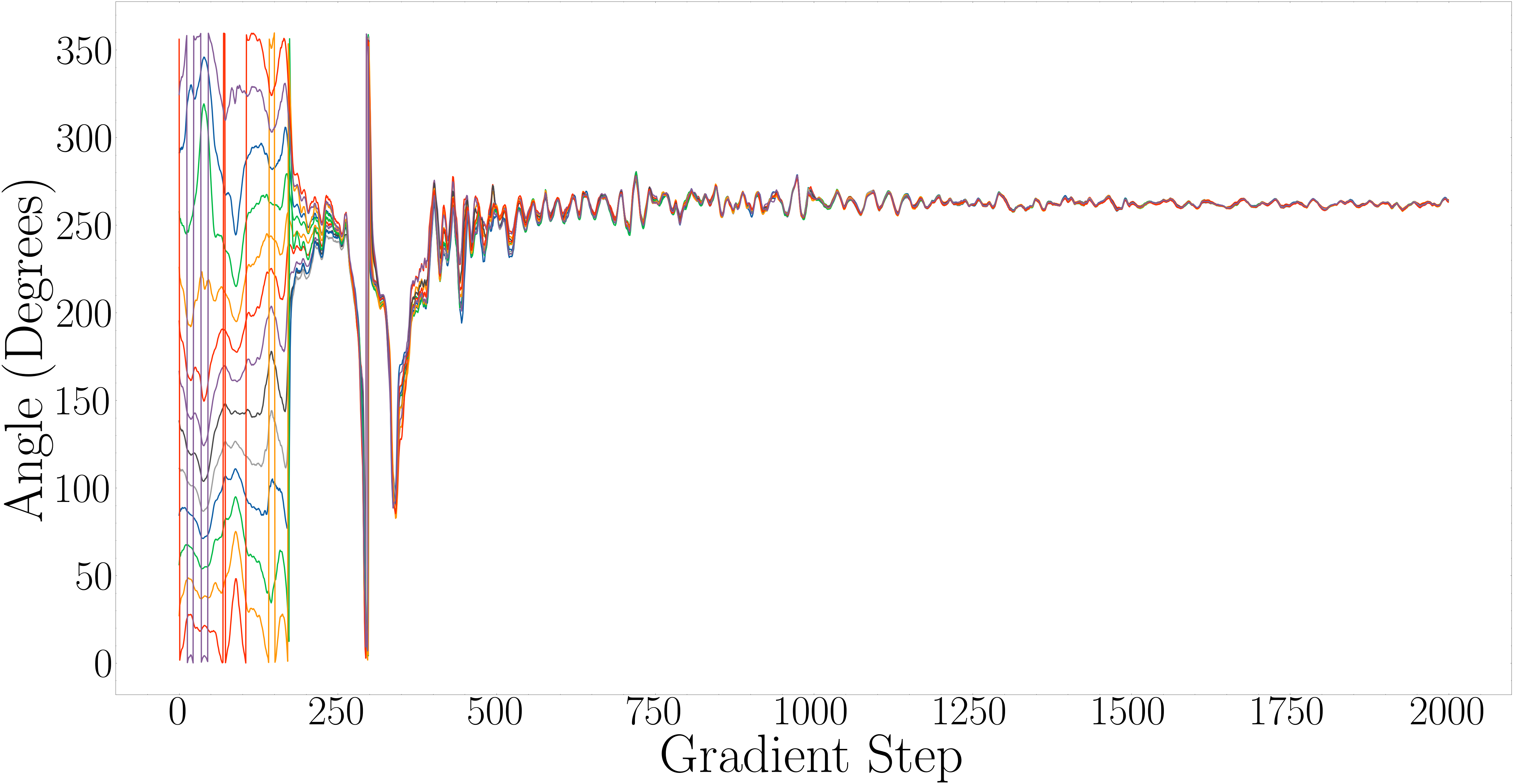}
    \caption{Zoomed-in trajectories across the first 2000 gradient steps, showing similar estimates regardless of initialization.}
    \label{fig:rotated_mnist_favi_zoom}
    \end{minipage}
\end{figure}

\subsection{Local Optima vs. Global Optima}
\label{subsection:local_vs_global}



In this experiment, we directly compare the quality of the variational approximation minimizing the expected forward KL objective to that of the local optima found by optimizing the ELBO. We consider an adapted version of the rotated MNIST digit problem outlined above. Each digit $x_i$ is drawn from the model $x_i \sim \mathcal{N}\left(\texttt{Rotate}\left(x^0_i, \theta\right), \tau^2 \right)$ for $i =1,\dots, 50$ with angle set to $\theta_{\textrm{true}} = 260$ degrees. The unrotated digits $x_i^0$ are fixed a priori, eliminating the nuisance latent variables $Z$ in this setting. This allows us to directly perform ELBO-based variational inference on $\Theta$, instead of merely maximizing the likelihood (or a bound thereof) as in \Cref{subsection:rotated_mnist}. We fit an amortized Gaussian variational distribution with fixed variance $\sigma^2 = 0.5^2$ for inference on $\Theta$, and use the same prior as above. This is an exponential family with only one location parameter to be learned. 

Fitting $q(\Theta; f(x_1,\dots, x_{50}; \phi))$ is performed by minimizing either the negative ELBO or the expected forward KL divergence. The only differences between the two approaches are their objective functions: the data, network architecture, learning rates, and all other hyperparameters are the same. We optimize each objective function over 10,000 gradient steps, and measure the quality of the obtained variational approximations by a variety of metrics, including: the (non-expected) forward KL divergence; the reverse KL divergence; negative log-likelihood; and the angle point estimate. Intuition suggests that a global optimum should outperform local ones, and we find this to be the case. By any of the performance metrics we consider, the global solution of the expected forward KL minimization outperforms the variational approximations found by optimizing the ELBO.

\begin{figure}[ht!]
    \centering
    
   \includegraphics[width=\linewidth]{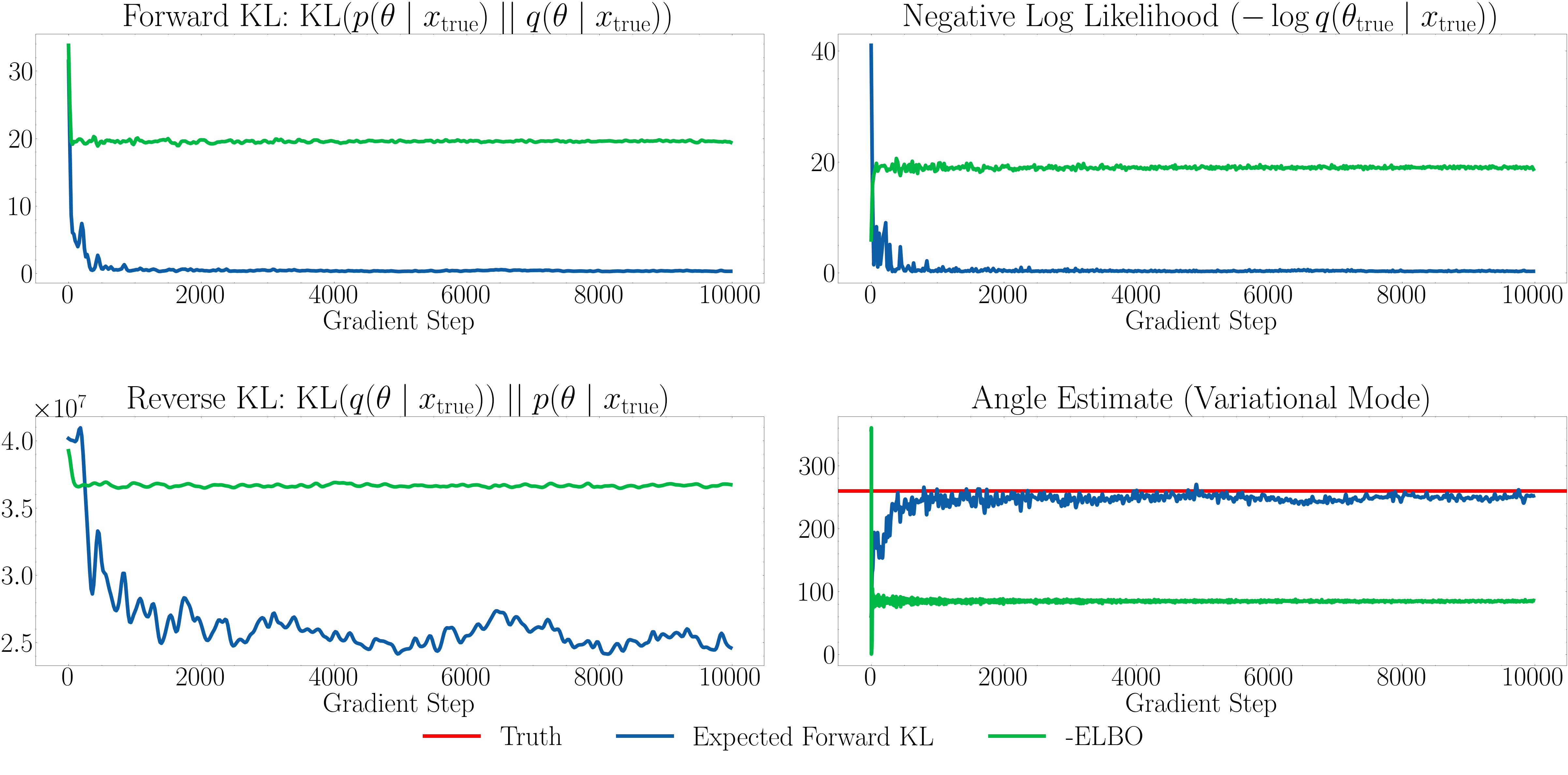}
    \caption{Forward and reverse KL divergences to the true posterior across fitting for minimization of the expected forward KL (blue) or the negative ELBO (green). We also plot the negative log likelihood of the true angle, as well as the variational mode (true angle $\theta_{\textrm{true}}$ is plotted in red.)}
    \label{fig:rotated_mnist_quality}
   \end{figure}

\section{Discussion}
\label{section:discussion}

In this work, we showed that in the asymptotic limit of an infinitely wide neural network, gradient descent dynamics on the expected forward KL objective $L_P$ converge to an $\epsilon$-neighborhood of a unique function $f^*$, a global minimizer. Our results depend on several regularity conditions, the most important of which is the positive definiteness of the limiting neural tangent kernel, the compactness of the data space $\mathcal{X}$, and the specific architecture considered, a single hidden-layer ReLU network. We conjecture and show experimentally that global convergence holds more generally. First, we illustrate that the asymptotic regime describes practice with a finite number of neurons (see \Cref{subsection:toy_example}). Second, our conditions allow for a wide variety of activation functions beyond ReLU (see \Cref{appendix:lazy_training}). Third, empirically, global convergence can be achieved with many network architectures. Beyond multilayer perceptrons, we illustrate global convergence for convolutional neural networks (CNNs) and permutation-invariant architectures in \Cref{subsection:amort_clust} and \Cref{subsection:rotated_mnist}. 

Expected forward KL minimization is a likelihood-free inference (LFI) method. For Bayesian inference, likelihood-based and likelihood-free methods are not typically viewed as competitors, but as different tools for different settings. In a setting where the likelihood is available, the prevailing wisdom suggests utilizing it.
However, our results suggest that likelihood-free approaches to inference may be preferable even when the likelihood function is readily available. We find likelihood-based methods are prone to suffer the shortcomings of numerical optimization, often converging to shallow local optima of ELBO-based variational objectives. Expected forward KL minimization instead converges to a unique global optimum of its objective function. 

There are situations in which ELBO optimization may nevertheless be preferable. First, if the likelihood function of the model is approximately convex and well-conditioned in the region of interest, ELBO optimization should recover a nearly global optimizer. Second, in certain situations, the ELBO can be optimized using deterministic optimization methods, which can be much faster than SGD. Third, if the generative model has free model parameters, with the ELBO they can be fitted while simultaneously fitting the variational approximation, with a single objective function for both tasks. Fourth, the ELBO can be applied with non-amortized variational distributions, which can have computational benefits in settings with few observations.
Many important inference problems do not fall into any of these four categories. Even for those that do, the benefits of expected forward KL minimization, including global convergence, may outweigh the benefits of ELBO optimization.

\clearpage

\section*{Acknowledgments}
We thank the reviewers for their helpful comments and suggestions. This material is based on work supported by the National Science Foundation under Grant Nos. 2209720 (OAC) 
 and 2241144 (DGE), and the U.S. Department of Energy, Office of Science, Office of High Energy Physics under Award Number DE-SC0023714.

\bibliographystyle{apalike}
\bibliography{refs}

\clearpage
\appendix

\section{Convexity of the Functional Objective}
\label{appendix:convexity_fo}

We prove Lemma~\ref{convex_thm} from the manuscript below.

\begin{proof}
Let the log-density be given by $\log q(\theta; \eta) = \log h(\theta) + \eta^\top T(\theta) - A(\eta)$. First, observe that under the conditions given, the function $\ell$ is equivalent (up to additive constants) to a much simpler expression, the expected log-density of $q$, via
\begin{align*}
    \loss(x, \eta) &=  \mathrm{KL}\bigg{[}P(\Theta | X = x) \mid \mid Q(\Theta ; \eta)\bigg{]} \\
    &= -\mathbb{E}_{P(\Theta | X = x)} \log q(\theta; \eta) + C,
\end{align*}
where $C$ is a constant that does not depend on $\eta$. Now, the mapping $\eta \to -\log q(\theta; \eta)$ is convex in $\eta$ because its Hessian is $\frac{\partial A}{\partial \eta \partial \eta^\top} = \textrm{Var}(T(\theta)) \succ 0$ (cf.\ Chapter 6.6.3 of \cite{Srivastava2014StatisticalInference}). Strict convexity follows from minimality of the representation (cf. \cite{Wainwright2008VariationalInference}, Proposition 3.1). We can show $\loss$ is (strictly) convex in $\eta$ by applying linearity of expectation. We have for any $\lambda \in [0,1]$
\begin{align}
    \loss(x, \lambda \eta_1 + (1-\lambda) \eta_2) &= -\mathbb{E}_{P(\Theta \mid X=x)} \log q(\Theta; \lambda \eta_1 + (1-\lambda)\eta_2)\\
    &\leq -\mathbb{E}_{P(\Theta \mid X=x)} \big{(} \lambda \log q(\Theta; \eta_1) + (1-\lambda) \log q(\Theta; \eta_2) \big{)} \\
    &= \lambda \loss(x, \eta_1) + (1-\lambda) \ell(x, \eta_2)
\end{align}
where the second line follows from the convexity of the map $\eta \mapsto -\log q(\theta; \eta)$ above for any value of $\theta$. In fact, the inequality holds strictly as well by strict convexity above and the continuity of $\log q(\theta; \eta)$ in $\theta$. So the function $\loss(x, \eta)$ is strictly convex in $\eta$.
\end{proof}

\section{Unbiased Stochastic Gradients for the Parametric Objective}
\label{appendix:convergence_proof_sgd}

Computation of unbiased estimates of the gradient of the loss function $L_P(\phi)$ with respect to the parameters $\phi$ is all that is needed to implement SGD for \hyperref[equation:inpe_objective_phi]{$L_P$}. Under mild conditions (see Proposition \ref{favi_gradient_prop}), the loss function $L(\phi)$ may be equivalently written as
\begin{equation*}
    L_P(\phi) = \mathbb{E}_{P(X)} \mathbb{E}_{P(\Theta \mid X)} \log \frac{p(\Theta \mid X)}{q(\Theta; f(X; \phi))} = \mathbb{E}_{P(\Theta, X)} \log \frac{p(\Theta \mid X)}{q(\Theta; f(X; \phi))} 
\end{equation*}
for density functions $p, q$, where $f(\cdot; \phi)$ denotes a function parameterized by $\phi$. Under the conditions of \Cref{favi_gradient_prop}, differentiation and integration may be interchanged, so that
\begin{equation*}
    \nabla_\phi L_P(\phi) =  \mathbb{E}_{P(\Theta, X)} \nabla_\phi \log \frac{p(\Theta \mid X)}{q(\Theta; f(X; \phi))} =  -\mathbb{E}_{P(\Theta, X)} \nabla_\phi \log q(\Theta; f(X; \phi))  
\end{equation*}
and unbiased estimates of the quantity can be easily attained by samples drawn $(\theta, x) \sim P(\Theta, X)$.

\begin{prop}
\label{favi_gradient_prop}
     Let $(\Omega_1, \mathcal{B}_1)$, $(\Omega_2, \mathcal{B}_2)$ be measurable spaces on which the random variables $X: \Omega_1 \to \mathcal{X}$ and $\Theta:\Omega_2 \to \mathcal{O}$ are defined, respectively. Suppose that for all $x \in \mathcal{X}$ and all $\phi \in \Phi$ we have $P(\Theta \mid X=x) \ll Q(\Theta; f(x; \phi)) \ll \lambda(\Theta)$, with $\lambda(\Theta)$ denoting Lebesgue measure and $\ll$ denoting absolute continuity. Further, suppose that $ \log\left(\frac{p(\Theta \mid X)}{q(\Theta; f(X; \phi))}\right)$ is measurable with respect to the product space $(\Omega_1 \times \Omega_2, \mathcal{B}_1 \times \mathcal{B}_2)$ for each $\phi \in \Phi$, and $\nabla_\phi \log q(\theta; f(x; \phi))$ exists for almost all $(\theta, x) \in \mathcal{O} \times \mathcal{X}$. Finally, assume there exists an integrable $Y$ dominating $\nabla_\phi \log q(\theta; f(x; \phi))$ for all $\phi \in \Phi$ and almost all $(\theta, x) \in \mathcal{O} \times \mathcal{X}$. Then for any $B \in \mathbb{N}$ and any $\phi \in \Phi$ the quantity
     \begin{equation}
     \label{equation:unbiased_gradient_estimator}
         \hat{\nabla}(\phi) = -\frac{1}{B}\sum_{i=1}^B \nabla_\phi \log q(\theta_i; f(x_i; \phi)), \ \ \ (\theta_i, x_i) \overset{iid}{\sim} P(\Theta, X)
     \end{equation}
     is an unbiased estimator of the gradient of the objective \hyperref[equation:inpe_objective_phi]{$L_P$}, evaluated at $\phi \in \Phi$. 
\end{prop}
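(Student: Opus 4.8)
# Proof Proposal for Proposition~\ref{favi_gradient_prop}

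The plan is to establish the claimed identity $L_P(\phi) = \mathbb{E}_{P(\Theta,X)} \log\frac{p(\Theta\mid X)}{q(\Theta; f(X;\phi))}$ via Fubini--Tonelli, then differentiate under the expectation via dominated convergence (in the form of the Leibniz integral rule), and finally conclude unbiasedness by linearity of expectation over the i.i.d.\ sample. First I would verify that the KL divergence inside $L_P$ is well-defined pointwise in $x$: the assumption $P(\Theta\mid X=x) \ll Q(\Theta; f(x;\phi)) \ll \lambda(\Theta)$ guarantees that both the Radon--Nikodym derivative $\frac{dP(\Theta\mid X=x)}{dQ(\Theta; f(x;\phi))}$ and the Lebesgue densities $p(\cdot\mid x)$, $q(\cdot; f(x;\phi))$ exist, so by the chain rule for Radon--Nikodym derivatives the integrand $\log\frac{p(\theta\mid x)}{q(\theta; f(x;\phi))}$ coincides $Q$-a.e.\ (hence $P(\Theta\mid X=x)$-a.e.) with $\log\frac{dP(\Theta\mid X=x)}{dQ(\Theta;f(x;\phi))}$. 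This gives $\mathrm{KL}[P(\Theta\mid X=x)\mid\mid Q(\Theta;f(x;\phi))] = \mathbb{E}_{P(\Theta\mid X=x)}\log\frac{p(\Theta\mid x)}{q(\Theta;f(x;\phi))}$, and taking $\mathbb{E}_{P(X)}$ of both sides recovers the first displayed equation of the appendix.

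Next I would collapse the iterated expectation $\mathbb{E}_{P(X)}\mathbb{E}_{P(\Theta\mid X)}$ into a single expectation $\mathbb{E}_{P(\Theta,X)}$ over the joint law. This is exactly the disintegration/Fubini step, and it is licensed precisely by the stated joint-measurability hypothesis on $\log\frac{p(\Theta\mid X)}{q(\Theta; f(X;\phi))}$ with respect to $(\Omega_1\times\Omega_2,\mathcal B_1\times\mathcal B_2)$: a jointly measurable function whose iterated integral exists (which we should note requires the KL divergences to be integrable against $P(X)$, i.e.\ $L_P(\phi)<\infty$ — I would state this as an implicit regularity assumption, since otherwise the objective itself is not finite) may be integrated against the joint measure, and the two agree. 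Then, using $p(\theta\mid x)$ does not depend on $\phi$, I split $\log\frac{p}{q} = \log p(\theta\mid x) - \log q(\theta; f(x;\phi))$; the first term is $\phi$-free and drops out upon differentiation, leaving $\nabla_\phi L_P(\phi) = -\mathbb{E}_{P(\Theta,X)}\nabla_\phi \log q(\Theta; f(X;\phi))$, provided the interchange of $\nabla_\phi$ and $\mathbb{E}_{P(\Theta,X)}$ is valid.

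That interchange is the technical heart of the argument, and it is where the remaining three hypotheses are used. I would invoke the standard differentiation-under-the-integral-sign theorem: it suffices that (i) $\nabla_\phi\log q(\theta;f(x;\phi))$ exists for $P(\Theta,X)$-almost every $(\theta,x)$ — which follows from the stated a.e.-existence over $\mathcal O\times\mathcal X$ together with absolute continuity of $P(\Theta,X)$ with respect to the relevant reference product measure — and (ii) there is an integrable envelope $Y$ with $\|\nabla_\phi\log q(\theta;f(x;\phi))\|\le Y(\theta,x)$ uniformly in $\phi$, which is hypothesized directly. By the mean value theorem applied coordinatewise along $\phi$ and the dominating function $Y$, dominated convergence applies to the difference quotients, giving the interchange. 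Finally, writing $\hat\nabla(\phi) = -\frac1B\sum_{i=1}^B \nabla_\phi\log q(\theta_i; f(x_i;\phi))$ with $(\theta_i,x_i)\overset{iid}{\sim}P(\Theta,X)$, linearity of expectation yields $\mathbb{E}[\hat\nabla(\phi)] = -\mathbb{E}_{P(\Theta,X)}\nabla_\phi\log q(\Theta;f(X;\phi)) = \nabla_\phi L_P(\phi)$, as claimed.

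I expect the main obstacle to be handling the measure-theoretic bookkeeping cleanly: in particular, being careful that ``density'' means Radon--Nikodym derivative with respect to the appropriate reference measure, that the a.e.-existence of $\nabla_\phi \log q$ stated over $\mathcal O\times\mathcal X$ transfers to a $P(\Theta,X)$-a.e.\ statement (which needs $P(\Theta,X)$ to be absolutely continuous with respect to $\lambda(\Theta)\otimes P(X)$ — itself a consequence of the pointwise absolute continuity hypothesis, but worth spelling out), and that the Fubini step is applied to a function that is genuinely integrable rather than merely measurable. None of these is deep, but stating them precisely is the bulk of the work; the differentiation-under-the-integral step is then a black-box application of dominated convergence given the hypothesized envelope $Y$.
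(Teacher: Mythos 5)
Your proposal is correct and follows essentially the same route as the paper's proof: rewrite the KL divergence using the Lebesgue densities guaranteed by the absolute continuity hypotheses, collapse the iterated expectation into a single expectation over $P(\Theta, X)$, interchange $\nabla_\phi$ and the expectation via the Leibniz rule with the dominating envelope $Y$, and conclude by linearity of expectation over the i.i.d.\ sample. Your added remarks on the measure-theoretic bookkeeping (finiteness of $L_P$ for the Fubini step, and transferring the a.e.-existence of $\nabla_\phi \log q$ to a $P(\Theta,X)$-a.e.\ statement) are careful refinements of points the paper passes over more quickly, but they do not change the argument.
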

\begin{proof}
    By the absolute continuity assumptions, for any $x \in \mathcal{X}$ the distributions $P(\Theta \mid X=x)$ and $Q(\Theta; f(x; \phi))$ admit densities with respect to Lebesgue measure denoted $p(\theta \mid x)$ and $q(\theta; f(x; \phi))$, respectively.  We may then rewrite the KL divergence from \Cref{equation:inpe_objective_phi} as 
    \begin{align*}
    \textrm{KL}\bigg{[}P(\Theta \mid X=x) \mid \mid Q(\Theta; f(x; \phi))\bigg{]} &:= \mathbb{E}_{P(\Theta \mid X=x)} \log \left(\frac{dP(\Theta \mid X=x)}{dQ(\Theta; f(x; \phi))} \right)\\
    &= \mathbb{E}_{P(\Theta \mid X=x)} \log\left(\frac{p(\Theta \mid x)}{q(\Theta; f(x; \phi))}\right)
    \end{align*}
     because the Radon-Nikodym derivative $dP/dQ$ is given by the ratio of these densities. \Cref{equation:inpe_objective_phi} is thus equivalent to 
     \begin{equation*}
         \mathbb{E}_{P(X)}\mathbb{E}_{P(\Theta \mid X)} \log\left(\frac{p(\Theta \mid X)}{q(\Theta; f(X; \phi))}\right) =\mathbb{E}_{P(\Theta,X)} \log\left(\frac{p(\Theta \mid X)}{q(\Theta; f(X; \phi))}\right).
     \end{equation*}
     
    This expectation is well-defined by the measurability assumption on $ \log\left(\frac{p(\Theta \mid X)}{q(\Theta; f(X; \phi))}\right)$. To interchange differentiation and integration, it suffices by Leibniz's rule that the gradient of this quantity with respect to $\phi$ is dominated by a measurable r.v. $Y$. More precisely, there exists an integrable $Y(\theta, x)$ defined on the product space $\mathcal{O} \times \mathcal{X}$ such that $\big{|}\big{|}\nabla_\phi \log\left(\frac{p(\theta \mid x)}{q(\theta; f(x; \phi))}\right)\big{|}\big{|}\leq Y(\theta,x)$ for all $\phi \in \Phi$ and almost everywhere-$P(\Theta, X)$. This is assumed in the statement of the proposition, and so we have
    \begin{align*}
        \nabla_\phi \mathbb{E}_{P(\Theta,X)} \log\left(\frac{p(\Theta \mid X)}{q(\Theta; f(X; \phi))}\right) &= \mathbb{E}_{P(\Theta,X)}  \nabla_\phi\log\left(\frac{p(\Theta \mid X)}{q(\Theta; f(X; \phi))}\right)  \\
        &= -\mathbb{E}_{P(\Theta,X)}  \nabla_\phi\log q(\Theta; f(X; \phi)) 
    \end{align*}
    and the result follows by sampling. 
\end{proof}

The variance of the gradient estimator can be reduced at the standard Monte Carlo rate, and for any $B$ \Cref{equation:unbiased_gradient_estimator} can be used for SGD.

\section{The Limiting NTK}
\label{appendix:limiting_ntk}

Before proceeding, we introduce the architecture specific to our analysis, a scaled two-layer network, and several theorems that we will use throughout the analysis.

The first result from \cite{Shapiro2003MonteCarloSampling} concerns optimization of the objective $f(x) = \mathbb{E}_{\xi \sim P} F(x, \xi)$ in $x$ via its empirical approximation $\hat{f}_n(x) = \frac{1}{n}\sum_{i=1}^n F(x, \xi_i), \xi_i \overset{iid}{\sim} P$. We reproduce this result below. 
\begin{thm}[Proposition 7 of \cite{Shapiro2003MonteCarloSampling}]
\label{theorem:shapiro}
      Let $C$ be a nonempty compact subset of $\mathbb{R}^n$ and suppose that (i) for almost every $\xi \in \Xi$ the function $F(\cdot, \xi)$ is continuous on $C$, (ii) $F(x, \xi)$, $x \in C$, is dominated by an integrable function, (iii) the sample $\xi_1, \dots, \xi_n$ is iid. Then the expected value function $f(x)$ is finite-valued and continuous on $C$, and $\hat{f}_n(x)$ converges to $f(x)$ with probability 1 uniformly on $C$.
\end{thm}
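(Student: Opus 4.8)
The plan is to prove this as a uniform strong law of large numbers, via the classical Wald/Jennrich compactness-and-envelope argument. First I would dispose of the easy claims about $f(x) = \mathbb{E}_\xi F(x,\xi)$. Finiteness is immediate from (ii): if $g$ is the dominating integrable function then $|f(x)| \le \mathbb{E}|F(x,\xi)| \le \mathbb{E} g(\xi) < \infty$. Continuity of $f$ on $C$ follows from (i) and (ii) by dominated convergence: for $x_k \to x$ in $C$ we have $F(x_k,\xi) \to F(x,\xi)$ for a.e.\ $\xi$ while $|F(x_k,\xi)| \le g(\xi)$, so $f(x_k) \to f(x)$; since $C$ is compact, $f$ is in fact uniformly continuous. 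The pointwise statement $\hat f_n(x) \to f(x)$ a.s.\ for each fixed $x$ is just the ordinary SLLN applied to the iid sequence $F(x,\xi_i)$, which has finite mean $f(x)$.

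The core work is upgrading pointwise to uniform convergence. For $x \in C$ and $\delta > 0$ I would introduce the localized envelopes
\[
\overline{F}_\delta(x,\xi) = \sup_{x' \in C \cap B(x,\delta)} F(x',\xi), \qquad \underline{F}_\delta(x,\xi) = \inf_{x' \in C \cap B(x,\delta)} F(x',\xi),
\]
both dominated by $g$, hence integrable. By continuity of $F(\cdot,\xi)$, as $\delta \downarrow 0$ these decrease (resp.\ increase) pointwise to $F(x,\xi)$ for a.e.\ $\xi$, so by monotone/dominated convergence $\mathbb{E}\overline{F}_\delta(x,\xi) \downarrow f(x)$ and $\mathbb{E}\underline{F}_\delta(x,\xi) \uparrow f(x)$. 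Thus, given $\epsilon > 0$, for each $x \in C$ I can pick $\delta_x > 0$ with $\mathbb{E}\overline{F}_{\delta_x}(x,\xi) < f(x) + \epsilon$, $\mathbb{E}\underline{F}_{\delta_x}(x,\xi) > f(x) - \epsilon$, and (shrinking $\delta_x$ further, using continuity of $f$) also $|f(x') - f(x)| < \epsilon$ for all $x' \in C \cap B(x,\delta_x)$. The balls $\{B(x,\delta_x)\}_{x\in C}$ cover $C$; by compactness I extract a finite subcover indexed by $x_1,\dots,x_m$.

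Now I would apply the ordinary SLLN to each of the finitely many iid sequences $\{\overline{F}_{\delta_{x_j}}(x_j,\xi_i)\}_{i\ge1}$ and $\{\underline{F}_{\delta_{x_j}}(x_j,\xi_i)\}_{i\ge1}$, $j = 1,\dots,m$; on the finite intersection of these almost-sure events there is a random $N$ such that for all $n \ge N$ and all $j$ the empirical averages lie within $\epsilon$ of their means. For an arbitrary $x \in C$, pick $j$ with $x \in B(x_j,\delta_{x_j})$; then $F(x,\xi_i) \le \overline{F}_{\delta_{x_j}}(x_j,\xi_i)$ for every $i$, whence $\hat f_n(x) \le \tfrac1n\sum_i \overline{F}_{\delta_{x_j}}(x_j,\xi_i) < f(x_j) + 2\epsilon < f(x) + 3\epsilon$ for $n \ge N$, and symmetrically $\hat f_n(x) > f(x) - 3\epsilon$ using the infimum envelopes. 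Hence $\sup_{x\in C}|\hat f_n(x) - f(x)| \le 3\epsilon$ for all $n \ge N$, almost surely. Taking a countable sequence $\epsilon \downarrow 0$ and intersecting the corresponding almost-sure events yields $\sup_{x\in C}|\hat f_n(x) - f(x)| \to 0$ with probability one.

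The step I expect to be the only genuine technical obstacle is ensuring the envelope functions $\overline{F}_\delta(x,\cdot)$ and $\underline{F}_\delta(x,\cdot)$ are measurable, so that the SLLN applications and the expectations above make sense. This is handled by separability of $C$: $C \cap B(x,\delta)$ contains a countable dense subset, and by continuity of $F(\cdot,\xi)$ the supremum/infimum over the ball equals that over the countable subset, which is measurable; one should also use open balls for the covering but be mildly careful about the continuity null set when passing $\delta \downarrow 0$. Everything else is bookkeeping: all integrability needed for the SLLN and the dominated/monotone-convergence steps comes for free from the single dominating function in (ii), and compactness is invoked exactly once, to turn the pointwise envelope control into uniform control through a finite subcover.
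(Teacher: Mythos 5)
Your proof is correct. Note, however, that the paper does not prove this statement at all: it is imported verbatim as Proposition 7 of the cited reference \citep{Shapiro2003MonteCarloSampling} and used as a black box (e.g., in Proposition~\ref{lem:convergence_of_kernel_di}), so there is no in-paper argument to compare against. What you have written is the standard Wald--Jennrich uniform SLLN proof --- localized upper/lower envelopes, monotone shrinkage of their expectations to $f(x)$ by dominated convergence, a finite subcover of $C$, and the ordinary SLLN applied to the finitely many envelope sequences --- which is essentially the argument given in the cited source itself. You also correctly flag the one genuine technical point (measurability of the envelopes, resolved by separability of $C$ and continuity of $F(\cdot,\xi)$), and the chain of inequalities $\hat f_n(x) \le \tfrac1n\sum_i \overline{F}_{\delta_{x_j}}(x_j,\xi_i) < f(x_j)+2\epsilon < f(x)+3\epsilon$ (and its mirror image) is airtight given the choices of $\delta_{x_j}$. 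No gaps.
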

The next two results are integral forms of Gronwall's inequality that we use in subsequent analysis. We refer to \cite{Dragomir2003GronwallInequalities} for a detailed review and summarize the results therein below.
\begin{thm}[Gronwall's Inequality, Corollary 3 of \cite{Dragomir2003GronwallInequalities}]
\label{theorem:gronwall}
    Let $u(t) \in \mathbb{R}$ be such that $u(t) \leq c_1 + c_2 \int_0^t u(s) ds$ for $t > 0$ and nonnegative $c_1, c_2$. Then 
    \begin{equation*}
        u(t) \leq c_1 \exp[c_2 t].
    \end{equation*}
\end{thm}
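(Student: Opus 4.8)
The plan is to run the classical auxiliary-function argument. Assuming $u$ is locally integrable on $[0,\infty)$ (so that the right-hand side of the hypothesis is well-defined), I would set $v(t) := c_1 + c_2\int_0^t u(s)\,ds$. Then $v$ is absolutely continuous on any compact interval $[0,T]$, with $v(0) = c_1$ and $v'(t) = c_2\,u(t)$ for almost every $t$. The hypothesis $u(t) \le v(t)$ then gives the differential inequality $v'(t) \le c_2\,v(t)$ a.e.

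Next I would introduce the integrating factor $e^{-c_2 t}$. Since $e^{-c_2 t} > 0$, multiplying through yields $\frac{d}{dt}\bigl(e^{-c_2 t} v(t)\bigr) = e^{-c_2 t}\bigl(v'(t) - c_2 v(t)\bigr) \le 0$ for almost every $t$, and because $t \mapsto e^{-c_2 t} v(t)$ is absolutely continuous, integrating this from $0$ to $t$ gives $e^{-c_2 t} v(t) - v(0) \le 0$, i.e.\ $v(t) \le c_1 e^{c_2 t}$. Combining with $u(t) \le v(t)$ delivers the claim $u(t) \le c_1 e^{c_2 t}$. (For $c_2 = 0$ the conclusion $u(t) \le c_1$ is immediate from the hypothesis, so one may assume $c_2 > 0$.)

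As an alternative that sidesteps any discussion of differentiability, I would iterate the hypothesis: substituting the bound into itself $n$ times produces $u(t) \le c_1\sum_{k=0}^{n-1}\frac{(c_2 t)^k}{k!} + c_2^n\int_{0}^{t}\!\int_{0}^{s_1}\!\cdots\int_{0}^{s_{n-1}} u(s_n)\,ds_n\cdots ds_1$, and if $u$ is bounded by some $M$ on $[0,T]$, the remainder term is at most $M(c_2 t)^n/n!$, which tends to $0$ as $n \to \infty$; since $\sum_{k\ge 0}(c_2 t)^k/k! = e^{c_2 t}$, this again gives $u(t) \le c_1 e^{c_2 t}$ for $t \in [0,T]$, hence for all $t \ge 0$.

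The only real obstacle is regularity bookkeeping, since the cited statement leaves the hypotheses on $u$ implicit: one must fix exactly what is assumed (local integrability suffices for the first argument; boundedness on compact intervals suffices for the iteration argument) so that $v$, respectively the iterated integrals, are well-behaved. Beyond that, the argument is entirely routine, and I would keep the write-up short, pointing to \cite{Dragomir2003GronwallInequalities} for the general treatment.
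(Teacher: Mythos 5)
Your argument is correct: both the integrating-factor proof (via $v(t) := c_1 + c_2\int_0^t u(s)\,ds$, $v' = c_2 u \le c_2 v$, and $\frac{d}{dt}(e^{-c_2 t}v) \le 0$) and the iteration proof are standard, complete derivations of this form of Gronwall's inequality, and you correctly flag the only subtlety, namely the implicit regularity needed on $u$ for the integral hypothesis to make sense. Note, however, that the paper itself does not prove this statement at all; it is imported verbatim as Corollary 3 of the cited reference \citep{Dragomir2003GronwallInequalities} and used as a black box in Appendix E, so there is no in-paper argument to compare against --- your write-up simply supplies the classical proof that the paper delegates to the literature.
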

\begin{thm}[Theorem 57 of \cite{Dragomir2003GronwallInequalities}]
\label{theorem:dragomir}
    Let $u(t) \in \mathbb{R}$ be such that $u(t) \leq c_1 + c_2 \int_0^t \int_0^s u(v) dv ds$ for $t > 0$ and nonnegative $c_1, c_2$. Then 
    \begin{equation*}
        u(t) \leq c_1 \exp[c_2 t^2/2].
    \end{equation*}
\end{thm}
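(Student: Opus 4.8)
The statement is a classical Gronwall-type inequality, and my plan is to prove it by iterating the hypothesis, in the style of a Neumann series. First I would introduce the double-integration operator $(\mathcal I g)(t) := \int_0^t\!\int_0^s g(v)\,dv\,ds$, so that the hypothesis reads $u \le c_1\mathbf 1 + c_2\,\mathcal I u$ pointwise on $[0,\infty)$, with $\mathbf 1$ the constant function $1$. Since $\mathcal I$ is linear and order-preserving ($g\le h$ pointwise implies $\mathcal I g \le \mathcal I h$ pointwise, for the time variable in $[0,\infty)$), I would substitute the bound on $u$ into the right-hand side repeatedly and prove by induction on $n$ that, for every $n\ge 1$ and $t\ge 0$,
\begin{equation*}
u(t)\;\le\; c_1\sum_{k=0}^{n-1} c_2^{\,k}\,(\mathcal I^{\,k}\mathbf 1)(t)\;+\;c_2^{\,n}\,(\mathcal I^{\,n}u)(t).
\end{equation*}

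Next I would carry out two computations. One: an induction using $\mathcal I(t^m) = t^{m+2}/\big((m+1)(m+2)\big)$ shows $(\mathcal I^{\,k}\mathbf 1)(t) = t^{2k}/(2k)!$. Two: under the regularity implicit in the cited hypotheses (it suffices that $u$ be continuous, hence bounded by some $M$ on the compact interval $[0,t]$), order-preservation gives $|(\mathcal I^{\,n}u)(t)| \le M\,(\mathcal I^{\,n}\mathbf 1)(t) = M\,t^{2n}/(2n)!\to 0$ as $n\to\infty$. Letting $n\to\infty$ in the displayed inequality then yields $u(t)\le c_1\sum_{k\ge 0}(c_2 t^2)^{k}/(2k)! = c_1\cosh(\sqrt{c_2}\,t)$, and the elementary bound $(2k)! = (k+1)(k+2)\cdots(2k)\cdot k!\ge 2^{k}k!$ gives $(c_2 t^2)^k/(2k)!\le (c_2 t^2/2)^k/k!$ term by term, so $\cosh(\sqrt{c_2}\,t)\le\exp(c_2 t^2/2)$ and hence $u(t)\le c_1\exp(c_2 t^2/2)$, as claimed.

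The only genuinely delicate point is justifying that the remainder $c_2^{\,n}(\mathcal I^{\,n}u)(t)$ vanishes, which is where some integrability/boundedness of $u$ must actually be used (continuity or local integrability suffices; for an $L^1$ control one replaces $M$ by $\|u\|_{L^1[0,t]}$ and carries one fewer integration in the factorial bookkeeping). As an alternative, I could set $\phi := c_1 + c_2\,\mathcal I u$, note $u\le\phi$, $\phi(0)=c_1$, $\phi'(0)=0$, $\phi''=c_2 u\le c_2\phi$, and then compare $\phi$ with the solution $c_1\cosh(\sqrt{c_2}\,t)$ of the associated linear ODE via a standard differential-inequality argument (multiply $\phi - c_1\cosh(\sqrt{c_2}\,t)$ by a Wronskian-type integrating factor). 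I would present the Neumann-series version, as it is self-contained and keeps the sharp constant $c_2 t^2/2$ transparent.
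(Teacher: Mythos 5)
The paper does not prove this statement; it imports it verbatim as Theorem 57 of \cite{Dragomir2003GronwallInequalities}, so there is no in-paper argument to compare against. Your Neumann-series iteration is correct — the induction, the computation $(\mathcal I^{\,k}\mathbf 1)(t)=t^{2k}/(2k)!$, and the bound $(2k)!\ge 2^k k!$ all check out, and you in fact obtain the sharper conclusion $u(t)\le c_1\cosh(\sqrt{c_2}\,t)$. The one caveat, which you correctly flag yourself, is that some local boundedness or integrability of $u$ is needed to make the remainder $c_2^{\,n}(\mathcal I^{\,n}u)(t)$ vanish; this is harmless here, since in the paper the lemma is only ever applied to continuous functions such as $t\mapsto \lVert w_j(t)-w_j(0)\rVert_2$ arising from the gradient flow.
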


Now we turn to specifics of the architecture we consider. Assume the function $f$ has the architecture of a (scaled) two-layer (single hidden layer) network mapping $f: \mathcal{X} \to \mathcal{Y}$ with $\mathcal{X} \subseteq \mathbb{R}^d$ and $\mathcal{Y} \subseteq \mathbb{R}^q$. We consider this network architecture for a given width $p$, and study each of the $i=1,\dots, q$ coordinate functions of $f$. For a scaled two-layer network, the $i$th such function is
 \begin{equation*}
     f(x; \phi)_i := \frac{1}{\sqrt{p}}\sum_{j=1}^p a_{ij} \sigma\left(x^\top w_j\right) 
 \end{equation*}
for $i=1,\dots, q$, where $\sigma$ denotes an activation function. The scaling depends on the width of the network $p$. The parameters $\phi$ are thus  $\phi = \{w_j, a_{(\cdot),j}\}_{j=1}^p$ where $a_{(\cdot), j}$ denotes the vector $[a_{1j}, \dots, a_{qj}]^\top$ (i.e. the $j$th coefficient for each component function $i$). The individual parameters have dimensions as follows: $w_j \in \mathbb{R}^d$ and $a_{(\cdot),j} \in \mathbb{R}^q$, for all $j=1,\dots, p$, where again $p$ denotes the network width and $d$ the data dimension $\textrm{dim} \ \mathcal{X}$. For ease hereafter, we write $a_j = a_{(\cdot), j}$ to refer to the entire $j$th vector of second layer network coefficients when the context is clear. As is standard, the first layer parameters are initialized as independent standard Gaussian random variables, i.e. $w_j \overset{iid}{\sim} \mathcal{N}(0, I_d)$ for all $j=1,\dots, p$. In other related works, the weight $a_{ij}$ is sometimes also drawn as $a_{ij} \overset{iid}{\sim} \mathcal{N}(0, 1)$ for all $i=1,\dots,q, j=1,\dots,p$, but in this work, we initialize these second-layer weights to zero for simplicity to ensure that at initialization, $f(\cdot; \phi) = 0$. A zero-initialized network function is used for analysis in several related works, e.g. \cite{Chizat2019LazyTraining} and \cite{Ba2020NTKf_init0}. For now, notationally we denote weights to be initialized as draws from an arbitrary distribution $D$, and we introduce specificity to $D$ as required. 

The neural tangent kernel (\Cref{equation:ntk_definition}) can be computed explicitly for this architecture and is given in the lemma below, which proves pointwise convergence to the limiting NTK at initialization as the width $p$ tends to infinity. 

\begin{lemma}[Pointwise Convergence At Initialization]
\label{lemma:pointwise_ntk_convergence}
    For the architecture above, consider any $p$. Let $a \in \mathbb{R}^q, w \in \mathbb{R}^d$ be distributed according to $a, w \sim D$ for some distribution $D$ such that $a,w$ are integrable ($L_1$) random variables. Assume $\mathcal{X}$ is compact, and $\sigma'$ is bounded. Then, provided condition (C4) holds (see below), we have for any $x, \Tilde{x} \in \mathcal{X}$ that
    \begin{equation}
        K_{\phi(0)}^p(x, \Tilde{x}) \overset{a.s.}{\to} \mathbb{E}_D K(x, \Tilde{x}; w, a)
    \end{equation}
    as $p \to \infty$ where $K_{\phi(0)}^p$ denotes the NTK at initialization constructed from draws $a_j, w_j \overset{iid}{\sim} D$ and $K(x, \Tilde{x}; w,a) \in \mathbb{R}^{q \times q}$ is the $q \times q$ matrix whose $k,l$th entry is given by $$\left[ \mathbf{1}_{k=l}\sigma\left(x^\top w \right) \sigma\left(\Tilde{x}^\top w \right) + a_k a_l \sigma'\left(x^\top w\right) \sigma'\left(\Tilde{x}^\top w\right) \left(x^\top \Tilde{x}\right)\right]$$
    for $k,l=1,\dots, q$. 
\end{lemma}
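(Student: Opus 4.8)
The plan is to show that the width-$p$ NTK at initialization is an empirical average over the $p$ hidden units of i.i.d.\ random $q\times q$ matrices whose common expectation is $\mathbb{E}_D K(x,\tilde x;w,a)$, and then to invoke the strong law of large numbers entrywise. Almost all the content lies in (i) the explicit Jacobian computation that identifies the per-unit summand, and (ii) checking that this summand is $D$-integrable so that the strong law applies.

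First I would compute $J_\phi f(x;\phi)$ for the scaled two-layer architecture. With $\phi=\{w_j,a_{(\cdot),j}\}_{j=1}^p$ and $f(x;\phi)_i=\tfrac{1}{\sqrt p}\sum_{j=1}^p a_{ij}\sigma(x^\top w_j)$, the nonzero partials are $\partial f(x;\phi)_i/\partial a_{kj}=\tfrac{1}{\sqrt p}\mathbf{1}_{i=k}\,\sigma(x^\top w_j)$ and $\partial f(x;\phi)_i/\partial w_j=\tfrac{1}{\sqrt p}\,a_{ij}\,\sigma'(x^\top w_j)\,x$, the latter being defined for $x$ with $x^\top w_j\neq 0$ (a $D$-probability-one event when $w_j$ has a density and $x\neq 0$). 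Substituting into $\big(K^p_{\phi(0)}(x,\tilde x)\big)_{kl}=\sum_m \big(\partial f_k(x)/\partial\phi_m\big)\big(\partial f_l(\tilde x)/\partial\phi_m\big)$ and splitting the sum over the second-layer weights and the first-layer weights gives
\begin{equation*}
\big(K^p_{\phi(0)}(x,\tilde x)\big)_{kl}=\frac{1}{p}\sum_{j=1}^p\Big[\mathbf{1}_{k=l}\,\sigma(x^\top w_j)\sigma(\tilde x^\top w_j)+a_{kj}a_{lj}\,\sigma'(x^\top w_j)\sigma'(\tilde x^\top w_j)\,(x^\top\tilde x)\Big],
\end{equation*}
i.e.\ $K^p_{\phi(0)}(x,\tilde x)=\tfrac{1}{p}\sum_{j=1}^p K(x,\tilde x;w_j,a_j)$ with $K(x,\tilde x;w,a)$ exactly as in the statement, and the pairs $(w_j,a_j)$ i.i.d.\ copies of $(w,a)\sim D$.

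It then remains to apply the strong law entrywise. Fixing $x,\tilde x\in\mathcal X$, compactness of $\mathcal X$ gives $\|x\|_2,\|\tilde x\|_2\le R$ and boundedness of $\sigma'$ gives $|\sigma'|\le M$, so the cross term satisfies $|a_{kj}a_{lj}\,\sigma'(x^\top w_j)\sigma'(\tilde x^\top w_j)(x^\top\tilde x)|\le M^2R^2\,|a_{kj}||a_{lj}|$, integrable under the moment hypothesis supplied by condition (C4) (and identically zero under the zero-initialization of the second layer used later, in which case only the first term survives); for the first term, a growth bound on $\sigma$ from (C4) together with $|x^\top w_j|\le R\|w_j\|_2$ reduces $\mathbb{E}_D|\sigma(x^\top w)\sigma(\tilde x^\top w)|$ to a moment of $\|w\|_2$ controlled by (C4). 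Hence each of the $q^2$ scalar sequences $j\mapsto\big(K(x,\tilde x;w_j,a_j)\big)_{kl}$ is i.i.d.\ and $L_1$, so $\big(K^p_{\phi(0)}(x,\tilde x)\big)_{kl}\to\mathbb{E}_D\big(K(x,\tilde x;w,a)\big)_{kl}$ almost surely as $p\to\infty$; intersecting the $q^2$ almost-sure events gives $K^p_{\phi(0)}(x,\tilde x)\to\mathbb{E}_D K(x,\tilde x;w,a)$ a.s.\ in $\mathbb{R}^{q\times q}$, which is the claim.

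The main obstacle is not the probabilistic step — once the summand is integrable, the strong law is immediate — but the integrability check itself: when $\sigma$ is unbounded, as for ReLU, controlling $\sigma(x^\top w)\sigma(\tilde x^\top w)$ requires a growth bound on $\sigma$ plus moments of the initialization distribution $D$ strictly beyond the bare $L_1$ assumption, and this is precisely the role of condition (C4); the compactness of $\mathcal X$ is what makes these bounds uniform in $x,\tilde x$. A minor point to handle carefully is the a.s.\ well-definedness of $\sigma'(x^\top w_j)$ at the kink of ReLU, which is harmless because $\{w:x^\top w=0\}$ is $D$-null whenever $x\neq 0$.
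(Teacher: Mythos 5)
Your proposal is correct and follows essentially the same route as the paper's proof: compute the Jacobian of each coordinate function, recognize $K^p_{\phi(0)}(x,\tilde x)$ as the empirical average $\tfrac{1}{p}\sum_j K(x,\tilde x; w_j, a_j)$ of i.i.d.\ matrices, verify integrability of the summand via compactness of $\mathcal{X}$ and the domination condition (C4), and apply the strong law of large numbers entrywise. Your additional remarks on the ReLU kink and on intersecting the $q^2$ almost-sure events are careful elaborations of points the paper leaves implicit, not a different argument.
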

\begin{proof}
     Consider the $k$th coordinate function of $f$. For any choice of $p$, the gradient is given by \begin{align*}
  \nabla_\phi f_k(x; \phi) &= \begin{bmatrix}
      \frac{\partial f_k(x; \phi)}{\partial a_{k1}} \\
      \vdots \\
      \frac{\partial f_k(x; \phi)}{\partial a_{kp}} \\
      \frac{\partial f_k(x; \phi)}{\partial w_1} \\
      \vdots \\ 
      \frac{\partial f_k(x; \phi)}{\partial w_p} \\
  \end{bmatrix} = \frac{1}{\sqrt{p}}\begin{bmatrix}
     \sigma\left(x^\top w_1\right)  \\
      \vdots \\
       \sigma\left(x^\top w_p\right)  \\
     a_{k1} \sigma'\left(x^\top w_1\right) x \\
      \vdots \\ 
      a_{kp} \sigma'\left(x^\top w_p\right) x \\
  \end{bmatrix} 
\end{align*}
where we have imposed an arbitrary ordering on the parameters. In the above, we omitted partial derivatives $\frac{\partial f_k}{\partial a_{lj}}$ for $l \neq k, j=1,\dots,p$ because these are all identically zero. From this, it follows that for any fixed $x, \Tilde{x} \in \mathcal{X}$, the $k,l$-th entry of $K_{\phi(0)}^p(x, \Tilde{x})$ is given by 
\begin{align*}
\nabla_\phi f_k(x; \phi(0))^\top \nabla_\phi f_l(\tilde{x}; \phi(0)) = &\frac{1}{p} \sum_{j=1}^p  \mathbf{1}_{k=l} \sigma\left(x^\top w_j\right) \sigma\left(\Tilde{x}^\top w_j\right) + \\
&\frac{1}{p} \sum_{j=1}^p a_{kj}a_{lj}\sigma'\left(x^\top w_j\right) \sigma'\left(\Tilde{x}^\top w_j\right) \left(x^\top \Tilde{x}\right). 
\end{align*}

The existence of the limiting NTK follows immediately: for each of the two terms above, each term is integrable by the compactness of $\mathcal{X}$ and domination (see (C4)). It follows that $K_\infty(x, \Tilde{x})$ is the $q \times q$ matrix whose $k,l$th entry is given by
\begin{equation*}
    \mathbb{E}_{w, a \sim D} \left[ \mathbf{1}_{k=l}\sigma\left(x^\top w \right) \sigma\left(\Tilde{x}^\top w \right) + a_k a_l \sigma'\left(x^\top w\right) \sigma'\left(\Tilde{x}^\top w\right) \left(x^\top \Tilde{x}\right)\right]
\end{equation*}
with $w,a \sim D$. Convergence in probability pointwise follows from the weak law of large numbers, and almost sure convergence holds by the strong law of large numbers. $K(x,x; a, w)$ is integrable by the assumption (C4) (see below), so the expectation is well-defined. 
\end{proof}

The proof of the existence and pointwise convergence to the limiting NTK $K_\infty$ above is rather straightforward, and this result has been previously established in other works \citep{Jacot2018NeuralTangentKernel}. For our analysis of kernel gradient flows in \Cref{thm:main_theorem} for the expected forward KL objectives \hyperref[equation:inpe_objective_phi]{$L_P$} and \hyperref[equation:inpe_objective]{$L_F$}, however, we require \textit{uniform} convergence to $K_\infty$ over the entire data space $\mathcal{X}$. 

We establish conditions under which this uniform convergence holds in two results, \Cref{lem:convergence_of_kernel_di} and \Cref{lem:convergence_of_kernel_lt}. \Cref{lem:convergence_of_kernel_di}, given below, concerns convergence at initialization to the limiting neural tangent kernel $K_\infty$ (i.e. before beginning gradient descent). \Cref{lem:convergence_of_kernel_lt}, proven in \Cref{appendix:lazy_training}, demonstrates that across a finite training interval $[0,T]$, the NTK changes minimally from its initial value in a large width regime. Generally, we refer to the first result as ``deterministic initialization'' and the second as ``lazy training'' following related works \citep{Jacot2018NeuralTangentKernel, Chizat2019LazyTraining}.

Below, we give suitable regularity conditions and state and prove \Cref{lem:convergence_of_kernel_di}. 
\begin{itemize}
    \item [(C1)] The data space is $\mathcal{X}$ is compact.
    \item [(C2)] The distribution $D$ is such that $w \sim \mathcal{N}(0, I_d)$ and $a = 0$ w.p. 1. For $j=1,\dots, p$ iid draws from this distribution, we thus have $w_j \overset{iid}{\sim} \mathcal{N}(0,I_d)$ and $a_{ij} = 0$ w.p 1 for all $i,j$. 
    \item [(C3)] The activation function $\sigma$ is continuous. Under (C2), this implies that the function $K(\cdot, \cdot; a, w)$ from \Cref{lemma:pointwise_ntk_convergence} with $a,w \sim D$ is almost surely continuous.
    \item [(C4)] The function $K(x, \Tilde{x}; a,w)$ is dominated by some integrable random variable $G$, i.e. for all $x, \Tilde{x} \in \mathcal{X} \times \mathcal{X}$ we have $||K(x, \Tilde{x}; a,w)||_F \leq G(a,w)$ almost surely for integrable $G(a,w)$.
\end{itemize}

\begin{prop}
\label{lem:convergence_of_kernel_di}
    Fix a scaled two-layer network architecture of width $p$, and let $\Phi$ denote the corresponding parameter space. Initialize $\phi(0)$ as independent, identically distributed random variables drawn from the distribution $D$ in (C2). Let $K_{\phi(0)}^p: \mathcal{X} \times \mathcal{X} \to \mathbb{R}^{q \times q}$ be the mapping defined by $(x, \tilde{x}) \mapsto \ntkargs{x}{\tilde{x}}{\phi(0)} = J_\phi f(x ;\phi(0))J_\phi f(\tilde{x} ;\phi(0))^\top$. Then, provided conditions (C1)--(C4) hold, we have as $p \to \infty$ that
    \begin{equation}
    \label{di}
         \sup_{x, \tilde{x} \in \mathcal{X}} ||K_{\phi(0)}^p(x, \tilde{x}) - K_\infty(x, \Tilde{x})||_F \overset{a.s.}{\to} 0, \tag{DI}
    \end{equation}
     where $K_\infty(x,\Tilde{x}) := \plim_{p \to \infty} K_{\phi(0)}^p(x,\Tilde{x})$ is a fixed, continuous kernel.
\end{prop}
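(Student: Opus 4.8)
The plan is to recognize each entry of the initialization NTK as an empirical average of i.i.d.\ functions indexed by the width $p$, apply the uniform strong law of large numbers over the compact index set $\mathcal{X}\times\mathcal{X}$ (Theorem~\ref{theorem:shapiro}) entrywise, and then combine the $q^2$ coordinate statements into the Frobenius-norm statement.

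Fix output coordinates $k,l\in\{1,\dots,q\}$. By the gradient computation in Lemma~\ref{lemma:pointwise_ntk_convergence}, the $(k,l)$ entry of $K^p_{\phi(0)}(x,\tilde{x})$ equals $\frac1p\sum_{j=1}^p F_{kl}\bigl((x,\tilde{x}),(w_j,a_j)\bigr)$ with
\[
F_{kl}\bigl((x,\tilde{x}),(w,a)\bigr):=\mathbf{1}_{k=l}\,\sigma(x^\top w)\,\sigma(\tilde{x}^\top w)+a_ka_l\,\sigma'(x^\top w)\,\sigma'(\tilde{x}^\top w)\,(x^\top\tilde{x}),
\]
and $(w_j,a_j)\overset{iid}{\sim}D$ by (C2) (so the second summand is in fact identically zero). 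I would apply Theorem~\ref{theorem:shapiro} with the ``decision variable'' taken to be the pair $(x,\tilde{x})$ ranging over $C:=\mathcal{X}\times\mathcal{X}$ --- a compact subset of $\mathbb{R}^{2d}$ by (C1) --- and the ``random element'' taken to be $\xi:=(w,a)\sim D$. Its three hypotheses hold: (i) for almost every $(w,a)$, the map $(x,\tilde{x})\mapsto F_{kl}((x,\tilde{x}),(w,a))$ is continuous, since $x\mapsto x^\top w$ is continuous, $\sigma$ is continuous by (C3), and the $\sigma'$-term vanishes under (C2); (ii) $|F_{kl}((x,\tilde{x}),(w,a))|\le\|K(x,\tilde{x};a,w)\|_F\le G(a,w)$ for the integrable $G$ from (C4), uniformly in $(x,\tilde{x})$; and (iii) the sample $(w_j,a_j)$ is i.i.d.\ by (C2). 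Theorem~\ref{theorem:shapiro} therefore gives that the map $(x,\tilde{x})\mapsto\mathbb{E}_D F_{kl}((x,\tilde{x}),(w,a))$ is finite-valued and continuous on $C$, and that $\frac1p\sum_{j=1}^p F_{kl}(\cdot,(w_j,a_j))$ converges to it with probability one, uniformly on $C$. By Lemma~\ref{lemma:pointwise_ntk_convergence} this limit function is precisely the $(k,l)$ entry of $K_\infty$; in particular $K_\infty$ is a fixed, continuous kernel and $K_\infty(x,\tilde{x})=\plim_{p\to\infty}K^p_{\phi(0)}(x,\tilde{x})$.

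To conclude, let $\Omega_{kl}$ be the probability-one event on which $\sup_{(x,\tilde{x})\in C}\bigl|[K^p_{\phi(0)}(x,\tilde{x})]_{kl}-[K_\infty(x,\tilde{x})]_{kl}\bigr|\to0$. On $\Omega:=\bigcap_{k,l}\Omega_{kl}$, a finite intersection and hence of probability one,
\[
\sup_{(x,\tilde{x})\in C}\bigl\|K^p_{\phi(0)}(x,\tilde{x})-K_\infty(x,\tilde{x})\bigr\|_F\le\Bigl(\sum_{k,l=1}^q\Bigl(\sup_{(x,\tilde{x})\in C}\bigl|[K^p_{\phi(0)}(x,\tilde{x})]_{kl}-[K_\infty(x,\tilde{x})]_{kl}\bigr|\Bigr)^2\Bigr)^{1/2}\xrightarrow[p\to\infty]{}0,
\]
which is the desired statement \eqref{di}.

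I expect the substantive step to be the verification that Theorem~\ref{theorem:shapiro} is applicable here: the key conceptual move is to treat the input pair $(x,\tilde{x})$ --- not the weights --- as the variable over which uniformity is sought, so that $C=\mathcal{X}\times\mathcal{X}$ is compact and the i.i.d.\ randomness comes from the hidden units, and then to check that the domination in (C4) is uniform in $(x,\tilde{x})$. Everything downstream (continuity of the limit, the union bound over the $q^2$ coordinates, and the Frobenius-norm bookkeeping) is routine. One mild technical caveat is that for ReLU the derivative $\sigma'$ is defined only almost everywhere; this is harmless because (C2) removes the $\sigma'$-terms entirely, but it is why the more general case with $a\neq0$ at initialization would additionally require a continuity/domination hypothesis on $\sigma'$, which (C4) already anticipates.
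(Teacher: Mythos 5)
Your proof is correct and takes essentially the same route as the paper's: a direct application of Theorem~\ref{theorem:shapiro} with the pair $(x,\tilde{x})$ as the decision variable ranging over the compact set $\mathcal{X}\times\mathcal{X}$ and $(w,a)\sim D$ as the random element, with hypotheses (i)--(iii) verified from (C1)--(C4). The paper's proof is a one-line invocation of that proposition; your entrywise application, the Frobenius-norm bookkeeping, and the remark about $\sigma'$ being neutralized by the zero initialization in (C2) merely make explicit what the paper leaves implicit.
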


\begin{proof}
    The proof follows by direct application of Proposition 7 of \cite{Shapiro2003MonteCarloSampling}. Precisely, we satisfy i) almost-sure continuity of $K(\cdot, \cdot; a,w)$ by (C3), ii) domination by (C4), and iii) the draws comprising $K_{\phi(0)}^p$ are iid by assumption. By this proposition, then, we have uniform convergence of $K_{\phi(0)}^p$ to $K_\infty$ and obtain continuity of $K_\infty$ as well.
\end{proof}

\section{Lazy Training}
\label{appendix:lazy_training}

Below, we prove several results that will aid in proving the ``lazy training'' result of \Cref{lem:convergence_of_kernel_lt} (see below). Given the same architecture as above in  \Cref{appendix:limiting_ntk} and a fixed width $p$ and time $T > 0$, we will begin by bounding $||w_j(T)-w_j(0)||$ and $||a_{kj}(T)-a_{kj}(0)||, ||a_{lj}(T)-a_{lj}(0)||$ for all $k,l = 1,\dots, q$ and all $j=1, \dots, p$. As in \Cref{appendix:limiting_ntk}, there are several conditions that we impose and use in the following results. (D1)--(D2) are identical to (C1)--(C2), repeated for clarity.

\begin{itemize}
    \item [(D1)] The data space is $\mathcal{X}$ is compact. 
    \item [(D2)] The distribution $D$ is such that $w \sim \mathcal{N}(0, I_d)$ and $a = 0$ w.p. 1. For $j=1,\dots, p$ iid draws from this distribution, we thus have $w_j \overset{iid}{\sim} \mathcal{N}(0,I_d)$ and $a_{ij} = 0$ w.p 1 for all $i,j$. 
     \item [(D3)] The function $\ell(x, \eta) = \mathrm{KL}\left[ P(\Theta \mid X=x) \mid \mid Q(\Theta; \eta) \right]$ is such that $\ell'(x; \eta)$ is bounded uniformly for all $x$ and for all $\eta \in \{f(x; \phi(t)) : t > 0\}$ by a constant $\Tilde{M}$, uniformly over the width $p$. We recall that this notation is shorthand for $\nabla_\eta \ell(x, \eta)$. 
     \item [(D4)] The activation function $\sigma(\cdot)$ is non-polynomial and is Lipschitz with constant $C$. Note that the Lipschitz condition implies $\sigma$ has bounded first derivative i.e. $|\sigma'(r)| \leq C$ for all $r \in \mathbb{R}$.
\end{itemize}
With these conditions in hand, we now prove several lemmas for individual parameters. 

\begin{lemma}[Lazy Training of $w$]
\label{lem:lazy_training_w}
    For the width $p$ scaled two-layer architecture above, assume that conditions (D1)--(D4) hold. Let $\phi$ evolve according to the gradient flow of the objective \hyperref[equation:inpe_objective_phi]{$L_P$}, i.e. 
    \begin{equation*}
        \dot{\phi}(t) = -\nabla_\phi L_P(\phi).
    \end{equation*}
    Fix any $T > 0$. Then for all $j=1,\dots, p$,
    \begin{equation}
        ||w_j(T) - w_j(0)||_2 \leq ||w_j(0)||_2 \cdot D_{p,T} + E_{p,T}
    \end{equation}
    almost surely, where $D_{p,T}, E_{p,T}$ are constants depending on $p,T$, and $\lim_{p \to \infty} D_{p,T} = 0$ and $\lim_{p \to \infty} E_{p,T} = 0$. 
\end{lemma}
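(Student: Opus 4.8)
The plan is to write down the ODE governing the individual first-layer weight $w_j(t)$, bound its time derivative by something of the form $(\text{const})\cdot(\|w_j(t)\|_2 + \text{const})$ with constants that vanish as $p\to\infty$, and then apply the integral form of Gronwall's inequality (Theorem~\ref{theorem:gronwall}) to integrate the bound over $[0,T]$.

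First I would compute $\dot w_j(t) = -\nabla_{w_j} L_P(\phi(t))$. From the architecture $f(x;\phi)_k = \frac{1}{\sqrt p}\sum_{j} a_{kj}\sigma(x^\top w_j)$ and the chain rule (as in the gradient computation of Lemma~\ref{lemma:pointwise_ntk_convergence}), the gradient of the loss with respect to $w_j$ is
\begin{equation*}
    \dot w_j(t) = -\frac{1}{\sqrt p}\,\mathbb{E}_{P(X)}\Big[\sum_{k=1}^q \ell'_k(X, f_t(X))\, a_{kj}(t)\,\sigma'(X^\top w_j(t))\,X\Big],
\end{equation*}
where $\ell'_k$ is the $k$th coordinate of $\ell'(x,\eta)=\nabla_\eta\ell(x,\eta)$. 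Taking norms and using (D3) to bound $\|\ell'\|$ by $\tilde M$, (D4) to bound $|\sigma'|\le C$, and (D1) to bound $\|X\|_2\le R:=\sup_{x\in\mathcal X}\|x\|_2<\infty$, I get
\begin{equation*}
    \|\dot w_j(t)\|_2 \;\le\; \frac{C\tilde M R}{\sqrt p}\sum_{k=1}^q |a_{kj}(t)| \;\le\; \frac{C\tilde M R\sqrt q}{\sqrt p}\,\|a_j(t)\|_2 .
\end{equation*}
So the first obstacle is that this bound involves $\|a_j(t)\|_2$, not $\|w_j(t)\|_2$ — the two sets of weights are coupled. To close the loop I would first establish a crude a priori bound on $\|a_j(t)\|_2$: since $\dot a_{kj}(t) = -\frac{1}{\sqrt p}\mathbb{E}_{P(X)}[\ell'_k(X,f_t(X))\sigma(X^\top w_j(t))]$, and $|\sigma(r)|\le |\sigma(0)| + C|r|$ by the Lipschitz property (D4), we get $\|\dot a_j(t)\|_2 \le \frac{\sqrt q\,\tilde M}{\sqrt p}(|\sigma(0)| + C\|w_j(t)\|_2 R)$. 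Substituting this and integrating, one obtains a coupled system of integral inequalities for $u(t):=\|w_j(t)-w_j(0)\|_2$ and $v(t):=\|a_j(t)-a_j(0)\|_2$; using $a_j(0)=0$ by (D2) and $\|w_j(t)\|_2\le\|w_j(0)\|_2 + u(t)$, this becomes a single inequality of the form $u(T)\le c_1(p,T) + c_2(p,T)\int_0^T u(s)\,ds$ after one more integration (or directly via Theorem~\ref{theorem:dragomir} for the double-integral form), with $c_1(p,T) = O(1/\sqrt p)\cdot(\text{terms linear in }\|w_j(0)\|_2 \text{ and constants, polynomial in }T)$ and $c_2(p,T)=O(1/p)$.

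Applying Gronwall (Theorem~\ref{theorem:gronwall}) then yields $u(T)\le c_1(p,T)\exp[c_2(p,T)T]$, which I would finally rearrange into the claimed form $\|w_j(T)-w_j(0)\|_2 \le \|w_j(0)\|_2\cdot D_{p,T} + E_{p,T}$ by collecting the $\|w_j(0)\|_2$-proportional part into $D_{p,T}$ and the rest into $E_{p,T}$; since $c_1(p,T)\to 0$ and $c_2(p,T)\to 0$ as $p\to\infty$ for fixed $T$, both $D_{p,T}\to 0$ and $E_{p,T}\to 0$. Everything here is almost sure because $w_j(0)$ is a fixed (random) draw and the bounds hold pathwise. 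The main obstacle is handling the $w$–$a$ coupling cleanly — I expect the cleanest route is to bound $\|a_j(t)\|_2$ over $[0,T]$ in terms of $\sup_{s\le T}\|w_j(s)\|_2$, feed that back in, and resolve the resulting Gronwall-type inequality; care is needed to ensure the constants are uniform in $p$ (which is where (D3)'s uniformity-over-$p$ clause on the bound $\tilde M$ is essential) and that the $\frac{1}{\sqrt p}$ and $\frac1p$ factors are tracked correctly so the limits genuinely vanish.
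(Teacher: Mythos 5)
Your proposal is correct and follows essentially the same route as the paper's proof: bound $\|\dot w_j\|$ by $O(p^{-1/2})\|a_j(t)\|_2$, bound $\|a_j(t)\|_2$ in turn via its own flow and $a_j(0)=0$ from (D2), and resolve the resulting double-integral inequality with the Gronwall variant of Theorem~\ref{theorem:dragomir}. The only cosmetic differences are your extra $\sqrt q$ factor from bounding $\sum_k|a_{kj}|$ and your stated $O(1/\sqrt p)$ for $c_1$ (the composition of the two $1/\sqrt p$ factors actually gives $O(1/p)$, as in the paper), neither of which affects the conclusion.
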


\begin{proof}
First note that for any fixed $j$, we have
\begin{equation*}
    J_{w_j}f(x; \phi) = \begin{bmatrix}
        \nabla_{w_j} f_1(x; \phi)^\top \\
        \vdots \\
        \nabla_{w_j} f_q(x; \phi)^\top
    \end{bmatrix} = \frac{1}{\sqrt{p}} \begin{bmatrix}
       a_{1j}\sigma'\left(x^\top w_j\right) x^\top \\
        \vdots \\
         a_{qj}\sigma'\left(x^\top w_j\right) x^\top
    \end{bmatrix} \in \mathbb{R}^{q \times d}
\end{equation*}
as required, where $a_{ij} \in \mathbb{R}$ for $i=1,\dots, q$ and $x \in \mathcal{X} \subseteq \mathbb{R}^{d}$ from (D1). We can bound the operator 2-norm of this matrix by observing that for any $y \in \mathbb{R}^d$ we have
\begin{align*}
    || J_{w_j}f(x; \phi) y||^2_2 &= \frac{1}{p} \cdot \left(\sum_{i=1}^q a_{ij}^2\right) \cdot\sigma'\left(x^\top w_j\right)^2(x^\top y)^2 \\
    &\leq \frac{C^2}{p} ||a_j||_2^2 \cdot ||x||_2^2 \cdot ||y||_2^2  \ \ \textrm{by (D4) and Cauchy-Schwarz} \\
    \implies & ||J_{w_j}f(x; \phi)||_2 \leq \frac{C}{\sqrt{p}}||a_j||_2
\end{align*}
by observing $||x||_2^2$ is bounded by (D1) (and we absorb this term into the constant $C$). By similar computations, we also have
\begin{equation*}
    J_{a_j}f(x; \phi) = \begin{bmatrix}
        \nabla_{a_j} f_1(x; \phi)^\top \\
        \vdots \\
        \nabla_{a_j} f_q(x; \phi)^\top
    \end{bmatrix} = \frac{1}{\sqrt{p}}\mathrm{diag} \begin{bmatrix}
     \sigma\left(x^\top w_j\right) \\
        \vdots \\
         \sigma\left(x^\top w_j\right)
    \end{bmatrix} \in \mathbb{R}^{q \times q}.
\end{equation*}
Using condition (D4), it follows that
\begin{align*}
    ||J_{a_j}f(x; \phi)||_2 &\leq \frac{|\sigma(x^\top w_j)|}{\sqrt{p}} \\
    &\leq \frac{|\sigma(0)| + C |x^\top w_j|}{\sqrt{p}} \\
    &\overset{def}{=} \frac{K + C |x^\top w_j|}{\sqrt{p}} \\
    &\leq \frac{K + C||w_j||_2}{\sqrt{p}}
\end{align*}
by Cauchy-Schwarz and (D1),(D4), where throughout the following we let $K := |\sigma(0)|$ and again $||x||$ terms are absorbed into the constant $C$. Now we will use these computations to bound the variation on $w_j$ across the interval $(0,T]$. Fix any $t \in (0,T]$. Then we have
\begin{align*}
     || w_j(t) - w_j(0)||_2 &\leq \int_0^t ||\dot{w}_j(s)|| ds \\
     &\leq \int_0^t \mathbb{E}_{P(X)} ||J_{w_j} f(X; \phi(s)) \ell'(X, f(X; \phi(s))) ||_2 ds \\
     &\leq \Tilde{M} \int_0^t \mathbb{E}_{P(X)} ||J_{w_j} f(X; \phi(s)) ||_2 ds \ \ \textrm{by (D3)} \\
     &\leq \frac{C \Tilde{M}}{\sqrt{p}}\int_0^t  ||a_j(s)||_2 ds \ \ \textrm{by above work}\\
     &\overset{a.s.}{=} \frac{C\Tilde{M}}{\sqrt{p}} \int_0^t ||a_j(s) - a_j(0)||_2 ds \ \ \textrm{by (D2)} \\
     &\leq \frac{C\Tilde{M}}{\sqrt{p}} \int_0^t \int_{0}^s ||\dot{a}_j(v)||_2 dv ds\\
     &\leq \frac{C\Tilde{M}}{\sqrt{p}} \int_0^t \int_{0}^s \mathbb{E}_{P(X)} ||J_{a_j}f(X; \phi)||_2 ||\ell'(X, f(X; \phi(v)))||_2 dv ds\\
      &\leq \frac{C\Tilde{M}^2}{\sqrt{p}} \int_0^t \int_{0}^s \mathbb{E}_{P(X)} ||J_{a_j}f(X; \phi)||_2 dv ds\ \ \textrm{by (D3)} \\
      &\leq  \frac{C\Tilde{M}^2Kt^2}{2p} + \frac{C^2\Tilde{M}^2}{p} \int_0^t \int_{0}^s  ||w_j(v)||_2 dv ds \ \ \textrm{by above work}\\
    &\leq \frac{C\Tilde{M}^2Kt^2}{2p} + \frac{C^2\Tilde{M}^2}{p} \int_0^t \int_{0}^s  ||w_j(v)-w_j(0)||_2 + ||w_j(0)||_2 dv ds \\
    &= \frac{C\Tilde{M}^2Kt^2}{2p} + \frac{C^2\Tilde{M}^2 t^2}{2p} ||w_j(0)||_2 + \frac{C^2\Tilde{M}^2}{p}  \int_0^t \int_{0}^s  ||w_j(v)-w_j(0)||_2 dv ds \\
    &\leq\frac{C\Tilde{M}^2KT^2}{2p} + \frac{C^2\Tilde{M}^2 T^2}{2p} ||w_j(0)||_2 + \frac{C^2\Tilde{M}^2}{p}  \int_0^t \int_{0}^s  ||w_j(v)-w_j(0)||_2 dv ds \\
    &= c_1 + \int_0^t \int_{0}^s  c_2 ||w_j(v)-w_j(0)||_2 dv ds 
\end{align*}
with $c_1 = \frac{C\Tilde{M}^2KT^2}{2p}+\frac{C^2\Tilde{M}^2 T^2}{2p} ||w_j(0)||_2$ and $c_2 = \frac{C^2\Tilde{M}^2}{p}$. Note that even though $c_1$ depends on $T$, this is constant as $T$ is fixed. We write these quantities in this way to recognize a Gronwall-type inequality that we can use to bound the left hand side. Indeed, by direct application of Theorem 57 of \cite{Dragomir2003GronwallInequalities} (see \Cref{theorem:dragomir}) we have that
\begin{align*}
    || w_j(t) - w_j(0)||_2 &\leq c_1 \exp \left[ \int_0^t \int_0^s c_2 dv ds \right] \\
    &= c_1 \exp \frac{c_2 t^2}{2} \\
    &=\left( \frac{C\Tilde{M}^2KT^2}{2p}+\frac{C^2\Tilde{M}^2 T^2}{2p} ||w_j(0)||_2 \right)\exp \left[ \frac{C^2\Tilde{M}^2 t^2}{2p} \right]. 
\end{align*}

giving the result for $t=T$ if we take $D_{p,T} = \frac{C^2\Tilde{M}^2 T^2}{2p} \exp \left[ \frac{C^2\Tilde{M}^2 T^2}{2p} \right]$ and $E_{p,T} = \frac{C\Tilde{M}^2KT^2}{2p} \exp \left[ \frac{C^2\Tilde{M}^2 T^2}{2p} \right]$. Clearly these constants satisfy $\lim_{p \to \infty} D_{p,T} = 0$ and $\lim_{p \to \infty} E_{p,T} = 0$ for any fixed $T$.

\end{proof}

\begin{lemma}[Lazy Training of $a$]
\label{lem:lazy_training_a}
    Under the same conditions as \Cref{lem:lazy_training_w}, let $\phi$ evolve according to the gradient flow of problem \hyperref[equation:inpe_objective_phi]{$L_P$}, i.e. 
    \begin{equation*}
        \dot{\phi}(t) = -\nabla_\phi L_P(\phi).
    \end{equation*}
    Fix any $T > 0$. Then we have for any $j$ that
    \begin{equation}
        ||a_j(T)||_2 \leq ||w_j(0)||_2 \cdot F_{p,T} + G_{p,T}
    \end{equation}
    almost surely, where $E_{p,T}$ and $F_{p,T}$ are constants depending on $p,T$ satisfying $\lim_{p \to \infty} E_{p,T} = 0$ and $\lim_{p \to \infty} F_{p,T} = 0$.
\end{lemma}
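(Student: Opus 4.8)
The plan is to mirror the proof of \Cref{lem:lazy_training_w} and, in fact, to reuse two of its intermediate estimates almost verbatim. Since (D2) forces $a_j(0) = 0$ almost surely, controlling $\|a_j(T)\|_2$ is the same as controlling the total displacement $\|a_j(T) - a_j(0)\|_2$ of the second-layer weights along the $L_P$ gradient flow over $[0,T]$, and the bound on $\|w_j(\cdot)\|_2$ supplied by \Cref{lem:lazy_training_w} is precisely what is needed to close the argument.

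First I would write, for the fixed coordinate $j$,
\begin{equation*}
    \|a_j(T)\|_2 \;\le\; \int_0^T \|\dot a_j(s)\|_2\, ds \;\le\; \int_0^T \mathbb{E}_{P(X)}\, \|J_{a_j} f(X;\phi(s))\|_2\, \|\ell'(X, f(X;\phi(s)))\|_2\, ds,
\end{equation*}
using the gradient-flow ODE together with the triangle/Jensen and submultiplicativity inequalities exactly as in \Cref{lem:lazy_training_w} (the interchange of $\nabla_\phi$ with $\mathbb{E}_{P(X)}$ being justified by the same Leibniz/domination argument valid under (D1)--(D4)). Condition (D3) replaces $\|\ell'(\cdot,\cdot)\|_2$ by the constant $\Tilde{M}$, and the estimate $\|J_{a_j} f(x;\phi)\|_2 \le (K + C\|w_j\|_2)/\sqrt{p}$ already derived in \Cref{lem:lazy_training_w} (with $K := |\sigma(0)|$ and $C$ the Lipschitz constant from (D4), with $\sup_x\|x\|_2$ absorbed) then gives
\begin{equation*}
    \|a_j(T)\|_2 \;\le\; \frac{\Tilde{M} K\, T}{\sqrt p} \;+\; \frac{C\Tilde{M}}{\sqrt p}\int_0^T \|w_j(s)\|_2\, ds.
\end{equation*}

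It remains to bound $\int_0^T \|w_j(s)\|_2\, ds$. Using $\|w_j(s)\|_2 \le \|w_j(0)\|_2 + \|w_j(s) - w_j(0)\|_2$ and the fact that the bound of \Cref{lem:lazy_training_w} holds with the same constants $D_{p,T}, E_{p,T}$ for every $s \in (0,T]$, one gets $\|w_j(s)\|_2 \le \|w_j(0)\|_2(1 + D_{p,T}) + E_{p,T}$, hence $\int_0^T \|w_j(s)\|_2\, ds \le T\bigl(\|w_j(0)\|_2(1+D_{p,T}) + E_{p,T}\bigr)$. Substituting,
\begin{equation*}
    \|a_j(T)\|_2 \;\le\; \|w_j(0)\|_2 \cdot \frac{C\Tilde{M} T}{\sqrt p}\bigl(1 + D_{p,T}\bigr) \;+\; \frac{\Tilde{M} K T}{\sqrt p} \;+\; \frac{C\Tilde{M} T}{\sqrt p}\, E_{p,T},
\end{equation*}
so the claim holds with $F_{p,T} := \tfrac{C\Tilde{M} T}{\sqrt p}(1 + D_{p,T})$ and $G_{p,T} := \tfrac{\Tilde{M} K T}{\sqrt p} + \tfrac{C\Tilde{M} T}{\sqrt p} E_{p,T}$; since $p^{-1/2} \to 0$ and $D_{p,T}, E_{p,T} \to 0$ as $p \to \infty$ by \Cref{lem:lazy_training_w}, both $F_{p,T}$ and $G_{p,T}$ vanish in the wide-width limit.

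I do not anticipate a real obstacle: the only genuine subtlety, namely that the $a$-displacement is governed by $\|w_j\|_2$ while the $w$-displacement is in turn governed by $\|a_j\|_2$, was already resolved inside \Cref{lem:lazy_training_w}, where the Dragomir--Gronwall inequality (\Cref{theorem:dragomir}) was invoked precisely to untangle that feedback; here one simply reads off the resulting $w$-bound and integrates once more. The one thing to reconcile in the write-up is the (evidently typographical) inconsistency in the statement, which names the constants $E_{p,T}, F_{p,T}$ while the displayed bound uses $F_{p,T}, G_{p,T}$; I would adopt the $F_{p,T}, G_{p,T}$ labels as above.
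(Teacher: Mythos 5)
Your proposal is correct and follows essentially the same route as the paper's proof: reduce $\|a_j(T)\|_2$ to the displacement via (D2), integrate $\|\dot a_j\|_2$ using the bound $\|J_{a_j}f\|_2 \le (K + C\|w_j\|_2)/\sqrt{p}$ and (D3), and then control $\int_0^T\|w_j(s)\|_2\,ds$ via the lazy-training bound on $w_j$; the paper integrates $D_{p,s}, E_{p,s}$ over $s$ where you use their monotone majorants $D_{p,T}, E_{p,T}$, an immaterial difference. You also correctly identify the $E_{p,T}$ vs.\ $G_{p,T}$ naming slip in the lemma statement.
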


\begin{proof}
    We will use much of the same work as in \Cref{lem:lazy_training_w}. Namely, $||a_j(t)||_2 = ||a_j(t)-a_j(0)||_2$ almost surely by (D2), and thereafter for any $t \in (0,T]$ we have
    \begin{align*}
     ||a_j(t) - a_j(0)||_2  &\leq \int_{0}^t ||\dot{a}_j(v)||_2 ds\\
     &\leq\frac{1}{\sqrt{p}} \int_0^t  \mathbb{E}_{P(X)} ||J_{a_j}f(X; \phi)||_2 ||\ell'(X, f(X; \phi(v)))||_2  ds\\
      &\leq\frac{\Tilde{M}}{\sqrt{p}} \int_0^t  \mathbb{E}_{P(X)} ||J_{a_j}f(X; \phi)||_2 ds\\
      &\leq \frac{K\Tilde{M}t}{p} + \frac{\Tilde{M} C}{p} \int_0^t   ||w_j(s)||_2 ds \ \ \textrm{by work in \Cref{lem:lazy_training_w}}\\
      &\leq \frac{K\Tilde{M}t}{p} + \frac{\Tilde{M}C}{p} \int_0^t  ||w_j(s)-w_j(0)||_2 + ||w_j(0)||_2 ds \\
    &\leq \frac{K\Tilde{M}t}{p} + \frac{\Tilde{M}C t}{p} ||w_j(0)||_2 + \frac{ \Tilde{M}C}{p} \int_0^t D_{p,s} ||w_j(0)||_2 + E_{p,s} ds \ \ \textrm{by \Cref{lem:lazy_training_w}} \\
    &\leq \frac{K\Tilde{M}t}{p} + \frac{\Tilde{M}C t}{p} ||w_j(0)||_2 + \frac{\Tilde{M}C}{p} \int_0^t E_{p,s} ds + \frac{\Tilde{M}C}{p} ||w_j(0)||_2  \int_0^t D_{p,s} ds\\
    &= ||w_j(0)||_2\left(\frac{\Tilde{M}C t}{p} + \frac{\Tilde{M}C}{p} \int_0^t D_{p,s} ds\right) + \left(\frac{K\Tilde{M}t}{p} + \frac{\Tilde{M}C}{p} \int_0^t E_{p,s} ds \right) \\
    &\overset{def}{=}||w_j(0)||_2 \cdot F_{p,t} + G_{p,t}.
\end{align*}
Clearly, these constants satisfy $\lim_{p \to \infty} F_{p,t} \to 0$ and $\lim_{p \to \infty} G_{p,t} \to 0$ (to see this, simply plug in the forms of $D_{p,s}$ and $E_{p,s}$ from \Cref{lem:lazy_training_w} above) and we have the result by taking $t=T$.

\end{proof}

Now with these results in hand, we may state and prove \Cref{lem:convergence_of_kernel_lt}, given below. 

\begin{prop}
\label{lem:convergence_of_kernel_lt}
    Under the same conditions as \Cref{lem:convergence_of_kernel_di}, fix any $T > 0$. For any $t \in (0,T]$ let $K_{\phi(t)}^p: \mathcal{X} \times \mathcal{X} \to \mathbb{R}^{q \times q}$ be the mapping defined by $(x, \tilde{x}) \mapsto \ntkargs{x}{\tilde{x}}{\phi(t)} = J_\phi f(x ;\phi(t))J_\phi f(\tilde{x} ;\phi(t))^\top$. Then provided conditions (D1)--(D4) hold, we have as $p \to \infty$ that
    \begin{equation}
    \label{lt}
         \sup_{x, \tilde{x} \in \mathcal{X}, t \in (0,T]} ||K_{\phi(t)}^p(x, \tilde{x}) - K_{\phi(0)}^p(x, \Tilde{x})||_F \overset{a.s.}{\to} 0. \tag{LT}
    \end{equation}
\end{prop}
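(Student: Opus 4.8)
The plan is to reduce everything to the neuron-wise formula for the kernel from \Cref{lemma:pointwise_ntk_convergence} and to bound the difference entry by entry, exploiting the zero initialization of the second layer (D2). Writing $w_j(t),a_{\cdot j}(t)$ for the parameters along the flow, the $(k,l)$ entry of $K^p_{\phi(t)}(x,\tilde x)-K^p_{\phi(0)}(x,\tilde x)$ is
\begin{align*}
&\frac{1}{p}\sum_{j=1}^p \mathbf{1}_{k=l}\bigl[\sigma(x^\top w_j(t))\sigma(\tilde x^\top w_j(t))-\sigma(x^\top w_j(0))\sigma(\tilde x^\top w_j(0))\bigr]\\
&\qquad+\ \frac{x^\top\tilde x}{p}\sum_{j=1}^p a_{kj}(t)\,a_{lj}(t)\,\sigma'(x^\top w_j(t))\,\sigma'(\tilde x^\top w_j(t)),
\end{align*}
where the second sum is absent at $t=0$ because $a_{ij}(0)=0$ by (D2). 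Since there are only $q^2$ entries, it suffices to bound each of the two sums uniformly over $x,\tilde x\in\mathcal X$ and $t\in(0,T]$ by a quantity that tends to $0$ almost surely as $p\to\infty$; the Frobenius norm is then at most $q$ times this bound.

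For the first sum I would use $|ab-a'b'|\le|a|\,|b-b'|+|b'|\,|a-a'|$ together with the Lipschitz bound $|\sigma(r)-\sigma(r')|\le C|r-r'|$ and the affine-growth bound $|\sigma(r)|\le |\sigma(0)|+C|r|$ from (D4), plus $\|x\|,\|\tilde x\|$ bounded by (D1). This controls each summand by a degree-$\le 2$ polynomial in $\|w_j(0)\|_2$ times $\|w_j(t)-w_j(0)\|_2$. \Cref{lem:lazy_training_w} then gives $\|w_j(t)-w_j(0)\|_2\le\|w_j(0)\|_2\,D_{p,T}+E_{p,T}$ for all $t\in(0,T]$ (the Gronwall estimate there is monotone in $t$, so the $t=T$ constants dominate), with $D_{p,T},E_{p,T}\to0$. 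Hence the first sum is at most a deterministic sequence vanishing in $p$ times $\frac1p\sum_j P(\|w_j(0)\|_2)$ for a fixed polynomial $P$. For the second sum I would bound $|\sigma'|\le C$ by (D4), $|x^\top\tilde x|$ by (D1), and $|a_{kj}(t)a_{lj}(t)|\le\|a_{\cdot j}(t)\|_2^2\le 2\|w_j(0)\|_2^2F_{p,T}^2+2G_{p,T}^2$ by \Cref{lem:lazy_training_a}, again yielding a vanishing deterministic sequence times $\frac1p\sum_j(1+\|w_j(0)\|_2^2)$.

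To finish, I invoke the strong law of large numbers: since $w_j(0)\overset{iid}{\sim}\mathcal N(0,I_d)$ has finite moments of all orders, $\frac1p\sum_j\|w_j(0)\|_2^m\to\mathbb E\|w_1(0)\|_2^m<\infty$ almost surely for $m\le 2$, so $\frac1p\sum_j P(\|w_j(0)\|_2)$ is almost surely bounded in $p$. An almost-surely bounded sequence multiplied by a deterministic sequence converging to $0$ converges to $0$ almost surely; since every bound above is uniform in $x,\tilde x$ and $t$ by construction, the supremum converges. (Note the non-polynomial part of (D4) is not needed here — only the Lipschitz/affine-growth bounds are used.)

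\textbf{Main obstacle.} The one point requiring care is that the argument must be made pathwise: on the full-probability event where the lazy-training estimates of \Cref{lem:lazy_training_w,lem:lazy_training_a} hold for every neuron $j$ and every $t\le T$ \emph{and} the empirical moment averages converge, the kernel-difference bound becomes a purely deterministic estimate, uniform in $x,\tilde x,t$, whose leading factor vanishes. Confirming that the lazy-training constants dominate the variation for all $t\in(0,T]$ rather than just at $t=T$, and tracking the per-neuron bounds precisely enough (degree $\le 2$ in $\|w_j(0)\|_2$) so that the $\frac1p$-average stays finite, are where the bookkeeping is delicate; there is no fundamentally hard step once the neuron-wise decomposition and the two lazy-training lemmas are in hand.
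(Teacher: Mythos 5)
Your proposal is correct and follows essentially the same route as the paper's proof: the same entry-wise decomposition into the $\sigma\sigma$ and $aa\sigma'\sigma'$ sums (with the latter vanishing at $t=0$ by (D2)), the same product-difference and Lipschitz bounds reducing everything to the lazy-training estimates of \Cref{lem:lazy_training_w,lem:lazy_training_a}, the same strong-law argument for the empirical moments of $\|w_j(0)\|_2$, and the same observation that monotonicity of the constants in $t$ and the $x,\tilde x$-independence of the bounds give the uniform supremum. No gaps.
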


\begin{proof}
   Let us examine the $k,l$th term of the $q \times q$ matrix given by $K_{\phi(t)}^p (x, \tilde{x}) - K_{\phi(0)}^p(x, \Tilde{x})$ for fixed $x, \Tilde{x}$, and some $t \in (0,T]$. The $k,l$th term is given by (see the work in \Cref{appendix:limiting_ntk}):
    \begin{align}
       \color{red} \frac{1}{p} \sum_{j=1}^p \mathbf{1}_{k=l} \bigg{(}\sigma\left(x^\top w_j(t)\right) \sigma\left(\tilde{x}^\top w_j(t)\right) \bigg{)} -  \\
       \color{red} \bigg{(}\sigma\left(x^\top w_j(0)\right)  \sigma\left(\tilde{x}^\top w_j(0)\right) \bigg{)} \\
       + \color{blue} \frac{1}{p} \sum_{j=1}^p \bigg{(} a_{kj}(t)a_{lj}(t)\sigma'\left(x^\top w_j(t)\right)\sigma'\left(\tilde{x}^\top w_j(t)\right) \left(x^\top \Tilde{x}\right) \bigg{)} - \\
       \color{blue} \bigg{(} a_{kj}(0)a_{lj}(0)\sigma'\left(x^\top w_j(0)\right)\sigma'\left(\tilde{x}^\top w_j(0)\right) \left(x^\top \Tilde{x}\right) \bigg{)}. 
    \end{align}

Above, we have explicitly made clear the dependence of the parameters on time, e.g. $w_j(t)$ vs. $w_j(0)$. We aim to show that the quantity above tends to zero as $p \to \infty$. We first prove this holds pointwise, and will consider the red and blue terms one at a time for a fixed $x, \Tilde{x}$. 

First consider the $j$th summand of the red term. We will bound its absolute value. If $k \neq l$, we're done, so assume $k=l$.  We have for any $j$ that
\begin{align*}
    &\bigg{|}\sigma\left(x^\top w_j(t)\right) \sigma\left(\tilde{x}^\top w_j(t)\right) - \sigma\left(x^\top w_j(0)\right)  \sigma\left(\tilde{x}^\top w_j(0)\right) \bigg{|} \\
    &=  \bigg{|}\sigma\left(x^\top w_j(t)\right) \sigma\left(\tilde{x}^\top w_j(t)\right) - \sigma\left(x^\top w_j(t)\right) \sigma\left(\tilde{x}^\top w_j(0)\right) + \\
    &\hspace{.5cm}\sigma\left(x^\top w_j(t)\right) \sigma\left(\tilde{x}^\top w_j(0)\right) - \sigma\left(x^\top w_j(0)\right)  \sigma\left(\tilde{x}^\top w_j(0)\right) \bigg{|} \\
    &\leq |\sigma\left(x^\top w_j(t)\right)|\cdot |\sigma\left(\tilde{x}^\top w_j(t)\right) - \sigma\left(\tilde{x}^\top w_j(0)\right)| + | \sigma\left(\tilde{x}^\top w_j(0)\right)|\cdot |\sigma\left(x^\top w_j(t)\right) - \sigma\left(x^\top w_j(0)\right) |
\end{align*}
and by the Lipschitz assumption on $\sigma(\cdot)$ and Cauchy-Schwarz, we can bound the quantity above as follows
\begin{align*}
    &\leq \left(K + C||x||_2||w_j(t)||_2\right) \cdot C ||\tilde{x}||_2 ||w_j(t)-w_j(0)||_2 + \left(K + C||x||_2||w_j(0)||_2\right) \cdot C ||x||_2 ||w_j(t)-w_j(0)||_2 \\
    &= C^2  ||w_j(t)-w_j(0)||_2 \left(2\frac{K}{C} + ||w_j(t)||_2 + ||w_j(0)||_2\right) \ \ \textrm{since $||x||_2, ||\Tilde{x}||_2$ are bounded by (D1)}\\
    &\leq C^2  ||w_j(t)-w_j(0)||_2 \left(2\frac{K}{C} + ||w_j(t)-w_j(0)||_2 + 2||w_j(0)||_2\right)  \ \ \textrm{by triangle inequality} \\
    &= 2CK||w_j(t)-w_j(0)||_2 + C^2 ||w_j(t)-w_j(0)||_2^2 + 2 C^2 ||w_j(0)||_2 ||w_j(t)-w_j(0)||_2
\end{align*}
where again $||x||_2$ has been absorbed into the constant $C$ by (D1). Using \Cref{lem:lazy_training_w}, we can bound all terms above almost surely using $||w_j(0)||_2$ as follows.
\begin{align*}
    &\leq 2CK \left(D_{p,t} ||w_j(0)||_2 + E_{p,t}\right) + C^2 \left(D_{p,t}||w_j(0)||_2 + E_{p,t} \right)^2 + 2 C^2 \left( D_{p,t}||w_j(0)||_2^2 + E_{p,t} ||w_j(0)||_2 \right) \\
    &=  \left(2C^2 D_{p,t} + C^2 D_{p,t}^2 \right) ||w_j(0)||_2^2 + \left(2CK D_{p,t} + 2C^2D_{p,t}E_{p,t} + 2C^2E_{p,t}\right) ||w_j(0)||_2 + \left( 2CK E_{p,t} + C^2 E_{p,t}^2\right) 
\end{align*}

Recalling that $w_j(0) \overset{iid}{\sim} \mathcal{N}(0, I_d)$, we have that $||w_j(0)||_2$ and $||w_j(0)||_2^2$ are integrable with expectations denoted $\mu$ and $\nu$, respectively. All our work has allowed us to show that
\begin{align*}
   &\bigg{|}  \frac{1}{p} \sum_{j=1}^p \bigg{(}\sigma\left(x^\top w_j(t)\right) \sigma\left(\tilde{x}^\top w_j(t)\right) - \sigma\left(x^\top w_j(0)\right)  \sigma\left(\tilde{x}^\top w_j(0)\right)\bigg{)}  \bigg{|} \\ 
        &\leq \frac{1}{p} \sum_{j=1}^p \left(2C^2D_{p,t} + C^2 D_{p,t}^2 \right) ||w_j(0)||_2^2 + \left(2CKD_{p,t} + 2C^2D_{p,t}E_{p,t} + 2^2CE_{p,t}\right) ||w_j(0)||_2 \\
        &\hspace{.5cm}+ \left( 2CK E_{p,t} + C^2 E_{p,t}^2\right)  \\
        &\overset{a.s.}{\to} \left(\lim_{p \to \infty} 2C^2D_{p,t} + C^2D_{p,t}^2 \right) \nu +\left(\lim_{p \to \infty} 2CKD_{p,t} + 2C^2D_{p,t}E_{p,t} + 2C^2E_{p,t}\right) \mu \\
        &\hspace{.5cm}+ \left(\lim_{p \to \infty} 2CK E_{p,t} + C^2 E_{p,t}^2\right) \\
        &= 0
\end{align*}

by conditions on $D_{p,t}$ and $E_{p,t}$ from \Cref{lem:lazy_training_w}, the strong law of large numbers, and the classic result from analysis that $\lim_{n \to \infty} a_n b_n = \left(\lim_{n \to \infty} a_n \right) \left(\lim_{n \to \infty} b_n \right)$, provided both limits on the right hand side exist. Lastly, we can achieve the same result for the blue term quickly. Because $a_{ij}(0) = 0$ w.p. 1 by (D2), we have almost surely that
\begin{align*}
     &\frac{1}{p} \sum_{j=1}^p \bigg{(} a_{kj}(t)a_{lj}(t)\sigma'\left(x^\top w_j(t)\right)\sigma'\left(\tilde{x}^\top w_j(t)\right) \left(x^\top \Tilde{x}\right) \bigg{)} - \\
     &\cancel{\bigg{(} a_{kj}(0)a_{lj}(0)\sigma'\left(x^\top w_j(0)\right)\sigma'\left(\tilde{x}^\top w_j(0)\right) \left(x^\top \Tilde{x}\right) \bigg{)}} \\
     &\leq  \frac{1}{p} \sum_{j=1}^p |a_{kj}(t)| |a_{lj}(t)| |\sigma'\left(x^\top w_j(0)\right)| |\sigma'\left(\tilde{x}^\top w_j(0)\right)| ||x||_2 || \Tilde{x} ||_2 \\
     &\leq \frac{C^2}{p} \sum_{j=1}^p |a_{kj}(t)| |a_{lj}(t)|\\
     &\leq \frac{C^2}{p} \sum_{j=1}^p ||a_j(t)||_2^2
\end{align*}
because for all $j$, we have $|a_{kj}|, |a_{kj}|$ are dominated by $||a_j||_2$. From here, we have by \Cref{lem:lazy_training_a} that we can bound each term in the sum above by
\begin{align*}
    &\leq  \frac{C^2}{p} \sum_{j=1}^p \left(||w_j(0)||_2 F_{p,t} + G_{p,t}\right)^2 \\
    &= \frac{C^2}{p} \sum_{j=1}^p F_{p,t}^2 ||w_j(0)||_2^2 + 2 F_{p,t} G_{p,t} ||w_j(0)||_2 + G_{p,t}^2 \\
    & \overset{a.s.}{\to} 0
\end{align*}
as $p \to \infty$ by similar logic to the above. Together, these results combine to show that $|K_{\phi(t)}^p(x, \tilde{x})_{kl} - K_{\phi(0)}^p(x, \Tilde{x})_{kl}| \overset{a.s.}{\to} 0$ as $p \to \infty$. As $k, l$ were arbitrary $k, l \in 1,\dots, q$, we have $||K_{\phi(t)}^p(x, \tilde{x}) - K_{\phi(0)}^p(x, \Tilde{x})||_F  \overset{a.s.}{\to} 0$. This establishes pointwise convergence for some fixed $t \in (0,T]$. Uniform convergence over all of $\mathcal{X} \times \mathcal{X}$ and all $t \in (0,T]$ follows easily in this case. Firstly, the numbers $D_{p,t},E_{p,t},F_{p,t}$, and $G_{p,t}$ are monotonic in $t$, so we can bound uniformly for all $t \in (0,T]$ by taking $t =T$ in the expressions above. Secondly, in our work above, our bounds on the red and blue terms were independent of the choice of point $(x, \Tilde{x})$. More precisely, the supremum over $x, \Tilde{x}$ can be accounted for in the bounds easily by observing that $ \sup_{x, \tilde{x} \in \mathcal{X}, t \in (0,T]} ||K_{\phi(t)}^p(x, \tilde{x}) - K_{\phi(0)}^p(x, \Tilde{x})||_F$ can be bounded above by
\begin{align*}
     &\leq \sup_{x, \tilde{x} \in \mathcal{X}} \bigg{|}\bigg{|}\frac{1}{p} \sum_{j=1}^p \mathbf{1}_{k=l} \bigg{(}\sigma\left(x^\top w_j(t)\right) \sigma\left(\tilde{x}^\top w_j(t)\right) \bigg{)} \\ 
       &\hspace{1cm}- \bigg{(}\sigma\left(x^\top w_j(0)\right)  \sigma\left(\tilde{x}^\top w_j(0)\right) \bigg{)} \bigg{|}\bigg{|}\\
       &+ \sup_{x, \tilde{x} \in \mathcal{X}} \bigg{|}\bigg{|}\frac{1}{p} \sum_{j=1}^p \bigg{(} a_{kj}(t)a_{lj}(t)\sigma'\left(x^\top w_j(t)\right)\sigma'\left(\tilde{x}^\top w_j(t)\right) \left(x^\top \Tilde{x}\right) \bigg{)} \\
        &\hspace{1cm}-\bigg{(} a_{kj}(0)a_{lj}(0)\sigma'\left(x^\top w_j(0)\right)\sigma'\left(\tilde{x}^\top w_j(0)\right) \left(x^\top \Tilde{x}\right) \bigg{)}\bigg{|}\bigg{|} \\
    &\leq \sup_{x, \tilde{x} \in \mathcal{X}} \frac{1}{p} \sum_{j=1}^p\left(2C^2D_{p,T} + C^2D_{p,T}^2 \right) ||w_j(0)||_2^2 + \left(2CKD_{p,t} + 2C^2D_{p,T}E_{p,T} + 2C^2E_{p,T}\right) ||w_j(0)||_2 \\
    &\hspace{2cm}+ \left( 2CK E_{p,T} + C^2 E_{p,T}^2\right)   \\
    & \hspace{1cm} + \sup_{x, \tilde{x} \in \mathcal{X}}  \frac{C^2}{p} \sum_{j=1}^p F_{p,T}^2 ||w_j(0)||_2^2 + 2 F_{p,T} G_{p,T} ||w_j(0)||_2 + G_{p,T}^2\\
    &\overset{a.s.}{\to} 0
\end{align*}
by the same work as above.

\end{proof}

\section{Kernel Gradient Flow Analysis}
\label{appendix:kgf_analysis}

We rely on additional regularity conditions outlined below. We will consider the following three flows in our proof of \Cref{thm:main_theorem} (for some choice of $p$):
\begin{align}
    \dot{f}_t(x) &= -\mathbb{E}_{P(X)} K_{\phi(t)}^p(x, X)\ell'(X, f_t(X)) \\
    \dot{g}_t(x) &= -\mathbb{E}_{P(X)} K_\infty(x, X)\ell'(X, g_t(X)) \label{eq:temp_reference_kinf_flow} \\
    \dot{h}_t(x) &= -\mathbb{E}_{P(X)} K_{\phi(0)}^p(x, X)\ell'(X, h_t(X))
\end{align}
where $f_t$ is shorthand for $f(\cdot; \phi(t))$. The three flows above can be thought of as corresponding to \hyperref[equation:inpe_objective_phi]{$L_P$}, \hyperref[equation:inpe_objective]{$L_F$}, and a ``lazy'' variant, respectively. The flow of $h_t$ is ``lazy'' because it follows the dynamics of a fixed kernel, the kernel at initialization. The flow of $g_t$ also follows a fixed kernel, but the limiting NTK $K_\infty$ instead. The flow of $f_t$ is that obtained in practice, where the kernel $K_{\phi(t)}^p$ changes continuously as the parameters $\phi(t)$ evolve in time. The flow in $h_t$ is used to bound the differences between $f_t$ and $g_t$ in the proof of \Cref{thm:main_theorem}.  We now enumerate the regularity conditions. 

\begin{itemize}
    \item [(E1)] The functional $L_F(f)$ satisfies $\inf_f L_F(f) > -\infty$.
    \item [(E2)] The limiting NTK $K_\infty$ is positive definite (so that the RKHS $\mathcal{H}$ with kernel $K_\infty$ is well-defined).
    \item [(E3)] Under (E1) and (E2), the function $f^*$ minimizing \hyperref[equation:inpe_objective]{$L_F$} satisfies $||f^*||_{\mathcal{H}} < \infty$, so that $f^* \in \mathcal{H}$.
     \item [(E4)]  For any choice of $p$, we have for all $t, x$ that $\ell'(x; f_t(x)), \ell'(x; g_t(x))$, and $\ell'(x; h_t(x))$ are bounded by a constant $\Tilde{M}$. 
     \item [(E5)] The function $\ell$ is $\Tilde{L}$-smooth in its second argument, i.e. $||\ell'(x, \eta_1) - \ell'(x, \eta_2)|| \leq \Tilde{L}||\eta_1 - \eta_2||$.
\end{itemize}

We first prove \Cref{lemma:f*_limiting_kgf} from the manuscript.

\begin{proof}
   Let $f^* \in \mathrm{argmin} \ L_F(f)$, where $L_F(f)$ is the functional objective. Hereafter $L$ denotes $L_F$. Then $L(f^*) > -\infty$ by (E1). Then if $f_t$ evolves according to the kernel gradient flow (\ref{eq:temp_reference_kinf_flow}) above (i.e., the flow with kernel $K_\infty$), we have (from the chain rule for Fr\'echet derivatives) that
    \begin{align*}
       \dot{L}(f_t) &= \frac{\partial L}{\partial f_t} \circ \frac{\partial f_t}{\partial t}.
   \end{align*}
    By definition, $\frac{\partial f_t}{\partial t} = \dot{f}_t$. We also have $\frac{\partial L}{\partial f_t}: h \mapsto \mathbb{E}_{P(X)} \ell'(X, f_t(X))^\top h(X)$. Plugging this in yields
   \begin{align*}
       \dot{L}(f_t) &= \mathbb{E}_{X \sim P(X)} \ell'(X, f_t(X))^\top\left[-\mathbb{E}_{X' \sim P(X)} K_{\infty}(X, X') \ell'(X', f_t(X')) \right] \\
       &= -\mathbb{E}_{X, X' \sim P} \ell'(X, f_t(X))^\top K_{\infty}(X, X') \ell'(X', f_t(X')) \leq 0
   \end{align*}
   by the positiveness of the kernel $K_\infty$ (from (E2)). Now define $\Delta_t = \frac{1}{2}||f_t - f^*||_{\mathcal{H}}^2$, where $\mathcal{H}$ is the vector-valued reproducing kernel Hilbert space corresponding to the kernel $K_\infty$ (see \cite{Carmeli2006RKHSAdvanced} for a detailed review). We let $\langle \cdot, \cdot \rangle$ denote the inner product on the space $\mathcal{H}$. It follows that $\frac{\partial \Delta_t}{\partial f_t}: h \mapsto \langle f_t - f^*, h \rangle$. Then by the chain rule we have
   \begin{align*}
       - \dot{\Delta}_t &= -\langle f_t - f^*, \dot{f}_t \rangle \\
       &= -\langle f_t - f^*,  -\mathbb{E}_{P(X)} K_{\infty}(\cdot, X) \ell'(X, f_t(X)) \rangle \\
       &= \mathbb{E}_{P(X)} \ell'(X, f_t(X))^\top \left[f_t(X)-f^*(X) \right] \\
       &\geq \mathbb{E}_{P(X)} \ell(X, f_t(X)) - \ell(X, f^*(X)) \\
       &= L(f_t) -L(f^*).
   \end{align*}
To go from the second to the third line, we used the reproducing property of the vector-valued kernel, the definition of inner product, and the linearity of integration. More precisely, the reproducing property (cf. Eq. (2.2) of \cite{Carmeli2006RKHSAdvanced}) tells us for any functions $g, h$ and fixed $x$,
$$
\langle g, K_\infty(\cdot, x) h(x) \rangle = g(x)^\top h(x) 
$$
and so the third line results from the second by exchanging the integral and inner product. In the second-to-last line, we used the convexity of $\ell$ in its second argument (from \Cref{convex_thm} of the manuscript). Now consider the Lyapunov functional given by 
   \begin{equation}
       \mathcal{E}(t) = t \left[ L(f_t) - L(f^*) \right] + \Delta_t.
   \end{equation}
   Differentiating, we have
   \begin{align*}
       \dot{\mathcal{E}}(t) &= L(f_t)-L(f^*) + t \dot{L}(f_t) + \dot{\Delta}_t \leq 0
   \end{align*}
by the above work because i) $t \dot{L}(f_t) \leq 0$ and ii) $L(f_t)-L(f^*) + \dot{\Delta}_t \leq 0$, implying that $\mathcal{E}(t) \leq \mathcal{E}(0)$ for all $t$. Evaluating, thus
   \begin{align*}
       t \left[ L(f_t) - L(f^*) \right] + \Delta_t \leq \Delta_0 \\
        t \left[ L(f_t) - L(f^*) \right]  \leq \Delta_0 - \Delta_t \\ 
        t \left[ L(f_t) - L(f^*) \right]  \leq \Delta_0  \ \ \ \textrm{since $\Delta_t \geq 0$} \\
        \left[ L(f_t) - L(f^*) \right] \leq \frac{1}{t} \Delta_0. 
   \end{align*}
and so we have that there exists a sufficiently large $T$ such that $|L(f_T) - L(f^*)| \leq \epsilon$ as desired. 
\end{proof}

Using this result and our previous results, we are now able to prove \Cref{thm:main_theorem} from the manuscript.

\begin{proof}
    We will examine the three gradient flows
\begin{align}
    \dot{f}_t(x) &= -\mathbb{E}_{P(X)} K_{\phi(t)}^p(x, X)\ell'(X, f_t(X)) \\
    \dot{g}_t(x) &= -\mathbb{E}_{P(X)} K_\infty(x, X)\ell'(X, g_t(X)) \\
    \dot{h}_t(x) &= -\mathbb{E}_{P(X)} K_{\phi(0)}^p(x, X)\ell'(X, h_t(X))
\end{align}

and establish the result by the triangle inequality, i.e.
\begin{equation}
    |L(f_T) - L(f^*)| \leq |L(f_T) - L(g_T)| + |L(g_T) - L(f^*)|,
\end{equation}
where $L$ denotes the functional objective $L_F$. Let $\epsilon > 0$. The flow in $h_t$ will be used to help bound the first term, but we begin with the second term. By \Cref{lemma:f*_limiting_kgf}, pick $T > 0$ sufficiently large such that $|L(g_T) - L(f^*)| \leq \epsilon/2$. Fix this $T$. This provides a suitable bound on the second term.

Turning to the first term, by continuity of $L(f)$ in $f$, there exists $\delta > 0$ such that $y \in B(g_T, \delta) \implies |L(y) - L(g_T)| \leq \epsilon/2$. We will show that there exists $P$ sufficiently large such that $p > P$ implies $||f_T - g_T|| \leq \delta$ almost surely, yielding the desired bound on the first term of the decomposition above. Throughout, $||\cdot||$ denotes the $L^2$ norm of a function with respect to probability measure $P$ (i.e. the marginal distribution of $P(X)$ from our joint model $P(\Theta, X)$).

To show that there exists sufficiently large $P$ such that $||f_T - g_T|| \leq \delta$, we use another application of the triangle inequality
\begin{equation*}
    ||f_T - g_T|| \leq ||f_T - h_T|| + ||h_T - g_T||
\end{equation*}
and construct bounds on the two terms on the right hand side using \Cref{lem:convergence_of_kernel_di} and \Cref{lem:convergence_of_kernel_lt}. Observe first that by (C2)/(D2), at initialization we have almost surely that $f_0 = g_0 = h_0 = 0$. Also note that by continuity of $K_\infty$ (established in \Cref{lemma:pointwise_ntk_convergence}) on the compact domain $\mathcal{X} \times \mathcal{X}$ we have $\sup_{x, \Tilde{x}} ||K_\infty(x, \Tilde{x})||_2 < M$ for some $M$. Finally, note that by (E5) the function $\ell'(x, \eta)$ is Lipschitz in its second argument with constant $\Tilde{L}$. Below, we let $||\cdot||_2$ denote the 2-norm of a vector or matrix, depending on the argument, and $||\cdot||_F$ the Frobenius norm of a matrix. For functions, as stated $||\cdot||$ denotes the $L^2$ norm with respect to measure $P(X)$, i.e. $||f||^2 = \int f(X)^\top f(X) dP(X)$. From here, we have 
\begin{align*}
    ||g_T - h_T|| &\overset{a.s.}{=} || (g_T - g_0) - (h_T - h_0)|| \\
    &= \bigg{|}\bigg{|} \int_0^T \mathbb{E}_{P(X)} \left[K_\infty(\cdot, X)\ell'(X, g_t(X)) - K_{\phi(0)}^p(\cdot, X)\ell'(X, h_t(X)) \right]dt \bigg{|}\bigg{|} \\
    &\leq \int_0^T \mathbb{E}_{P(X)} ||K_\infty(\cdot, X)\ell'(X, g_t(X)) - K_{\phi(0)}^p(\cdot, X)\ell'(X, h_t(X)) || dt \\
    &= \int_0^T \mathbb{E}_{P(X)} ||K_\infty(\cdot, X)\ell'(X, g_t(X)) - K_\infty(\cdot, X)\ell'(X, h_t(X)) + \\ & \hspace{1cm} K_\infty(\cdot, X)\ell'(X, h_t(X)) - K_{\phi(0)}^p(\cdot, X)\ell'(X, h_t(X)) || dt \\
    &\leq \int_0^T \mathbb{E}_{P(X)} ||K_\infty(\cdot, X)\left[\ell'(X, g_t(X)) - \ell'(X, h_t(X))\right]|| + \\ & \hspace{1cm} ||K_\infty(\cdot, X)\ell'(X, h_t(X)) - K_{\phi(0)}^p(\cdot, X)\ell'(X, h_t(X)) || dt \numberthis \label{eqn:tempthm2}
\end{align*}
Now, we note the following facts. Firstly, for any kernel $K$ that is uniformly bounded (i.e. $||K(x,y)||_2 \leq M$ for any $x,y$), the $L^2$ norm of the function $||K(\cdot, X) v||$ for fixed $X, v$ can be bounded by
\begin{align*}
    ||K(\cdot, X) v||^2 &= \int v^\top K(Y, X)^\top K(Y,X) v dP(Y) \leq \int ||K(Y, X)||_2^2 ||v||_2^2 dP(Y) \leq M^2 ||v||_2^2 \\
    &\implies  ||K(\cdot, X) v|| \leq M ||v||_2
\end{align*}
 
Secondly, we have again for any fixed $v$ and $X$ that
\begin{align*}
    ||\left[K_\infty(\cdot, X)-K_{\phi(0)}^p(\cdot, X)\right]v||^2 &= \int v^\top \left[K_\infty(Y, X)-K_{\phi(0)}^p(Y, X)\right]^\top \left[K_\infty(Y, X)-K_{\phi(0)}^p(Y, X)\right]v dP(Y) \\
    &\leq \int ||K_\infty(Y, X)-K_{\phi(0)}^p(Y, X)||_2^2 ||v||_2^2 dP(Y) \\
    &\leq \left(\sup_{x,y} ||K_\infty(y, x)-K_{\phi(0)}^p(y, x)||_F\right)^2 ||v||_2^2 \\
    \implies ||\left[K_\infty(\cdot, X)-K_{\phi(0)}^p(\cdot, X)\right]v|| &\leq \sup_{x,y} ||K_\infty(y, x)-K_{\phi(0)}^p(y, x)||_F \cdot ||v||_2
\end{align*}
since the matrix (spectral) 2-norm is dominated by the Frobenius norm. Plugging these facts into \Cref{eqn:tempthm2} above, we have
\begin{align*}
    &\leq \int_0^T \mathbb{E}_{P(X)} M \cdot ||\ell'(X, g_t(X)) - \ell'(X, h_t(X)) ||_2 + \sup_{x,y} ||K_\infty(x, y)-K_{\phi(0)}^p(x, y)||_F  \cdot ||\ell'(X, h_t(X)) ||_2 dt \\
    &\leq \int_0^T \mathbb{E}_{P(X)} M \Tilde{L} ||g_t(X) - h_t(X) ||_2 + \Tilde{M} \sup_{x,y}|| K_\infty(x, y) - K_{\phi(0)}^p(x, y) || dt \ \textrm{by (E4), (E5)}\\
    &\leq \Tilde{M} T \sup_{x,y}|| K_\infty(x, y) - K_{\phi(0)}^p(x, y) ||_F + M\tilde{L} \int_0^T \mathbb{E}_{P(X)} \sqrt{||g_t(X) - h_t(X) ||_2^2} dt  \\
    &\leq \Tilde{M} T \sup_{x,y}|| K_\infty(x, y) - K_{\phi(0)}^p(x, y) ||_F + M\tilde{L} \int_0^T \sqrt{\mathbb{E}_{P(X)}||g_t(X) - h_t(X) ||_2^2} dt \ \textrm{by Jensen's inequality}\\
    &= \Tilde{M} T \sup_{x,y}|| K_\infty(x, y) - K_{\phi(0)}^p(x, y) ||_F + M\tilde{L} \int_0^T ||g_t - h_t|| dt \\
    &\implies ||g_T - h_T|| \leq \Tilde{M}T \sup_{x,y}|| K_\infty(x, y) - K_{\phi(0)}^p(x, y) ||_F \exp(M\Tilde{L}T) \ \textrm{by Gronwall's inequality (\Cref{theorem:gronwall})}.
\end{align*}

By \Cref{lem:convergence_of_kernel_di}, there thus exists $P_1$ such that for all $p > P_1$ we have $||g_T - h_T|| \leq \frac{\delta}{2}$ almost surely. We proceed nearly identically for the term $||h_T-f_T||$. We need only note that for sufficiently large $p$, say $p > P_2$, we can bound $K_{\phi(0)}^p$ uniformly (almost surely) by a constant $A > M$. To see this, observe that by \Cref{lem:convergence_of_kernel_di} we have that there exists almost surely a sufficiently large $P$ such that $\sup_{x,y}|| K_\infty(x, y) - K_{\phi(0)}^p(x, y) ||_F < A-M$ and so by triangle inequality we have 
\begin{align*}
    \sup_{x,y} ||K_{\phi(0)}^p||_F & \leq  \sup_{x,y} ||K_{\phi(0)}^p(x,y)-K_\infty(x,y)||_F + ||K_\infty(x,y) ||_F \\
    &\leq  \sup_{x,y} ||K_{\phi(0)}^p(x,y)-K_\infty(x,y)||_F +  \sup_{x,y} ||K_\infty(x,y) ||_F \\
    &\leq A-M + M = A
\end{align*}
Thereafter,   
\begin{align*}
    ||h_T - f_T|| &\overset{a.s.}{=} || (h_T - h_0) - (f_T - f_0)|| \\
    &= \bigg{|}\bigg{|} \int_0^T \mathbb{E}_{P(X)} \left[K_{\phi(0)}^p(\cdot, X)\ell'(X, h_t(X)) - K_{\phi(t)}^p(\cdot, X)\ell'(X, f_t(X)) \right]dt \bigg{|}\bigg{|} \\
    &\leq \int_0^T \mathbb{E}_{P(X)} ||K_{\phi(0)}^p(\cdot, X)\ell'(X, h_t(X)) - K_{\phi(t)}^p(\cdot, X)\ell'(X, f_t(X)) || dt \\
    &\leq \int_0^T \mathbb{E}_{P(X)} ||K_{\phi(0)}^p(\cdot, X)\ell'(X, h_t(X)) - K_{\phi(0)}^p(\cdot, X)\ell'(X, f_t(X))|| + \\
    &\hspace{1cm} ||K_{\phi(0)}^p(\cdot, X)\ell'(X, f_t(X)) - K_{\phi(t)}^p(\cdot, X)\ell'(X, f_t(X)) || dt \\
    &\leq \int_0^T \mathbb{E}_{P(X)} A \cdot ||\ell'(X, h_t(X)) - \ell'(X, f_t(X)) ||_2 + \\
    &\hspace{1cm} \sup_{x,y, t \in (0,T]}|| K_{\phi(0)}^p(x, y) - K_{\phi(t)}^p(x, y) ||_F \cdot ||\ell'(X, f_t(X)) || dt \\
    &\leq \Tilde{M}T \sup_{x,y, t \in (0,T]}|| K_{\phi(0)}^p(x, y) - K_{\phi(t)}^p(x, y) ||_F + A \Tilde{L} \int_0^T   \mathbb{E}_{P(X)} ||h_t(X) - f_t(X) ||_2  dt  
\end{align*}
and we can similarly switch from $\mathbb{E}_{P(X)} ||h_t(X)-f_t(X)||_2$ to the $L^2$ norm $||h_t - f_t||$ as above using Jensen's inequality, yielding
\begin{align*}
    &\leq \Tilde{M}T \sup_{x,y, t \in (0,T]}|| K_{\phi(0)}^p(x, y) - K_{\phi(t)}^p(x, y) ||_F + A \Tilde{L}\int_0^T  ||h_t - f_t ||  dt  \\
    &\implies ||h_T - f_T|| \leq \Tilde{M}T \sup_{x,y, t \in (0,T]}|| K_{\phi(0)}^p(x, y) - K_{\phi(t)}^p(x, y) ||_F\exp\left( A \Tilde{L} T \right).
\end{align*}

\sloppy Clearly, by the same logic as the above there exists $P_3$ such that $p > P_3$ implies \mbox{$\Tilde{M}T \sup_{x,y, t \in (0,T]}|| K_{\phi(0)}^p(x, y) - K_{\phi(0)}^p(x, y) || \exp(A\Tilde{L}T) \leq \delta/2$} by \Cref{lem:convergence_of_kernel_lt}. Then for all $p > \max(P_1, P_2, P_3)$, we have almost surely that $||h_T - f_T|| \leq \delta/2$. This completes the proof, as in this case we have by the triangle inequality that $||f_T - g_T|| \leq \delta$ and so $|L(f_T) - L(g_T)| \leq \epsilon/2$ by construction.

\end{proof}

\section{Experimental Details}
\label{appendix:experiments}

Our code is publicly available at \url{https://github.com/declanmcnamara/gcvi_neurips}. We used PyTorch \citep{Paszke2019Pytorch} for our experiments in accordance with its license, and NVIDIA GeForce RTX 2080 Ti GPUs.

\subsection{Toy Example}
\label{appendix:toy_example}
Recall the generative model for this problem, given by the following:
\begin{align*}
    \Theta &\sim \mathrm{Unif}[0,2\pi] \\
    Z &\sim \mathcal{N}(0, \sigma^2) \\
    X \mid (\Theta = \theta, Z=z) &\sim \delta\left(\left[\mathrm{cos}(\theta + z), \mathrm{sin}(\theta + z) \right]^\top \right).
\end{align*}

The variable $\sigma$ is a hyperparameter of the model that we take to be $\sigma = 0.5$. The model is constructed in such a way that $x \in \mathbb{S}^1$ satisfies assumptions (C1) and (D1), respectively. One thousand pairs of data points $\{\theta_i, x_i\}_{i=1}^{1000}$ were generated independently from the model above and fixed as the dataset for which the ground-truth latent parameter values are known.  

We constructed scaled, dense single hidden-layer ReLU networks of varying widths, with $2^j$ neurons for $j=6,\dots,12$ with the same architecture as in \Cref{appendix:limiting_ntk} and the initialization described in condition (C2). All networks were trained to minimize the expected forward KL objective; stochastic gradients were estimated using batches of 16 independent simulated $(\theta, x)$ pairs from the generative model, and SGD was performed using the Adam optimizer with a learning rate of $\rho = 0.0001$. We employ a learning rate scheduler that scales the learning rate as $O(1/I)$, where $I$ denotes the number of iterations. All models were fitted for 200,000 stochastic gradient steps, and the execution time is less than one hour. The natural parameter for the von Mises distribution is parameterized as $\eta = f(x; \phi)+0.0001$. This small perturbation must be added because $f(\cdot; \phi) = 0$ at initialization and because the value of $\eta = 0$ lies outside the natural parameter space for this variational family. 

For the linearized neural network models, all training settings were the same except for the architecture. For these models, we first constructed neural networks as above for each width to compute the Jacobian evaluated at the initial weights $J_\phi(x; \phi_0)$. Thereafter, the model in $\phi$ is fixed as
\begin{equation*}
    f(x; \phi) = f(x; \phi_0) + J_\phi(x; \phi_0)(\phi - \phi_0)
\end{equation*}
where $\phi, \phi_0$ are flattened vectors of parameters from the neural network architectures. Using this linearized model above, the parameter $\phi$ is fitted by SGD as above.

The plots in \Cref{fig:inpe_training_canonical} of the manuscript are constructed by evaluating the average negative log-likelihood on the dataset at each iteration, i.e. for the fixed $n=1000$ pairs of observations above, we evaluate the finite-sample loss for the expected forward KL divergence. Up to fixed constants, this quantity is given by
\begin{equation*}
    -\frac{1}{n}\sum_{i=1}^n \log q(\theta_i; f(x_i; \phi)) 
\end{equation*}
where $\phi$ is the current iterate of the parameters (either the neural network parameters or the flattened vector of parameters of the same size for the linearized model). The horizontal red line in \Cref{fig:inpe_training_canonical} is set at the value $-\frac{1}{n}\sum_{i=1}^n \log p(\theta_i \mid x_i)$, where $p$ denotes the exact posterior distribution (computed using numerical integration over a fine grid of evaluation points).

\subsection{Label Switching in Amortized Clustering}

Recall the amortized clustering model from the manuscript, given by
\begin{align*}
    S &\sim \mathcal{N}(0, 100^2) \\
    Z \mid (S=s) &\sim \mathcal{N}\left(\begin{bmatrix}
        \mu_1 + s \\
        \vdots \\
        \mu_d + s \\
    \end{bmatrix}, \sigma^2I_d\right) \\
    X_i \mid (Z=z) &\sim \sum_{j=1}^d p_j \mathcal{N}(z_j, \tau^2).
\end{align*}

We take $\sigma = 1, \tau = 0.1$ and artificially fix $s=100$ as our realization from the prior on $S$. Thereafter, we generate $N=1000$ independent, identically distributed samples $\{x_i\}_{i=1}^{1000}$ from the model above, conditional on observing $S=100$. The other non-random hyperparameters are the cluster centers $\mu = [-20, -10, 0, 10, 20]^\top$, as well as the mixture weights for the mixture of Gaussians, which are uniform, i.e. $p_j = \frac{1}{d}$ for all $j$. These draws constitute the ``data'' for this problem for which inference on $S$ is desired conditionally on the draws $x_1, \dots, x_{1000}$; separately, we seek to infer the random variable $Z$ conditional on the data as well. Although $Z$ and $S$ are closely related, we impose a mean-field variational family $q(S, Z \mid X_1 = x_1, \dots, X_N = x_n) = q(S \mid X_1 = x_1, \dots, X_N = x_N) q(Z \mid X_1 = x_1, \dots, X_N = x_N)$ for ease, with both components taken to be isotropic Gaussian distributions.

We use two different parameterizations for each of $q(S)$ and $q(Z)$. The mean parameterization fixes the variance at unity and aims to learn only the mean -- in this case, the natural parameter of this family of distributions is simply the mean, so the network parameterizes the variational means $\mu_S, \mu_Z$ for each of these distributions. We also use a natural parameterization with an unknown mean and variance, and in this case, the natural parameterizations $\eta_S, \eta_Z$ are output by the networks, which we denote $f(\cdot; \phi)$ and $f(\cdot; \psi)$ for $S$ and $Z$, respectively. Both amortized encoder networks $f(\cdot, \phi), f(\cdot; \psi)$ take the entire set of points $\{x_1, \dots, x_{1000}\}$ as input. The architecture should thus be permutation-invariant, as the order of these points does not matter for inference on the latent quantities of interest. We accomplish this by using a set transformer architecture for the networks, which achieves permutation invariance using self-attention blocks \citep{Lee2019SetTransformer}.

Parameters are fitted to minimize $L_P$ using SGD with stochastic gradients estimated using simulated draws of the same size as the observed data. We use a learning rate of 0.001 for fitting both $q(Z)$ and $q(S)$, with schedules that decrease these as training progresses across 50,000 total gradient steps. We perform one hundred replicates of this experiment across different random seeds, each time generating a new dataset and refitting the networks. Accordingly, there is a high computational cost: using 10 parallel processes, the experiment takes about 8 hours to run. We compare $L_P$-based minimization to fitting both networks to maximize an evidence lower bound (ELBO), using the IWBO with $K=10$ importance samples as the objective, but otherwise keeping all other hyperparameters the same. \Cref{fig:amort_clust_shift} in the manuscript plots the mode of $q(S;  f(x_1, \dots, x_N; \phi))$ across one hundred replicates of the experiment with different random seeds. The estimate should be approximately $100$ to match the ground truth, and both ELBO-based and $L_P$-based training perform well.

For inference on $Z$, we extract the mode $\hat{Z} \in \mathbb{R}^5$ as the point estimate and compute the $\ell_1$ distance to the true draw $Z=z$ for our dataset (which is known a priori because the data were generated synthetically). ELBO-based training illustrates label-switching behavior, converging to a vector which is a permutation of the entries of the true draw $z$, resulting in a large $\ell_1$ distance. $L_P$-based fitting does not suffer from this pathological behavior and instead converges rapidly to the global optimum.  

\subsection{Rotated MNIST}

We use the MNIST dataset, freely available from \texttt{torchvision} under the BSD-3 License\footnote{https://github.com/UiPath/torchvision/blob/master/LICENSE} to fit a GAN generative model of MNIST digits. The simplistic GAN uses dense networks for both the discriminator and generator and is fit to the data with binary cross-entropy loss. Training was stopped when the generator produced realistic images, sufficient for our problem (see \Cref{fig:rotated_mnist_example_digits}). 

The variational distribution $q(\Theta; f(x_1, \dots, x_N; \phi))$ on the shared rotation angle is taken to be von Mises for minimization of $L_P$. We parameterize $q$ using its natural parameterization as in \Cref{appendix:toy_example}. The encoder for $\Theta$ uses three Conv2D layers to increase the number of channels to 64 (with a kernel size of 3 and a stride length of 2), followed by a flattening layer and a three-layer dense network with LeakyReLU activations. Permutation invariance is imposed by a naive averaging step across all observations in the dataset $\{x_i\}_{i=1}^N$ that are provided to the network. The neural network architecture for this example thus differs significantly from the simple two-layer ReLU network, yet still demonstrates the same global convergence behavior.

We compare to a semi-amortized approach for maximizing the IWBO for this example. To compute the IWBO, the practitioner must posit a variational distribution on $z_i$ for each image $x_i$. This is because only the full joint likelihood $p(\theta, \{z_i\}_{i=1}^N, \{x_i\}_{i=1}^N)$ is readily available. Accordingly, for this method we construct a second network $f(\cdot; \psi)$ used to construct a variational distribution $q(Z_i; f(X_i; \psi))$ on $Z_i$ for a given $X_i$. This network is amortized over all images, yielding $q(Z_i; f(x_i; \psi))$ for any image $x_i$ in the dataset $\{x_i\}_{i=1}^N$. We parameterize these distributions on the latent representation as multivariate (isotropic) Gaussian with mean and log-scale parameters for each dimension the outputs of an encoder network. The architecture for this encoder consists of three \textrm{Conv2D} layers, each with kernel size 3 and stride of length 2. Across the three layers we increase the number of channels from a single channel (the input) up to 8 channels. After the convolutional layers, we perform a flattening layer followed a 2-layer dense network with \textrm{ReLU} activations. As the latent dimension for the data is known to be 64, the outputs of the encoder are 128-dimensional to parameterize the mean and log-scale across each dimension.

As there is only one rotation angle of interest we directly maximize the joint likelihood $p(\theta, \{z_i\}_{i=1}^N, \{x_i\}_{i=1}^N)$ in $\theta$ by targeting the IWBO. Conceptually, this approach is the same as placing a point mass variational family on $\Theta$, i.e., we simply initialize $\theta_0$ prior to training and update its value directly to maximize the IWBO. This approach thus fits the parameters $\{\psi, \theta_0\}$ of $q(Z_i; f(x_i; \psi))$ and $\delta_{\theta_0}$, the distributions on the latent representations and the angle, respectively. These parameters are optimized using the Adam optimizer with initial learning rate 0.01 and square summable learning rate schedule. The angle parameter value is initialized at a variety of angles in different trials to produce \Cref{fig:rotated_mnist_iwbo}.

The advantageous marginalization properties of $L_P$-based fitting allow it to perform inference on $\theta$ \textit{without performing inference on the latent representations $Z_i$}; thus, the distributions $q(Z_i; f(x_i; \psi))$ need not be fitted when using this approach. We use the Adam optimizer for all parameter updates with initial learning rate of 0.0001 (and a square summable learning rate schedule) for $L_P$-based training, which only fits the parameters $\phi$ of $q(\Theta;  f(x_1, \dots, x_N; \phi))$.

Minimization of $L_P$ trains for 100,000 gradient steps, although convergence is rapid, and so trajectories are truncated to produce \Cref{fig:rotated_mnist_iwbo}. Runs were performed in parallel across multiple processes for both methods and completed in less than one hour.

\subsection{Local Optima vs. Global Optima}

For this experiment, the setting above is modified slightly to remove the nuisance latent variables $\{Z_i\}$ and put ELBO-based optimization on more equal footing with expected forward KL minimization. The data are generated as follows: for $i = 1, \dots, 50$, we fix latents $z_i \overset{iid}{\sim} p(z)$, and $\theta_{\textrm{true}} = 260$ degrees is set artificially. The digits $x_i^0$ are computed via the generative adversarial network and thereafter fixed for the rest of the problem (the superscript denotes a rotation angle of zero degrees). With the unrotated digits fixed, the only random variables in the model are $\Theta$ and the rotated versions of the digits.

Conditioning on the observed data $\{x_i\}_{i=1}^{50}$, $\Theta$ is thus the only latent to be inferred. In this setting, the ELBO can compute the full likelihood function $p(\theta, \{x_i\}_{i=1}^{50})$ for any value of $\Theta = \theta$ (the same $\textrm{Uniform}(0, 2\pi)$ is placed on the angle). The simulation-based approach of expected forward KL minimization can proceed as usual: we draw $\Theta \sim \textrm{Uniform}(0, 2\pi)$ and simulated digits are obtained by rotating the digits according to the drawn angle and adding noise. 

The variational distribution, as stated in the text, is Gaussian with fixed variance instead of the von Mises distribution used in the local optima section. This is necessary for a one-to-one comparison between the two methods, as the von Mises distribution is not reparameterizable for ELBO-based training. The mean of the Gaussian is parameterized via the same neural network architecture above for fitting by both the ELBO and the expected forward KL. We use the Adam optimizer with learning rate 0.001 for 10,000 gradient steps. 

For computing some of the metrics in \Cref{fig:rotated_mnist_quality}, we rely on approximations as these quantities are difficult to compute exactly. Letting $x_{\textrm{true}}$ denote the observed data $\{x_i\}_{i=1}^{50}$, the forward KL divergence $\textrm{KL}(p(\theta \mid x_{\textrm{true}}) \mid \mid q(\theta \mid x_{\textrm{true}}))$ is estimated using self-normalized importance sampling with $K=1,000$ samples drawn from the prior as a proposal. The reverse KL divergence $\textrm{KL}(q(\theta \mid x_{\textrm{true}})) \mid \mid p(\theta \mid x_{\textrm{true}})$ is computed as the difference $\log p(x) - \textrm{ELBO}(q)$; the log-evidence is approximated via the importance weighted bound (IWBO) with $K=1,000$. 

\clearpage

\end{document}